\documentclass[twoside,11pt]{article}

\usepackage[preprint]{jmlr2e}
\usepackage{lastpage}

\usepackage{amsmath,amssymb,mathtools}
\usepackage{bm}
\usepackage{graphicx}
\usepackage{subcaption}
\usepackage{booktabs}
\usepackage{multirow}
\usepackage{placeins}
\usepackage{algorithm}
\usepackage{algorithmic}
\usepackage{xcolor}
\usepackage{tikz}
\usetikzlibrary{arrows.meta, positioning, calc, shadows}
\usepackage{cleveref}

\hypersetup{hidelinks}

\DeclareMathOperator{\Cov}{Cov}
\DeclareMathOperator{\tr}{tr}
\DeclareMathOperator{\mmse}{mmse}

\newcommand{\R}{\mathbb{R}}
\newcommand{\E}{\mathbb{E}}
\newcommand{\PP}{\mathbb{P}}
\newcommand{\Nc}{\mathcal{N}}
\newcommand{\Cc}{\mathcal{C}}
\newcommand{\dd}{\mathrm{d}}

\newcommand{\Mc}{\mathcal{M}}
\newcommand{\KL}{\mathrm{KL}}
\newcommand{\Law}{\operatorname{Law}}

\newcommand{\mainlabel}[1]{{\sffamily\small\bfseries\color{black!80} #1}}
\newcommand{\sublabel}[1]{{\sffamily\footnotesize\color{gray!80} #1}}

\numberwithin{equation}{section}

\newtheorem{assumption}{Assumption}[section]

\jmlrheading{1}{2026}{1-\pageref{LastPage}}{4/26}{}{0000}
{Ahmad Aghapour, Erhan Bayraktar, and Asaf Cohen}

\ShortHeadings{Conditional Diffusion Under Linear Constraints}
{Aghapour, Bayraktar, and Cohen}

\firstpageno{1}

\begin{document}

\title{Conditional Diffusion Under Linear Constraints: Langevin Mixing and Information-Theoretic Guarantees}

\author{
\name Ahmad Aghapour \email aghapour@umich.edu \\
\addr Department of Mathematics \\
University of Michigan \\
Ann Arbor, MI 48109, USA
\AND
\name Erhan Bayraktar \email erhan@umich.edu \\
\addr Department of Mathematics \\
University of Michigan \\
Ann Arbor, MI 48109, USA
\AND
\name Asaf Cohen \email asafc@umich.edu \\
\addr Department of Mathematics \\
University of Michigan \\
Ann Arbor, MI 48109, USA
}

\editor{}
\maketitle

\begin{abstract}%
We study zero-shot conditional sampling with pretrained diffusion models for linear inverse problems, including inpainting and super-resolution. In these problems, the observation determines only part of the unknown signal. The remaining degrees of freedom must be sampled according to the correct conditional data distribution. Existing projection-based samplers enforce measurement consistency by correcting the observed component during reverse diffusion. However, measurement consistency alone does not determine how probability mass should be distributed along the feasible set, and this can lead to biased conditional samples.

We analyze this issue through a normal--tangent decomposition of the score function. For Gaussian noising, the observed-direction score is exactly determined by the measurement; only the tangent conditional score is unknown. We prove that the error from replacing this score by the unconditional tangent score is upper bounded by a dimension-free conditional mutual information between observed and unobserved components. This gives an information-theoretic decomposition into initialization and pathwise score-mismatch errors. Motivated by the theory, we propose a projected-Langevin initialization followed by guided reverse denoising, which outperforms a strong projection-based baseline in inpainting and super-resolution experiments.
\end{abstract}
\begin{keywords}
diffusion models, inverse problems, Langevin dynamics, information-theoretic bounds, conditional sampling 
\end{keywords}

\section{Introduction}

Diffusion models have become a standard tool for high-dimensional generative
modeling. Given samples from a data distribution, a diffusion model learns the
score of progressively noised versions of the data and then generates new
samples by simulating a reverse-time denoising process
\citep{songermon2019score, ho2020ddpm, song2021scorebased, song2020ddim}.
In many applications, however, generation is not unconditional. In image
restoration, for example, one observes a corrupted image and wants to sample
clean images that are both realistic under a pretrained image prior and
consistent with the observation.

This paper studies such conditional sampling problems for noiseless linear
observations. Let \(Z\in\R^d\) denote the clean signal and suppose that
\[
    y=AZ,\qquad A\in\R^{m\times d}.
\]
The goal is to sample from the conditional law
\[
    \Law(Z\mid AZ=y).
\]
When \(A\) has full row rank, the constraint \(AZ=y\) defines an affine set.
Writing
\[
P_{\perp}:=A^\top(AA^\top)^{-1}A,
\qquad
P_{\parallel}:=I-P_{\perp},
\]
the projection \(P_\perp\) extracts the component of the signal determined by
the measurements, while \(P_\parallel\) extracts the component in the null space
of \(A\). We refer to these as the normal and tangent components, respectively.
Thus the observation fixes the normal component, whereas the tangent component
contains the remaining degrees of freedom. In imaging problems, this formulation
covers inpainting, super-resolution, deblurring, and other linear inverse
problems.

A central difficulty is that measurement consistency and conditional sampling
are not the same task. Measurement consistency only requires producing a sample
\(\hat z\) satisfying \(A\hat z=y\). Conditional sampling requires more: among
all feasible signals satisfying the measurement, samples should be distributed
according to the true conditional law of the data. In the geometric language
above, the normal component enforces feasibility, while the tangent component
determines how probability mass is distributed along the feasible affine set.

Many zero-shot inverse-problem samplers based on pretrained diffusion models
enforce the measurement by repeatedly correcting or projecting the sample in the
observed directions. We call such methods projection-based because they use the
known linear operator \(A\) to replace, project, or analytically correct the
normal component during reverse diffusion, while leaving the unobserved
directions largely governed by the pretrained unconditional model. Methods such as denoising diffusion restoration models (DDRM) and the denoising
diffusion null-space model (DDNM) are representative examples for linear inverse
problems \citep{kawar2022ddrm, wang2022ddnm}. These methods can be very effective at
maintaining measurement consistency. However, correcting the normal component
does not by itself determine the correct distribution in the tangent directions.
As a result, a sample may satisfy \(A\hat z=y\) while still being biased along
the feasible manifold.

The goal of this paper is to understand and reduce this tangent-space bias. We
work in the zero-shot setting: the diffusion model is pretrained
unconditionally, is not fine-tuned for the observation, and conditioning is
imposed only at inference time. This setting is practically important because it
allows a single generative prior to be reused across many inverse problems. It
is also theoretically revealing, because the only available learned object is
the unconditional score. The question is therefore: when can an unconditional
score be used to approximate the conditional dynamics, and where does the error
enter?

Our starting point is a normal--tangent decomposition of the conditional score.
Under Gaussian noising, the normal component of the conditional score is
available in closed form from the observation. In the variance-exploding
normalization, if \(B=P_\perp Z=b\), then
\[
    P_\perp s_t^{*,b}(x)=\frac{1}{t}P_\perp(b-x).
\]
Thus the normal score is not the obstacle. The only unknown part is the tangent
conditional score \(P_\parallel s_t^{*,b}(x)\). Projection-based zero-shot
samplers can therefore be viewed as replacing this unknown tangent conditional
score by the pretrained unconditional tangent score \(P_\parallel s_t(x)\). This
view isolates the precise source of bias: the approximation is made along the
feasible directions, not in the measured directions.

Motivated by this decomposition, we propose a two-stage conditional sampler.
Rather than starting reverse diffusion from the highest-noise distribution, we
start from an intermediate noise level. At this level, the noisy normal
component can be sampled exactly under the constraint. We then run projected
underdamped Langevin dynamics on the corresponding affine slice, using the
projected unconditional score to mix only in the tangent directions. This
produces an initialization that is already consistent with the noisy constraint
and better adapted to the feasible slice. From this initialization, we perform
guided reverse denoising using the exact normal correction and the pretrained
unconditional score in the tangent directions.

The theoretical analysis follows the same decomposition. We separate the total
sampling error into two terms. The first is an initialization error at the
intermediate noise level, caused by approximating the true conditional marginal
on the affine slice. The second is a pathwise error accumulated during reverse
denoising, caused by replacing the true conditional tangent score with the
unconditional tangent score. Our main pathwise result shows that this second
error is controlled by a conditional mutual information between tangent and
normal components. Informally, zero-shot tangent guidance is accurate when, at
the chosen noise level, the remaining statistical dependence between the
unobserved tangent component and the observed normal component is small.

We further combine the pathwise bound with an initialization analysis. Under a
latent Gaussian-mixture model, we obtain a terminal Kullback--Leibler (KL) bound consisting of an
initialization term and a mutual-information pathwise term. Under an additional
separation condition on the latent normal codebook, both terms become
exponentially small in the separation-to-noise ratio. These results identify
regimes in which inference-time conditioning with a fixed unconditional score
can be accurate, and they also explain why tangent-space ambiguity is the
central obstruction.

We evaluate the resulting sampler on standard linear imaging inverse problems
using pretrained diffusion backbones and matched compute budgets. On inpainting
and \(8\times\) super-resolution across CelebA-HQ, LSUN Church, and ImageNet,
the proposed method improves Learned Perceptual Image Patch Similarity (LPIPS)
and Fréchet Inception Distance (FID) over a strong projection-based zero-shot
baseline. The gains are largest in settings with greater unresolved
tangent ambiguity, such as ImageNet and high-factor super-resolution, consistent
with the role of tangent mixing in the analysis.

\subsection{Related Work}

Our work builds on score-based generative modeling and diffusion models
\citep{songermon2019score, ho2020ddpm, song2021scorebased, song2020ddim}.
Conditional generation can be obtained by training conditional models, but many
inverse problems require reusing a fixed unconditional model. Classifier
guidance and classifier-free guidance modify the reverse process using
additional conditional information \citep{dhariwal2021guided,
ho2022classifierfree}. Image editing and restoration methods such as SDEdit,
RePaint, and ILVR impose conditioning through noising, denoising, and resampling
procedures \citep{meng2022sdedit, lugmayr2022repaint, choi2021ilvr}.

Diffusion priors have also been widely used for inverse problems. Predictor--
corrector samplers and likelihood-gradient corrections incorporate observations
during sampling \citep{song2021scorebased, song2022medinv}. For linear inverse
problems, DDRM and DDNM exploit the measurement operator to impose analytic
updates or null-space corrections during reverse diffusion
\citep{kawar2022ddrm, wang2022ddnm}. Diffusion posterior sampling (DPS) extends posterior sampling ideas to
more general noisy and nonlinear settings through likelihood-gradient guidance
\citep{chung2023dps}. These methods demonstrate the strength of pretrained
diffusion priors for restoration. Our focus is different: we analyze the
specific tangent-score approximation that remains after the normal measurement
component has been corrected.

Other approaches construct conditional samplers by changing the model or the
underlying path measure. Reward-based fine-tuning and reinforcement-learning
methods adapt a pretrained generator using task-specific feedback
\citep{fan2023dpok,zhao2025ctrl,uehara2024understanding}. Doob's
\(h\)-transform and diffusion-bridge methods provide principled path-space
formulations of conditioning
\citep{didi2023framework,guo2026hardconstraint,zhou2024ddbm}. These methods can
be exact or asymptotically exact under suitable assumptions, but they typically
require learning an additional object, solving a control problem, or fine-tuning
the model. By contrast, we keep the unconditional score fixed and study what can
be achieved by inference-time conditioning alone.

The Langevin initialization used here is related to constrained sampling.
Projected Langevin methods sample on constrained domains or manifolds
\citep{lamperski2021projected}. Underdamped Langevin dynamics can improve
mixing relative to overdamped dynamics in some settings
\citep{cheng2018underdamped}, and BAOAB discretizations are known for stable
and low-bias behavior in the position marginal
\citep{leimkuhler2013rational}. In affine inverse problems, these methods are
natural because, once the normal component is fixed, the remaining sampling
problem lives in the tangent space.

Recent theory has begun to analyze conditional and zero-shot diffusion samplers,
including asymptotically exact conditional samplers \citep{wu2023tds},
filtering-based posterior samplers for linear inverse problems
\citep{dou2024diffusion}, and score-mismatch analyses for zero-shot guidance
\citep{liang2025theory}. Our contribution is complementary: we isolate the
normal--tangent structure of affine conditioning and bound the error caused by
using the unconditional tangent score in place of the conditional tangent score.

\subsection{Contributions}

The main contributions of this paper are as follows.

First, we derive a normal--tangent decomposition of affine conditional
diffusion. For Gaussian noising, the normal component of the conditional score
is available exactly from the measurement and the noising process, while the
tangent component is the only part not supplied by a pretrained unconditional
score model. This decomposition motivates the surrogate dynamics in
Section~\ref{sec:methodology}.

Second, we propose a zero-shot conditional sampler that combines exact normal
correction, projected underdamped Langevin mixing on an affine slice, and guided
reverse denoising. The Langevin phase is designed to initialize the sampler at
an intermediate noise level with improved mixing in the unobserved tangent
directions before the final denoising stage.

Third, we prove a pathwise error bound for the guided reverse dynamics. In
Theorem~\ref{thm:avg-path-kl}, the KL divergence between the ideal conditional
path measure and the surrogate path measure is controlled by a conditional
mutual information between the tangent and normal components. This gives an
information-theoretic criterion for when replacing the conditional tangent score
by the unconditional tangent score is accurate.

Fourth, we combine the pathwise estimate with an initialization analysis to
obtain terminal KL guarantees. In
Theorem~\ref{thm:total_KL_shannon_small}, the terminal error separates into an
initialization term and the mutual-information pathwise term. The resulting
bound has no explicit dependence on the ambient dimension; its size is governed
by the sensitivity of tangent conditionals and by the residual statistical
dependence between observed and unobserved components. Under an additional
separation condition on the latent normal codebook,
Theorem~\ref{thm:total_error_sep_renyi_single} further gives an exponential
small-error regime, where both the initialization and pathwise contributions
become exponentially small  when component of gaussian mixture model is separated.

Finally, we evaluate the proposed sampler on linear imaging inverse problems.
The experiments show that the algorithm outperforms previous zero-shot
diffusion methods on inpainting and \(8\times\) super-resolution under matched
network-evaluation budgets.
\subsection{Organization}

Section~\ref{sec:methodology} formulates affine conditional diffusion and
derives the normal--tangent decomposition of the conditional reverse dynamics.
Section~\ref{sec:algorithm} presents the Langevin--diffusion sampler.
Section~\ref{sec:experiments} reports experiments on inpainting and
super-resolution. Section~\ref{sec:theory} gives the KL bounds and total error
decomposition. Proofs and additional derivations are deferred to the appendix.
\section{Methodology}\label{sec:methodology}

We adopt the variance-exploding (VE)  diffusion framework of \cite{song2021scorebased}. Let the clean data be
\(Z\in\R^d\) with prior law \(Z\sim p_0\). For diffusion time \(t\in[0,T]\), the forward process is
\begin{equation}\label{eq:forward_ve}
X_t := Z + W_t,
\end{equation}
where \(\{W_t\}_{t\ge 0}\) is standard Brownian motion in \(\R^d\) independent of \(Z\). Hence
\begin{equation}\label{eq:forward_kernel}
X_t \mid Z \sim \Nc(Z,\,tI_d),
\end{equation}
and we write \(p_t\) for the marginal density of \(X_t\), with score \(s_t(x):=\nabla_x\log p_t(x)\).

The time-reversal of \eqref{eq:forward_ve} yields the reverse-time stochastic
differential equation (SDE) that generates \emph{unconditional} samples
from \(p_0\). Using the reverse-time parameter \(\tau:=T-t\in[0,T]\), this SDE can be written as
\begin{equation}\label{eq:reverse_ve}
\dd Y_\tau
=
s_{T-\tau}(Y_\tau)\,\dd\tau
+
\dd \bar W_\tau,
\qquad
Y_0\sim p_T,
\end{equation}
where \(\bar W_\tau\) is a Brownian motion in reverse time. In practice, \(s_t(x)\) is approximated by a
neural network trained via score matching.

In this work we do not seek unconditional samples. Instead, we aim to sample from a conditional distribution under
a linear constraint. Let \(A\in\R^{m\times d}\) have full row rank and consider the affine constraint \(AZ=y\).
It is convenient to express the constraint through orthogonal projection onto the row space of \(A\). Define
\[
P_{\perp}:=A^\top(AA^\top)^{-1}A,
\qquad
P_{\parallel}:=I-P_{\perp},
\]
so that \(P_\perp\) projects onto \(\mathrm{range}(A^\top)\) (normal space) and \(P_\parallel\) onto \(\ker(A)\)
(tangent space). We encode the observation via the \emph{level}
\[
B := P_{\perp}Z,
\qquad
b := A^\top(AA^\top)^{-1}y,
\]
so that \(AZ=y\) is equivalent to \(B=b\), i.e., \(Z\) lies on the affine set
\[
\Mc(b):=\{x\in\R^d:\;P_\perp x=b\}.
\]
(When \(Z\) is supported on a countable codebook \(\Cc\subset\R^d\), the level \(B\) is supported on
\(P_\perp\Cc\); the development below does not otherwise rely on discreteness.)

Fix \(b\) and let \(p_t^{*,b}\) denote the conditional density of \(X_t\) under \(\Law(\,\cdot\,\mid P_{\perp}Z =b)\), with
conditional score \(s_t^{*,b}(x):=\nabla_x\log p_t^{*,b}(x)\).
If \(s_t^{*,b}\) were available, then the correct reverse-time dynamics that sample from
\(\Law(Z\mid P_{\perp}Z =b)\) would be
\begin{equation}\label{eq:cond_reverse_true_base}
\dd Y_\tau^{*,b}
=
s_{T-\tau}^{*,b}(Y_\tau^{*,b})\,\dd\tau
+
\dd \bar W_\tau,
\qquad
Y_0^{*,b}\sim \Law(X_T\mid P_{\perp}Z =b).
\end{equation}
The main obstacle is that \(s_t^{*,b}\) is not directly learned by standard unconditional score training.

For Gaussian perturbations, Tweedie’s formula expresses the conditional expectation of 
$Z$ given $X_t=x$
as
\begin{equation}\label{eq:tweedie}
\E[Z\mid X_t=x] = x + t\,s_t(x).
\end{equation}
A key observation is that, under the affine conditioning \(B=b\), Tweedie's identity immediately yields a closed-form expression for the \emph{normal} component of the conditional score: applying \(P_\perp\) to \eqref{eq:tweedie} under \(\Law(\,\cdot\,\mid P_{\perp}Z =b)\) gives
\[
P_\perp \E[Z\mid X_t=x,\,B=b] \;=\; P_\perp\big(x + t\,s_t^{*,b}(x)\big).
\]
Since \(P_\perp Z=b\) holds almost surely under \(B=b\), the left-hand side equals \(b\), and therefore
\begin{equation}\label{eq:normal_score_identity}
P_{\perp}s_t^{*,b}(x)=\frac{1}{t}\,P_{\perp}(b-x).
\end{equation}
Thus only the tangent component \(P_\parallel s_t^{*,b}\) remains unknown. Using
\(s_t^{*,b}=P_\parallel s_t^{*,b}+P_\perp s_t^{*,b}\) and substituting \eqref{eq:normal_score_identity} into
\eqref{eq:cond_reverse_true_base} yields the equivalent decomposition
\begin{equation}\label{eq:cond_reverse_true_rewrite}
\dd Y_\tau^{*,b}
=
\Big(
P_{\parallel}s_{T-\tau}^{*,b}(Y_\tau^{*,b})
+
\frac{1}{T-\tau}\,P_{\perp}\big(b-Y_\tau^{*,b}\big)
\Big)\dd \tau
+
\dd \bar W_\tau.
\end{equation}
This form makes the conditioning mechanism explicit: the process is driven toward the affine set \(\Mc(b)\) by the
{\it normal drift}, while the remaining {\it tangent drift} depends on the intractable conditional score.

To obtain a practical sampler using only an unconditional score model, we keep the \emph{exact} normal drift and
approximate the unknown tangent term by the unconditional tangent score \(P_\parallel s_t\). This yields the
{\it surrogate constrained reverse SDE} 

\begin{equation}\label{eq:cond_reverse_surrogate}
\dd \hat Y_\tau^{\,b}
=
\Big(
P_{\parallel}s_{T-\tau}(\hat Y_\tau^{\,b})
+
\frac{1}{T-\tau}\,P_{\perp}\big(b-\hat Y_\tau^{\,b}\big)
\Big)\dd \tau
+
\dd \bar W_\tau,
\qquad \tau\in[0,T-t_0).
\end{equation}

 It is constrained in the sense that its normal drift explicitly forces \(P_\perp \hat Y_\tau^{\,b}\) toward the prescribed level \(b\), thereby steering the trajectory toward the affine manifold \(\Mc(b)=\{x:\,P_\perp x=b\}\), while only the tangent component evolves according to the learned (unconditional) score.
Equations \eqref{eq:cond_reverse_true_rewrite} and \eqref{eq:cond_reverse_surrogate} share the same (exact) normal
component and differ only in the tangent score: \(P_\parallel s^{*,b}\) versus \(P_\parallel s\).
In implementations, the factor \(1/(T-\tau)=1/t\) is handled by stopping the integration at a small
\(t_{0}>0\) (equivalently \(\tau_{\max}=T-t_{0}\)) and applying a final denoising step.

\begin{remark}
   The theoretical development is stated in the VE normalization
\(X_t=Z+\sqrt t\,\xi\), because this makes the normal--tangent decomposition
transparent. The same decomposition holds for the variance-preserving (VP) denoising
diffusion probabilistic model (DDPM) forward marginals used
in our experiments. Indeed, for
\[
X_t=\alpha_t Z+\sigma_t\xi,\qquad \xi\sim\mathcal N(0,I_d),
\]
conditioning on \(P_\perp Z=b\) gives
\[
P_\perp X_t\mid P_\perp Z=b
\sim
\mathcal N(\alpha_t b,\sigma_t^2P_\perp).
\]
Hence the conditional score satisfies
\[
P_\perp s_t^{*,b}(x)
=
\frac{\alpha_t b-P_\perp x}{\sigma_t^2},
\]
while the only unknown term remains the tangent component
\(P_\parallel s_t^{*,b}(x)\). Thus the VP/DDPM analogue of
\eqref{eq:cond_reverse_surrogate} uses the surrogate score
\[
\hat s_t^{\,b}(x)
=
P_\parallel s_t(x)
+
\frac{\alpha_t b-P_\perp x}{\sigma_t^2}.
\]
This is the score used by the guided DDIM implementation in the experiments.
Therefore the experiments instantiate the same normal--tangent principle as the
VE analysis, with \(t\) replaced by the VP noise variance \(\sigma_t^2\) and the
clean constraint level \(b\) replaced by its noised mean \(\alpha_t b\).

\end{remark}

We do not integrate the surrogate reverse SDE \eqref{eq:cond_reverse_surrogate} over the full reverse-time horizon
\(\tau\in[0,T]\). The surrogate replaces the true conditional tangent score \(P_\parallel s^{*,b}_{T-\tau}\) by the
unconditional term \(P_\parallel s_{T-\tau}\). If we start at \(\tau=0\) (i.e., from the highest-noise marginal),
this mismatch acts over a long interval during which the normal correction is weak because its strength scales as
\(1/(T-\tau)=1/t\). As a result, the trajectory can drift in tangent directions in a way that is inconsistent with
the target conditional law, producing a bias that accumulates before the constraint becomes dominant at smaller noise.

To limit this accumulation, we start the surrogate reverse SDE only at an intermediate noise level \(t^*\in(0,T-t_0)\),
equivalently at reverse time \(\tau^*:=T-t^*\). Intuitively, \(t^*\) is chosen so that, for the remaining reverse
interval \(\tau\in[\tau^*,T-t_0]\) (i.e., forward times \(t\in(0,t^*]\)), using \(P_\parallel s_t\) as a proxy for
\(P_\parallel s_t^{*,b}\) is acceptable, while the normal drift is already strong enough to enforce the constraint.
What remains is that we cannot initialize the reverse SDE at \(\tau^*\) from an arbitrary point: we need an initial
state that is (approximately) distributed as the correct conditional marginal \(\Law(X_{t^*}\mid P_{\perp}Z =b)\).

We construct such an initialization by combining an exact draw for the normal component with a tangent-space sampling
step. Under the conditioning \(B=b\) we have \(P_\perp Z=b\) almost surely, hence
\[
P_\perp X_{t^*}=P_\perp(Z+W_{t^*})=b+P_\perp W_{t^*},
\]
so the normal component at time \(t^*\) can be sampled explicitly by
\[
x^{\perp}_{t^*}:= b + \sqrt{t^*}\,P_\perp \xi,
\qquad \xi\sim\Nc(0,I_d),
\]
which matches the exact law of \(P_\perp X_{t^*}\mid P_{\perp}Z =b\). Conditional on this sampled normal component \(x^\perp\),
we sample a compatible tangent component by running Langevin dynamics restricted to the affine set
\(\Mc(x^\perp):=\{x:\;P_\perp x=x^\perp\}\), using only the projected (tangent) score at time \(t^*\).

The following lemma shows that restricting a density to \(\Mc(x^\perp)\) simply projects its ambient score onto the
tangent space.

\begin{lemma}\label{lem:Bgrad_equals_Pgrad_rewrite}
Let \(A\in\R^{m\times d}\) have full row rank and let \(C\in\R^{d\times(d-m)}\) have orthonormal columns spanning
\(\ker(A)\) (\(C^\top C=I_{d-m}\), \(CC^\top=P_{\parallel}\)). Fix any \(u_0\in\Mc(x^\perp)\) and parametrize the
affine set by \(x=u_0+Cz^{\parallel}\) with \(z^{\parallel}\in\R^{d-m}\). For any differentiable density
\(p:\R^d\to(0,\infty)\), define its restriction to $\Mc(x^\perp) $ by \(\pi(z^{\parallel})\propto p(u_0+Cz^{\parallel})\). Then, for all
\(z^{\parallel}\),
\[
C\,\nabla_{z^{\parallel}}\log \pi(z^{\parallel})
=
P_{\parallel}\,\nabla_x\log p(x),
\qquad x=u_0+Cz^{\parallel}.
\]
\end{lemma}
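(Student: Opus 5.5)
The plan is a direct chain-rule computation, with care about the normalizing constant and the identity \(CC^\top=P_\parallel\).

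\emph{Step 1: remove the normalizing constant.} Write \(\pi(z^\parallel)=p(u_0+Cz^\parallel)/Z_\pi\), where \(Z_\pi:=\int_{\R^{d-m}}p(u_0+Cw)\,\dd w\). This is assumed finite and positive, which is implicit in calling \(\pi\) a density. Then \(\log\pi(z^\parallel)=\log p(u_0+Cz^\parallel)-\log Z_\pi\). The constant \(\log Z_\pi\) does not depend on \(z^\parallel\), so it drops out of the gradient. Since \(p>0\) is differentiable, \(\log p\) is differentiable, and therefore \(\log\pi\) is differentiable in \(z^\parallel\).

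\emph{Step 2: apply the chain rule.} Let \(g(z^\parallel):=u_0+Cz^\parallel\). This is an affine map \(\R^{d-m}\to\R^d\) with Jacobian \(C\). The multivariate chain rule gives
\[
\nabla_{z^\parallel}\log\pi(z^\parallel)=C^\top\,\nabla_x\log p(x)\big|_{x=u_0+Cz^\parallel}.
\]

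\emph{Step 3: multiply by \(C\).} Left-multiplying by \(C\) gives \(C\nabla_{z^\parallel}\log\pi=CC^\top\nabla_x\log p(x)\). By hypothesis \(CC^\top=P_\parallel\), which is the desired identity. To check that hypothesis, note that \(CC^\top\) is the orthogonal projector onto \(\mathrm{range}(C)=\ker(A)\) because \(C^\top C=I_{d-m}\). Likewise, \(P_\parallel=I-A^\top(AA^\top)^{-1}A\) is the orthogonal projector onto \(\ker(A)=\mathrm{range}(A^\top)^\perp\). Orthogonal projectors onto the same subspace are equal.

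There is no substantive obstacle here. The only points needing attention are that the normalizing constant \(Z_\pi\) is finite, so \(\pi\) is well defined, and that it is independent of \(z^\parallel\). One might also remark that the result does not depend on the choice of \(u_0\in\Mc(x^\perp)\) or of orthonormal basis \(C\): the right-hand side \(P_\parallel\nabla_x\log p(x)\) is intrinsic to the point \(x\).
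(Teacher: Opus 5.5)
Your proof is correct and follows the paper's argument: you drop the $z^\parallel$-independent normalizing constant, apply the chain rule to get $C^\top\nabla_x\log p(x)$, and left-multiply by $C$ using $CC^\top=P_\parallel$. Your extra check that $CC^\top$ and $P_\parallel$ are both the orthogonal projector onto $\ker(A)$ is harmless, since the lemma already takes that identity as a hypothesis.
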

\begin{proof}
Since the proportionality constant does not depend on \(z^\parallel\), it disappears after taking
logarithms and gradients. Thus
\[
\log \pi(z^\parallel)=\log p(u_0+Cz^\parallel)+\text{const}.
\]
Differentiating with respect to \(z^\parallel\) and using the chain rule gives
\[
\nabla_{z^\parallel}\log \pi(z^\parallel)
=
C^\top \nabla_x \log p(x),
\qquad x=u_0+Cz^\parallel.
\]
Multiplying both sides by \(C\), we obtain
\[
C\,\nabla_{z^\parallel}\log \pi(z^\parallel)
=
CC^\top \nabla_x\log p(x).
\]
Because the columns of \(C\) form an orthonormal basis of \(\ker(A)\), we have
\[
CC^\top=P_\parallel.
\]
Therefore
\[
C\,\nabla_{z^\parallel}\log \pi(z^\parallel)
=
P_\parallel \nabla_x\log p(x),
\qquad x=u_0+Cz^\parallel,
\]
which is exactly the claimed identity.
\end{proof}
We use Lemma \ref{lem:Bgrad_equals_Pgrad_rewrite} with the time-\(t^*\) marginal \(p_{t^*}\) (and its learned score
\(s_{t^*}=\nabla\log p_{t^*}\)). Starting from any point on \(\Mc(x^\perp_{t^*})\), e.g.
\[
y_0 := x^\perp_{t^*} + \sqrt{t^*}\,P_\parallel \xi,\qquad \xi\sim\Nc(0,I_d),
\]
we run underdamped Langevin dynamics evolving only in tangent directions:
\begin{equation}\label{eq:underdamped_langevin_rewrite}
\begin{aligned}
\dd y_s &= v_s\,\dd s,\\
\dd v_s &=
P_{\parallel}s_{t^*}(y_s)\,\dd s
-\gamma P_{\parallel}v_s\,\dd s
+\sqrt{2\gamma}\,P_{\parallel}\,\dd W_s,
\end{aligned}
\end{equation}
while enforcing the constraint \(P_\perp y_s\equiv x^\perp_{t^*}\) for all \(s\) (equivalently, we project updates onto the
tangent space). Let \(\hat Y_{\tau^*}^{\,b}\) denote the resulting position \(y_s\) after a prescribed Langevin time.

This two-stage procedure induces an initialization law at time \(t^*\) that matches the conditional normal marginal
exactly and uses a tractable surrogate for the tangent conditional, namely
\[
\hat p_{t^*}^{\,b}(x^{\perp},x^{\parallel})
=
p_{t^*}(x^{\parallel}\mid x^{\perp})\,p_{t^*}(x^{\perp}\mid P_{\perp}Z =b),
\qquad
x^{\perp}=P_{\perp}x,\ \ x^{\parallel}=P_{\parallel}x.
\]
Here \(p_{t^*}(x^\perp\mid P_{\perp}Z =b)\) is available in closed form because, under \(B=b\), the forward process
satisfies \(X_{t^*}^\perp=b+W_{t^*}^\perp\), hence \(X_{t^*}^\perp\sim\Nc(b,t^*P_\perp)\). The remaining factor
\(p_{t^*}(x^\parallel\mid x^\perp)\) is \emph{not} conditioned on \(B=b\); it is the \emph{unconditional} tangent
conditional induced by the pretrained model at noise level \(t^*\). Equivalently, \(\hat p_{t^*}^{\,b}\) is the
distribution obtained by (i) drawing the correct noisy normal component under the constraint, and then (ii) drawing a
tangent component that is compatible with that normal slice according to the unconditional time-\(t^*\) marginal.
This is exactly what the projected Langevin phase targets: it mixes along the affine set \(\Mc(x^\perp_{t^*})\) using the
projected score \(P_\parallel s_{t^*}\), which is the score of \(p_{t^*}(\cdot\mid x^\perp)\) restricted to the
manifold (Lemma~\ref{lem:Bgrad_equals_Pgrad_rewrite}). Finally, we use \(\hat p_{t^*}^{\,b}\) as the \emph{initial
distribution} for the surrogate reverse dynamics \eqref{eq:cond_reverse_surrogate} at reverse time
\(\tau^*=T-t^*\), i.e., \(\hat Y_{\tau^*}^{\,b}\sim \hat p_{t^*}^{\,b}\).

\begin{remark}
A common ``projection-based guidance'' heuristic enforces the affine constraint only through the analytic normal drift
while leaving the tangent component \(P_{\parallel}s_t(x)\) equal to the \emph{unconditional} tangent score.
This indeed corrects deviations from \(\Mc(b)\) in the normal directions, but it does not prevent systematic drift
\emph{along} the manifold. The issue is most severe at high noise 
the marginal \(p_t\) is a heavy Gaussian smoothing of \(p_0\), so the unconditional score \(s_t\) aggregates directions
from all modes of the data set. When the constraint \(B=b\) selects a low-probability portion of the data manifold,
the unconditional tangent drift can point toward dominant modes that are irrelevant to the observation. Because the
normal correction scales as \(1/t\), it is weakest exactly in this regime, allowing tangent errors to accumulate over a
long reverse-time horizon and leading to biased conditional samples.

This is the motivation for (i) not integrating the surrogate dynamics from \(\tau=0\), and (ii) inserting a short
projected Langevin phase at a ``safe'' noise level \(t^*\) (equivalently \(\tau^*=T-t^*\)). Starting at \(\tau^*\)
limits how long the tangent mismatch can act. The projected Langevin step then provides an approximate draw from the
needed conditional marginal \(\Law(X_{t^*}\mid P_{\perp}Z =b)\), so that the subsequent surrogate reverse dynamics begin from a
state that is already well-mixed along \(\ker(A)\) while remaining consistent with the constraint in the normal space.
\end{remark}

\medskip
\noindent\emph{Toy mixture illustration.}
We illustrate the tangent-bias mechanism on a simple prior in \(\R^2\): a three-point mixture with atoms at
\((1,1)\), \((-1,-1)\), and \((0,5)\) with weights \(0.125:0.125:0.75\).
We consider the linear constraint
\[
x-y=0,
\qquad\text{equivalently}\qquad
AZ=0\ \ \text{with}\ \ A=\begin{bmatrix}1 & -1\end{bmatrix},
\]
so the conditional target is \(\Law(Z\mid x-y=0)\), i.e., sampling on the diagonal affine set
\(\Mc(0)=\{(x,y)\in\R^2:\ x=y\}\).

We run the probability-flow ordinary differential equation (PF-ODE), the
deterministic counterpart of the reverse-time sampler, from
\(\sigma_{\max}=20\) to \(\sigma_{\min}=0.01\), with the identification \(t=\sigma^2\).
As shown in Figure~\ref{fig:toy_mixture}(a), the unconstrained PF-ODE recovers the correct mixture.

Under naive projection-based guidance initialized at \(\sigma_{\max}\), the constraint \(x-y=0\) is enforced only
through the analytic normal drift, while the tangent drift remains that of the unconditional score.
At high noise, the unconditional score is dominated by the heavy \((0,5)\) component, and this dominant-mode tangent
direction accumulates along the manifold \(\Mc(0)\), distorting the conditional weights and smearing the low-mass modes
toward the dominant cluster (Figure~\ref{fig:toy_mixture}(b)).
In contrast, our two-stage procedure runs a brief projected underdamped Langevin phase at \(t^*=0.25\) restricted to
\(\ker(A)\) (i.e., motion tangent to \(x-y=0\), cf.~Lemma~\ref{lem:Bgrad_equals_Pgrad_rewrite}),
producing an initialization close to \(\Law(X_{t^*}\mid x-y=0)\).
Starting the surrogate reverse dynamics from \(\tau^*=T-t^*\) then yields samples that remain consistent with the
constraint and recover the intended mode structure (Figure~\ref{fig:toy_mixture}(c)).

\begin{figure}[ht]
  \centering
  \begin{minipage}{0.32\textwidth}
    \centering
    \includegraphics[width=\linewidth]{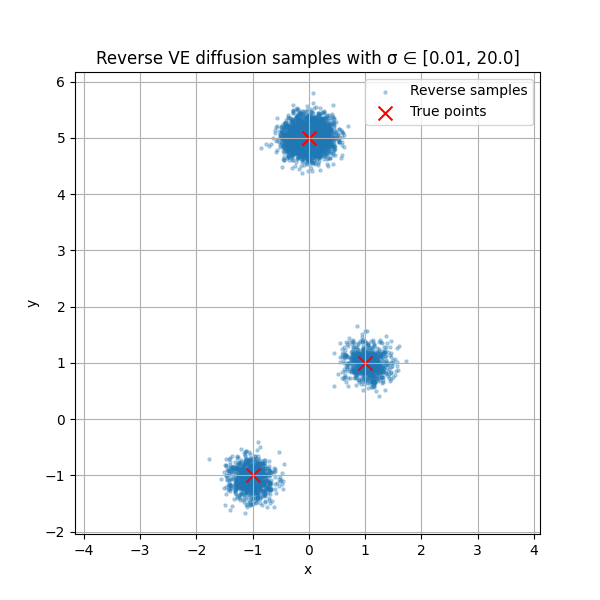}
    \centerline{\small (a) Unconstrained PF-ODE}
  \end{minipage}\hfill
  \begin{minipage}{0.32\textwidth}
    \centering
    \includegraphics[width=\linewidth]{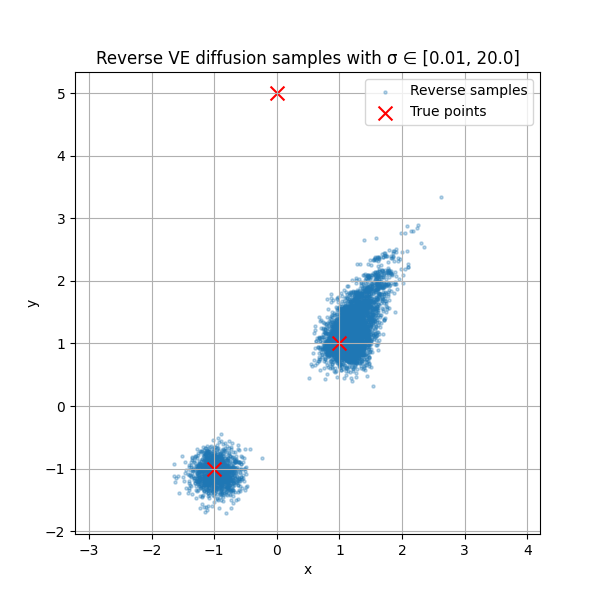}
    \centerline{\small (b) Naive projection guidance}
  \end{minipage}\hfill
  \begin{minipage}{0.32\textwidth}
    \centering
    \includegraphics[width=\linewidth]{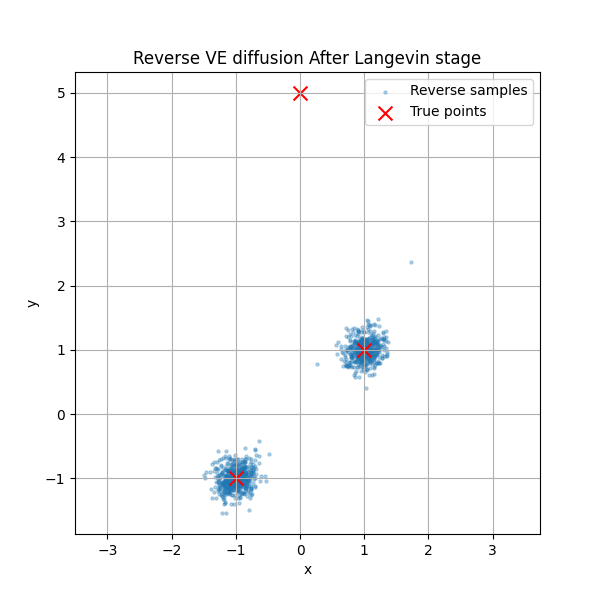}
    \centerline{\small (c) Two-stage (ours)}
  \end{minipage}
  \caption{Three-point mixture in \(\R^2\). (a) Unconstrained PF-ODE reproduces the prior mixture.
  (b) Naive projection-based guidance accumulates tangent drift dominated by the high-mass \((0,5)\) mode, biasing the
  conditional outcome. (c) Our projected Langevin initialization at \(t^*\) followed by the surrogate reverse dynamics
  recovers constraint-consistent samples with the correct mode structure.}
  \label{fig:toy_mixture}
\end{figure}

\section{Algorithm and Implementation}\label{sec:algorithm}

Our conditional sampler is organized into three conceptually distinct steps. The design goal is to (i) initialize at an
intermediate ``safe'' noise level \(t^*\), (ii) mix efficiently \emph{along} the affine constraint manifold, and (iii)
complete denoising while enforcing the constraint through an exact normal drift.
Figure~\ref{fig:diffusion-process-overview} provides a visual summary of the full pipeline. In Step~1 we move to the intermediate time \(t^*\) and fix the \emph{noisy} normal level so that
\(P_\perp X_{t^*}\) has the exact conditional law under \(B=b\). In Step~2 we run a short phase of projected
underdamped Langevin dynamics (BAOAB) on the corresponding affine set \(\Mc(x^\perp)\) to mix \emph{in the tangent
directions} while keeping the normal component fixed. This produces an initialization \(\hat Y_{\tau^*}^{\,b}\) that
is approximately distributed as \(\Law(X_{t^*}\mid P_{\perp}Z =b)\) and is already well-mixed along \(\ker(A)\), which reduces
the accumulation of tangent-score mismatch at high noise. In Step~3 we integrate the surrogate guided reverse dynamics
from \(\tau^*=T-t^*\) to \(T\), using the analytic normal drift to enforce the constraint and the pretrained score for
the tangent drift during denoising.

\emph{Step 1: Initialization for Langevin.}
As in Section~\ref{sec:methodology}, we start the reverse-time procedure at the intermediate ``safe'' noise level
\(t^*=T-\tau^*\). Step~1 produces the \emph{initial state for the projected Langevin phase} (Step~2).
To do so, we select any clean feasible point \(x_0\in\Mc(b)\) satisfying \(P_\perp x_0=b\) (equivalently \(Ax_0=y\)).
The choice of \(x_0\) is not unique and does not affect feasibility; in practice one may take, for example,
\(x_0=b+P_\parallel \zeta\) with \(\zeta\sim\Nc(0,I_d)\) (Gaussian initialization in \(\ker(A)\)), or use a plug-in
estimate (e.g., a pseudoinverse or any other fast reconstruction) and project it onto \(\Mc(b)\).

We then move \(x_0\) to time \(t^*\) by adding Gaussian perturbation,
\[
y^{\tau^*} \;=\; x_0 + \sqrt{T-\tau^*}\,\xi \;=\; x_0+\sqrt{t^*}\,\xi,
\qquad \xi\sim\Nc(0,I_d).
\]
Rather than enforcing the clean constraint level \(b\) at this stage, we freeze the \emph{noisy} normal component
\[
b_{\mathrm{noisy}} \;:=\; P_\perp y^{\tau^*},
\]
which is the correct forward-time stochastic normal level at time \(t^*\) under the conditioning \(B=b\).
\(y^{\tau^*}\) is then used to initialize the constrained BAOAB/underdamped Langevin
dynamics on the affine set \(\Mc(b_{\mathrm{noisy}})\) in Step~2.\\
\emph{Step 2: Tangent BAOAB Langevin}
Next, we approximate the conditional marginal \(\Law(X_{t^*}\mid P_{\perp}Z =b)\) by running \emph{underdamped Langevin dynamics}
restricted to the affine set $\Mc(b_{\text{noisy}})$.
We evolve a position--velocity pair \((y_s,v_s)\) using the projected dynamics in Equation \eqref{eq:underdamped_langevin_rewrite}, 
so that both the deterministic ``force'' and the stochastic excitation act \emph{only in tangent directions}
\(\ker(A)\). We discretize this SDE with the \textbf{BAOAB splitting integrator}, which decomposes the dynamics into
three sub-operators that can be integrated in closed form.

\smallskip
\noindent\textbf{The BAOAB split.}
Write the SDE as the sum of:
\begin{itemize}
  \item \emph{B (kick):} deterministic velocity update due to the force
  \(\;\dot v = P_{\parallel}s_{t^*}(y)\).
  \item \emph{A (drift):} deterministic position update
  \(\;\dot y = v\).
  \item \emph{O (Ornstein--Uhlenbeck):} stochastic friction/noise on velocity
  \(\;\dd v = -\gamma P_{\parallel}v\,\dd s + \sqrt{2\gamma}\,P_{\parallel}\dd W_s\).
\end{itemize}
BAOAB applies these pieces in the symmetric order
\[
\text{B/2} \;\rightarrow\; \text{A/2} \;\rightarrow\; \text{O} \;\rightarrow\; \text{A/2} \;\rightarrow\; \text{B/2},
\]
which is time-reversible (in the deterministic limit) and is known to have excellent stability and low bias in the
\emph{configurational} (position) marginal.

\smallskip
With step size \(\Delta s\), one iteration from \((y,v)\) proceeds as:
\begin{enumerate}
  \item \emph{B/2 (half kick):} update the velocity using the score force at the current position:
  \[
  v \leftarrow v + \frac{\Delta s}{2}\,P_{\parallel}s_{t^*}(y).
  \]
  \item \emph{A/2 (half drift):} move the position forward using the current velocity:
  \[
  y \leftarrow y + \frac{\Delta s}{2}\,v.
  \]
  \item \emph{O (OU refresh):} apply friction and inject Gaussian noise directly in velocity.
  This step is exact because it is an Ornstein--Uhlenbeck process. Writing
  \[
  c_1 := e^{-\gamma \Delta s},
  \qquad
  c_2 := \sqrt{\big(1-e^{-2\gamma\Delta s}\big)},
  \]
  we perform
  \[
  v \leftarrow c_1 v + c_2\,P_{\parallel}\xi,
  \qquad \xi\sim\Nc(0,I_d).
  \]
  Here \(c_1\) contracts velocity (friction) and \(c_2\) sets the noise amplitude; the projection \(P_{\parallel}\)
  ensures that the OU excitation does not change the normal component.
  \item \emph{A/2 (half drift):} advance the position again:
  \[
  y \leftarrow y + \frac{\Delta s}{2}\,v.
  \]
  \item \emph{B/2 (half kick):} apply the remaining half force update:
  \[
  v \leftarrow v + \frac{\Delta s}{2}\,P_{\parallel}s_{t^*}(y).
  \]
\end{enumerate}

\smallskip
 After \(K\) BAOAB iterations, we denote the resulting
position by \(\hat Y_{\tau^*}^b\). This state is well-mixed along \(\ker(A)\) while remaining consistent with the
forward-time noisy level set, making it a reliable initialization for the guided reverse denoising stage.
\\
\emph{Step 3: Guided Reverse Denoising}
Finally, starting from \(\hat Y_{\tau^*}^b\) we integrate the guided reverse SDE from \(\tau =\tau^*\) up to $T-t_0$ in Equation \eqref{eq:cond_reverse_surrogate}

\medskip
Algorithm~\ref{alg:baoab_sampling} summarizes these three steps in pseudocode.

\begin{algorithm}
\caption{Conditional Sampling via Affine BAOAB Initialization}
\label{alg:baoab_sampling}
\begin{algorithmic}[1]
\STATE {\bfseries Input:} Clean measurement $b$, starting point $x_0 \in \mathcal{M}(b)$, intermediate noise $t^* = T - \tau^*$, Langevin steps $K$, step size $\Delta s$, friction $\gamma$, score network $s_{t^*}$.
\STATE {\bfseries Step 1: Initialization for Langevin}
\STATE $y^{\tau^*} \leftarrow x_0 + \sqrt{T - \tau^*}\xi, \quad \xi \sim \mathcal{N}(0, I_d)$
\STATE $b_{\text{noisy}} \leftarrow P_{\perp} y^{\tau^*}$ \COMMENT{Target level set for the Langevin phase}
\STATE $y \leftarrow y^{\tau^*}, \quad v \leftarrow 0$
\STATE $c_1 \leftarrow e^{-\gamma \Delta s}, \quad c_2 \leftarrow \sqrt{1 - e^{-2\gamma \Delta s}}$

\STATE {\bfseries Step 2: Tangent BAOAB Langevin}
\FOR{$k = 1$ {\bfseries to} $K$}
    \STATE $v \leftarrow v + \frac{\Delta s}{2} P_{\parallel} s_{t^*}(y)$ \hfill \COMMENT{\textbf{B}: Half-step drift}
    \STATE $y \leftarrow y + \frac{\Delta s}{2} v$ \hfill \COMMENT{\textbf{A}: Half-step position}
    \STATE $v \leftarrow c_1 v + c_2 P_{\parallel} \xi, \quad \xi \sim \mathcal{N}(0, I_d)$ \hfill \COMMENT{\textbf{O}: Projected noise injection}
    \STATE $y \leftarrow y + \frac{\Delta s}{2} v$ \hfill \COMMENT{\textbf{A}: Half-step position}
    \STATE $v \leftarrow v + \frac{\Delta s}{2} P_{\parallel} s_{t^*}(y)$ \hfill \COMMENT{\textbf{B}: Half-step drift}
    \STATE $y \leftarrow P_{\parallel} y + b_{\text{noisy}}$ \hfill \COMMENT{Constraint: Maintain $P_{\perp} y = b_{\text{noisy}}$}
\ENDFOR
\STATE $\hat{Y}_{\tau^*}^b \leftarrow y$

\STATE {\bfseries Step 3: Guided Reverse Denoising}
\STATE Evolve $\hat{Y}_{\tau^*}^b$ from $\tau = \tau^*$ to $T-t_0$ using the guided reverse SDE in Equation \eqref{eq:cond_reverse_surrogate}
\STATE {\bfseries Return:} Final conditional sample $\hat{z}$
\end{algorithmic}
\end{algorithm}
\begin{figure}[ht]
  \centering
  \tikzset{
    imgnode/.style={
        rectangle,
        draw=gray!60!black,
        line width=1pt,
        inner sep=2pt,
        rounded corners=1pt,
        fill=white,
        drop shadow={opacity=0.15, shadow xshift=1pt, shadow yshift=-1pt}
    },
    flowarrow/.style={
        ->,
        >={Latex[length=3mm, width=2mm]},
        line width=1.5pt,
        color=blue!70!black
    },
    flowlabel/.style={
        midway,
        fill=white,
        align=center,
        inner sep=3pt,
        rounded corners=2pt,
        draw=blue!20!white,
        thin,
        font=\sffamily\footnotesize
    }
  }

  \begin{tikzpicture}[node distance=3.5cm and 4cm]
    \node[imgnode] (n1) {
        \includegraphics[width=0.2\textwidth]{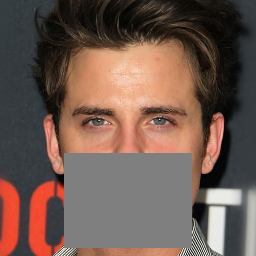}
    };
    \node[below=5pt of n1, align=center] {
        \mainlabel{Input: Measurement $\bm{b}$}\\
        \sublabel{(Linear Constraint)}
    };

    \node[imgnode] (n2) [above=of n1] {
        \includegraphics[width=0.2\textwidth]{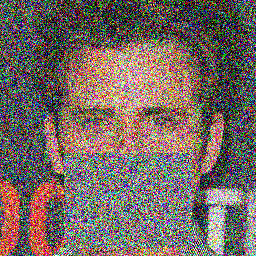}
    };
    \node[above=5pt of n2, align=center] {
        \mainlabel{\textbf{Step 1:} Noisy Init $y^{\tau^*}$}\\
        \sublabel{(Forward Diffusion)}
    };

    \node[imgnode] (n3) [right=of n2] {
        \includegraphics[width=0.2\textwidth]{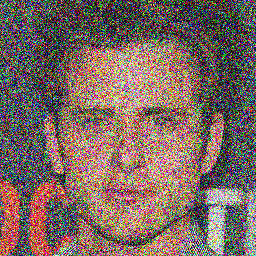}
    };
    \node[above=5pt of n3, align=center] {
        \mainlabel{\textbf{Step 2:} Langevin State $\hat{Y}_{\tau^*}^b$}\\
        \sublabel{(\textbf{BAOAB} Tangent Mixing)}
    };

    \node[imgnode] (n4) [right=of n1] {
        \includegraphics[width=0.2\textwidth]{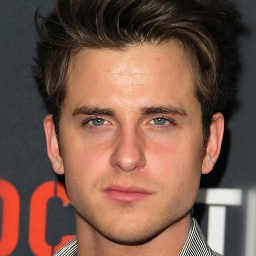}
    };
    \node[below=5pt of n4, align=center] {
        \mainlabel{\textbf{Step 3:} Final Sample $x_0$}\\
        \sublabel{(Generated Data)}
    };

    \draw[flowarrow] (n1.north) -- node[flowlabel] {\textbf{Step 1}\\Initialization for Langevin} (n2.south);

    \draw[flowarrow] (n2.east) -- node[flowlabel]
      {\textbf{Step 2}\\Tangent BAOAB \\ Langevin} (n3.west);

    \draw[flowarrow] (n3.south) -- node[flowlabel] {\textbf{Step 3}\\Guided Reverse \\ Denoising} (n4.north);

  \end{tikzpicture}
  \caption{Visual overview of the proposed sampling process. Step 1: diffuse the constrained input to the intermediate noise level \(t^*\). Step 2: run projected BAOAB underdamped Langevin dynamics to mix along the affine constraint set while preserving the noisy normal level. Step 3: perform guided reverse denoising with exact normal correction to obtain the final sample.}
  \label{fig:diffusion-process-overview}
\end{figure}

\section{Experiments}
\label{sec:experiments}

We evaluate the proposed Langevin-Conditioned Diffusion Model with BAOAB (\textbf{LCDM-BAOAB}) sampler on standard
\(256\times256\) image inverse problems. LCDM-BAOAB uses the affine
normal--tangent decomposition developed in the previous sections: it first
performs projected BAOAB Langevin mixing in the tangent directions at an
intermediate noise level, and then completes sampling by guided DDIM denoising
with exact normal correction.

We test on three benchmarks: CelebA-HQ \citep{karras2017progressive},
LSUN Church \citep{yu2015lsun}, and ImageNet \citep{deng2009imagenet}. As the
primary baseline, we use the 
DDNM \citep{wang2022ddnm}, a strong zero-shot diffusion method for linear
image inverse problems.

We compare with DDNM because it has been reported to meaningfully outperform
earlier zero-shot conditional sampling and restoration methods for linear
inverse problems. In particular, DDNM was introduced as a unified zero-shot
framework for linear image restoration tasks such as super-resolution,
inpainting, colorization, compressed sensing, and deblurring, and was shown to
improve over prior zero-shot approaches including ILVR, RePaint, DDRM, and DPS
\citep{choi2021ilvr,lugmayr2022repaint,kawar2022ddrm,wang2022ddnm, chung2023dps}. Thus, DDNM
provides a strong projection-based reference point for testing whether the
additional tangent-space BAOAB Langevin initialization in LCDM-BAOAB yields
measurable improvements under matched compute.

In Appendix~\ref{app:vp-ddpm-normal-correction}, we show that, under the VP--DDPM
\(\varepsilon\)-parameterization, the DDNM update is equivalent to using the
effective score
\[
\hat s_t^{\rm DDNM}(x_t;y)
=
P_\parallel s_t(x_t)
+
\frac{\alpha_t b-P_\perp x_t}{\sigma_t^2}.
\]
Thus DDNM applies the analytic correction in the normal directions while
retaining the pretrained unconditional score in the tangent directions. Our
experiments therefore test whether explicitly mixing in the tangent directions
through projected BAOAB Langevin dynamics improves over a strong zero-shot
projection-based sampler.

All experiments are performed in the zero-shot setting using pretrained
\(256\times256\) diffusion backbones, with no task-specific fine-tuning. We
report LPIPS and FID; lower values are better for both metrics.

All experiments are run under a matched budget of \(100\) effective network
function evaluations (NFEs). For DDNM, this corresponds to \(100\) DDIM steps
with \(\eta=0.85\). For LCDM-BAOAB, the budget is split into \(50\) projected
BAOAB Langevin updates and \(50\) guided DDIM denoising steps. In the
\(8\times\) super-resolution experiments, we use a \(200\)-point DDIM time grid
and start LCDM-BAOAB from the \(25\%\) point of this grid, so the guided reverse
stage uses the final \(50\) DDIM network evaluations. Thus all reported
comparisons use the same total budget of \(100\) NFEs. In the BAOAB phase, we cache and reuse the previous
UNet output whenever possible, so that the effective number of score-network
evaluations remains matched to DDNM.

The Langevin phase is introduced at a task-dependent discrete DDPM timestep,
denoted \(k_{\mathrm{mix}}\). This index refers to the implementation timestep
of the pretrained DDPM sampler and should not be confused with the continuous
safe-time parameter \(t^*\) used in the theoretical analysis. For inpainting, we
use \(k_{\mathrm{mix}}=500\). For super-resolution, we use
\(k_{\mathrm{mix}}=250\), which corresponds to a later and higher-SNR point in
the reverse trajectory. This choice reflects the different nature of the two
inverse problems. In super-resolution, the main difficulty is recovering
high-frequency detail from a heavily downsampled image, and tangent-space
refinement is more stable once the iterate is closer to the data manifold.
Therefore, for super-resolution, we perform BAOAB mixing later in the denoising
trajectory than we do for masking tasks.

For super-resolution, we consider \(8\times\) mean downsampling, mapping
\(32\times32\) observations to \(256\times256\) images. Quantitative results are
reported on \(1000\) images per data set.

To facilitate reproducibility, we provide the implementation, configuration files,
and experiment scripts at
\url{https://github.com/ahmad-aghapour/lcdm}.
\subsection{Inpainting Results}

We first evaluate fixed-mask inpainting. The fixed mask is chosen differently
across data sets. On CelebA-HQ, we mask a facial region, since reconstructing a
semantically important part of a human face is substantially more challenging
than filling an arbitrary patch. On LSUN Church and ImageNet, we use the
corresponding fixed square masks for those data sets. This distinction is
important when interpreting the CelebA-HQ results, since the CelebA-HQ mask
targets a harder semantic completion problem.

Table~\ref{tab:inpainting_all} shows that LCDM-BAOAB consistently improves over
DDNM on all three data sets. The improvement is modest on CelebA-HQ, but becomes
larger on LSUN Church and especially on ImageNet. This trend is consistent with
our hypothesis: as the data distribution becomes more diverse and the tangent
space becomes more semantically ambiguous, projection-only guidance is more
susceptible to tangent-space bias, and explicit tangent mixing becomes more
beneficial.

\begin{table}[ht]
\centering
\begin{tabular}{lcccccc}
\toprule
\textbf{Method}
& \multicolumn{2}{c}{\textbf{CelebA-HQ}}
& \multicolumn{2}{c}{\textbf{LSUN Church}}
& \multicolumn{2}{c}{\textbf{ImageNet}} \\
\cmidrule(lr){2-3} \cmidrule(lr){4-5} \cmidrule(lr){6-7}
& \textbf{LPIPS} & \textbf{FID}
& \textbf{LPIPS} & \textbf{FID}
& \textbf{LPIPS} & \textbf{FID} \\
\midrule
DDNM
& 0.0579 & 13.82
& 0.1156 & 11.51
& 0.1242 & 26.13 \\
\textbf{LCDM-BAOAB (ours)}
& \textbf{0.0421} & \textbf{11.33}
& \textbf{0.0973} & \textbf{9.63}
& \textbf{0.0985} & \textbf{18.35} \\
\bottomrule
\end{tabular}
\caption{Fixed-mask inpainting results on \(256\times256\) benchmarks. Metrics are LPIPS\(\downarrow\) and FID\(\downarrow\).}\label{tab:inpainting_all}
\end{table}

To test whether the inpainting improvement persists beyond a single fixed
corruption pattern, we also evaluate random-mask inpainting on \(1000\) images
per data set. For each image, we remove a \(100\times100\) square patch sampled
at a random location. The random mask is tied deterministically to the image
identity, so DDNM and LCDM-BAOAB are evaluated on exactly the same corrupted
input for each image. This gives a paired comparison and removes any ambiguity
about whether differences are caused by the sampler or by different mask
locations.

Table~\ref{tab:randommask_inpainting} shows that LCDM-BAOAB again improves over
DDNM across all three data sets. The gains are smallest on CelebA-HQ, larger on
LSUN Church, and largest on ImageNet. On ImageNet, LCDM-BAOAB improves FID from
\(29.00\) to \(20.91\) and LPIPS from \(0.1182\) to \(0.0933\). These results
show that tangent-space BAOAB mixing remains beneficial even when the missing
region varies across images.

\begin{table}[ht]
\centering
\begin{tabular}{lcccccc}
\toprule
\textbf{Method}
& \multicolumn{2}{c}{\textbf{CelebA-HQ}}
& \multicolumn{2}{c}{\textbf{LSUN Church}}
& \multicolumn{2}{c}{\textbf{ImageNet}} \\
\cmidrule(lr){2-3} \cmidrule(lr){4-5} \cmidrule(lr){6-7}
& \textbf{LPIPS} & \textbf{FID}
& \textbf{LPIPS} & \textbf{FID}
& \textbf{LPIPS} & \textbf{FID} \\
\midrule
DDNM
& 0.0499 & 7.23
& 0.1102 & 11.84
& 0.1182 & 29.00 \\
\textbf{LCDM-BAOAB (ours)}
& \textbf{0.0406} & \textbf{6.23}
& \textbf{0.0936} & \textbf{9.55}
& \textbf{0.0933} & \textbf{20.91} \\
\bottomrule
\end{tabular}
\caption{Random-mask inpainting results on \(1000\) images per data set. For each image, a \(100\times100\) square mask is sampled at a random location and shared across methods. Metrics are LPIPS\(\downarrow\) and FID\(\downarrow\).}
\label{tab:randommask_inpainting}
\end{table}

\begin{figure}[ht]
\centering
\setlength{\tabcolsep}{2pt}
\renewcommand{\arraystretch}{0.5}
\begin{tabular}{ccc}
\includegraphics[width=0.31\linewidth]{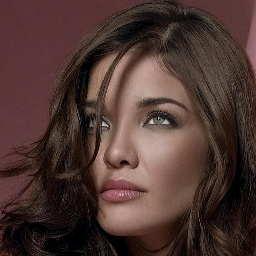} &
\includegraphics[width=0.31\linewidth]{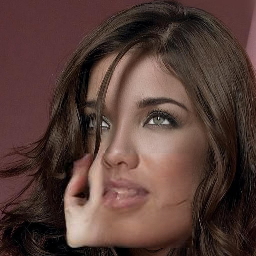} &
\includegraphics[width=0.31\linewidth]{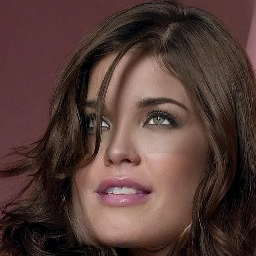} \\
(a) Real image & (b) DDNM & (c) LCDM-BAOAB \\[4pt]
\includegraphics[width=0.31\linewidth]{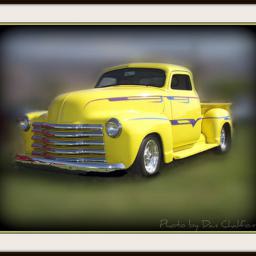} &
\includegraphics[width=0.31\linewidth]{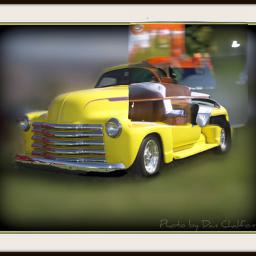} &
\includegraphics[width=0.31\linewidth]{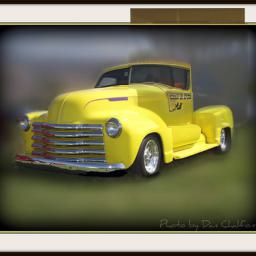} \\
(d) Real image & (e) DDNM & (f) LCDM-BAOAB
\end{tabular}
\caption{Visual comparison for inpainting. DDNM can produce texture artifacts or semantically inconsistent completions, especially on ImageNet. LCDM-BAOAB produces cleaner and more coherent reconstructions while preserving measurement consistency.}
\label{fig:tangent-bias}
\end{figure}

\subsection{Super-Resolution Results}

We next evaluate \(8\times\) super-resolution, where the observation is obtained
by mean downsampling a \(256\times256\) image to \(32\times32\). This inverse
problem is substantially more ill-posed than inpainting because most
high-frequency information is removed by the forward operator. Consequently,
the conditional distribution contains a large tangent-space ambiguity: many
high-resolution images are consistent with the same low-resolution observation.

For this task, we introduce the BAOAB Langevin phase at the discrete DDPM
timestep \(k_{\mathrm{mix}}=250\), later in the reverse trajectory than in the
inpainting experiments. This higher-SNR starting point makes tangent refinement
more stable and allows the sampler to recover fine-scale structure after the
coarse image content has already been established.

Table~\ref{tab:sr_results} shows that LCDM-BAOAB consistently improves over
DDNM on all three data sets. The gains are largest on ImageNet, where the
conditional ambiguity is strongest, and remain substantial on LSUN Church. These
results support the central claim of the paper: enforcing the measurement in the
normal directions is not sufficient for highly ill-posed linear inverse
problems; additional tangent-space mixing can substantially improve perceptual
and distributional quality.

\begin{table}[ht]
\centering
\begin{tabular}{lcccccc}
\toprule
\textbf{Method}
& \multicolumn{2}{c}{\textbf{CelebA-HQ}}
& \multicolumn{2}{c}{\textbf{LSUN Church}}
& \multicolumn{2}{c}{\textbf{ImageNet}} \\
\cmidrule(lr){2-3} \cmidrule(lr){4-5} \cmidrule(lr){6-7}
& \textbf{LPIPS} & \textbf{FID}
& \textbf{LPIPS} & \textbf{FID}
& \textbf{LPIPS} & \textbf{FID} \\
\midrule
DDNM
& 0.1437 & 33.88
& 0.2765 & 28.80
& 0.3242 & 60.74 \\
\textbf{LCDM-BAOAB (ours)}
& \textbf{0.1228} & \textbf{29.58}
& \textbf{0.2411} & \textbf{21.03}
& \textbf{0.2920} & \textbf{45.57} \\
\bottomrule
\end{tabular}
\caption{\(8\times\) super-resolution results on \(256\times256\) benchmarks. Metrics are LPIPS\(\downarrow\) and FID\(\downarrow\).}
\label{tab:sr_results}
\end{table}

\begin{figure}[ht]
\centering
\setlength{\tabcolsep}{2pt}
\begin{tabular}{ccc}
\includegraphics[width=0.31\linewidth]{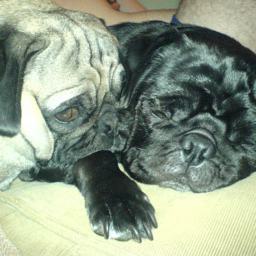} &
\includegraphics[width=0.31\linewidth]{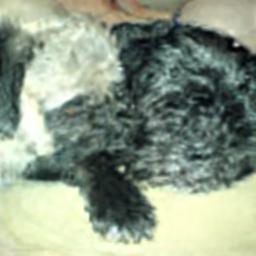} &
\includegraphics[width=0.31\linewidth]{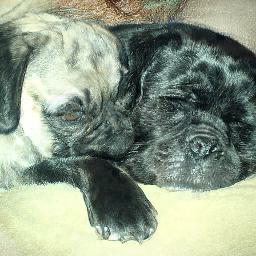} \\
(a) Real image & (b) DDNM, \(8\times\) & (c) LCDM-BAOAB, \(8\times\)
\end{tabular}
\caption{\(8\times\) super-resolution on ImageNet. DDNM tends to produce blurred or structurally inconsistent outputs, while LCDM-BAOAB recovers sharper edges and more realistic high-frequency detail.}
\label{fig:sr_visuals}
\end{figure}

\FloatBarrier
\section{Error Decomposition and Average KL Bounds}\label{sec:theory}
We now state quantitative guarantees for the discrepancy between the ideal conditional sampler and our practical
procedure. The key idea is to align the analysis with the algorithmic structure: starting from a safe noise level
\(t^*\), our method (i) approximately initializes the reverse process at time \(t^*\) using the two-stage normal/tangent
construction, and (ii) then evolves via a guided reverse SDE whose normal drift is exact but whose tangent drift uses
the unconditional score. Accordingly, the results below separate the total error into an \emph{initialization} term at
time \(t^*\) and a \emph{pathwise} term accumulated during the reverse evolution. The first theorem bounds the pathwise
KL divergence between the true conditional and surrogate reverse-time \emph{path measures} in terms of conditional mutual
information between tangent and normal components. We then introduce assumptions that control how the tangent
conditional marginal varies with the level \(b\), and use them to bound the average initialization error. Combining
these ingredients yields an average terminal KL bound for the tangent marginal of the generated sample, together with
sharper consequences under additional separation conditions on the admissible levels.
We now quantify the error of our conditional sampling procedure. There are two
conceptual contributions:

\begin{itemize}
\item[(i)] a \emph{pathwise} error from using the unconditional tangent score
in the guided reverse SDE instead of the true conditional tangent score;
\item[(ii)] an \emph{initialization} error at time $t^*$, because we only
approximately sample from the true conditional marginal $\Law(X_{t^*}\mid P_{\perp}Z =b)$ using the two-stage procedure described above.
\end{itemize}

The next theorem controls the pathwise error in terms of conditional mutual
information between tangent and normal components. To prove it, we only need a second moment bound on $Z$.

\begin{assumption}\label{assumption0}
For $Z\sim p_0$, we have $\E\|Z\|^2<\infty$.
\end{assumption}

\begin{theorem}
\label{thm:avg-path-kl}
Let Assumption~\ref{assumption0} be in force. Fix \(0\le \tau^*<T-t_0\). For each level
\(b\), consider on \([\tau^*,T-t_0]\) the ideal conditional reverse SDE
\eqref{eq:cond_reverse_true_rewrite} and the surrogate constrained reverse SDE
\eqref{eq:cond_reverse_surrogate}, started from the same initial law at time \(\tau^*\):
\[
\begin{aligned}
\dd Y_\tau^{*,b}
&=
\Big(
P_{\parallel}s_{T-\tau}^{*,b}(Y_\tau^{*,b})
+\frac{1}{T-\tau}P_{\perp}(b-Y_\tau^{*,b})
\Big)\,\dd\tau
+\dd\bar W_\tau,\\[4pt]
\dd \hat Y_\tau^{\,b}
&=
\Big(
P_{\parallel}s_{T-\tau}(\hat Y_\tau^{\,b})
+\frac{1}{T-\tau}P_{\perp}(b-\hat Y_\tau^{\,b})
\Big)\,\dd\tau
+\dd\bar W_\tau.
\end{aligned}
\]
Let \(\PP^{Y^{*,b}}\) and \(\PP^{\hat Y^{\,b}}\) denote the corresponding path measures on
\([\tau^*,T-t_0]\). Decompose the clean signal as
\[
Z^\parallel:=P_\parallel Z,
\qquad
Z^\perp:=P_\perp Z.
\]
Then
\[
\E_B\!\Big[\KL\!\big(\PP^{Y^{*,B}}\,\Vert\,\PP^{\hat Y^{\,B}}\big)\Big]
\le
I\!\big(Z^\parallel;Z^\perp\mid X_{t^*}\big).
\]
Moreover,
\[
\E_B\!\Big[\KL\!\big(\PP^{Y^{*,B}}\,\Vert\,\PP^{\hat Y^{\,B}}\big)\Big]
\ge
I\!\big(Z^\parallel;Z^\perp\mid X_{t^*}\big)
-
I\!\big(Z^\parallel;Z^\perp\mid X_{t^*}^\perp\big)
-
I\!\big(Z^\parallel;Z^\perp\mid X_{t_0}\big).
\]
\end{theorem}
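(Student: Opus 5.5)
The plan is to compute the path-space KL exactly with Girsanov's theorem, rewrite the integrand through Tweedie's formula \eqref{eq:tweedie} as an MMSE gap, and then identify that gap as the time derivative of a conditional mutual information using the I--MMSE relation.

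\emph{Step 1 (Girsanov).} The two SDEs in the statement have the same diffusion coefficient, the same normal drift and the same initial law at \(\tau^*\). Their drifts therefore differ only by \(P_\parallel(s^{*,b}_{T-\tau}-s_{T-\tau})\). Under the ideal measure, \(Y^{*,b}_\tau\) has marginal \(\Law(X_{T-\tau}\mid B=b)\). Changing variables to \(t=T-\tau\), Girsanov's theorem gives
\[
\KL\big(\PP^{Y^{*,b}}\Vert\PP^{\hat Y^{b}}\big)=\tfrac12\int_{t_0}^{t^*}\E\big[\|P_\parallel(s^{*,b}_t-s_t)(X_t)\|^2\mid B=b\big]\,\dd t .
\]
Tweedie's formula, applied both conditionally and unconditionally, gives
\[
s^{*,b}_t(x)-s_t(x)=\tfrac1t\big(\E[Z\mid X_t=x,Z^\perp=b]-\E[Z\mid X_t=x]\big).
\]
Averaging over \(B\) therefore yields
\[
\E_B\KL=\int_{t_0}^{t^*}\frac{\Delta_\parallel(t)}{2t^2}\,\dd t,\qquad \Delta_\parallel(t):=\mathrm{mmse}(Z^\parallel\mid X_t)-\mathrm{mmse}(Z^\parallel\mid X_t,Z^\perp).
\]
Here I use the orthogonality identity \(\E\|\E[U\mid\mathcal G]-\E[U\mid\mathcal F]\|^2=\mathrm{mmse}(U\mid\mathcal F)-\mathrm{mmse}(U\mid\mathcal G)\) for \(\mathcal F\subset\mathcal G\).

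\emph{Step 2 (I--MMSE derivative).} Write \(J(t):=I(Z^\parallel;Z^\perp\mid X_t)\). I decompose it as
\[
J(t)=I(Z^\parallel;X_t\mid Z^\perp)-I(Z;X_t)+I(Z^\perp;X_t\mid Z^\parallel)+I(Z^\parallel;Z^\perp).
\]
Each of the three \(t\)-dependent terms is an information across a Gaussian channel with variance \(t\), possibly conditioned on side information. For instance, given \(Z^\perp\), the observation \(X_t\) is equivalent to \(Z^\parallel+W_t^\parallel\) plus independent noise. The I--MMSE relation in snr \(=1/t\) reads \(\tfrac{\dd}{\dd t}I=-\mathrm{mmse}/(2t^2)\). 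Using also \(\mathrm{mmse}(Z\mid X_t)=\mathrm{mmse}(Z^\parallel\mid X_t)+\mathrm{mmse}(Z^\perp\mid X_t)\), this should give
\[
J'(t)=\frac{\Delta_\parallel(t)+\Delta_\perp(t)}{2t^2},\qquad \Delta_\perp(t):=\mathrm{mmse}(Z^\perp\mid X_t)-\mathrm{mmse}(Z^\perp\mid X_t,Z^\parallel)\ge0 .
\]
Integrating from \(t_0\) to \(t^*\) gives
\[
\E_B\KL=J(t^*)-J(t_0)-\int_{t_0}^{t^*}\frac{\Delta_\perp(t)}{2t^2}\,\dd t .
\]
Dropping the last two nonnegative terms gives the upper bound \(\E_B\KL\le J(t^*)\).

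\emph{Step 3 (lower bound).} It remains to show \(\int_{t_0}^{t^*}\Delta_\perp/(2t^2)\,\dd t\le I(Z^\parallel;Z^\perp\mid X^\perp_{t^*})\). Given \(Z^\parallel\), the tangent component \(X_t^\parallel\) carries no information about \(Z^\perp\), because \(W^\parallel\) is independent of \(W^\perp\). Hence \(\mathrm{mmse}(Z^\perp\mid X_t,Z^\parallel)=\mathrm{mmse}(Z^\perp\mid X_t^\perp,Z^\parallel)\). Since \(X_t\) contains \(X_t^\perp\), we also have \(\mathrm{mmse}(Z^\perp\mid X_t)\le\mathrm{mmse}(Z^\perp\mid X_t^\perp)\). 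Together these give
\[
\Delta_\perp(t)\le \mathrm{mmse}(Z^\perp\mid X^\perp_t)-\mathrm{mmse}(Z^\perp\mid X^\perp_t,Z^\parallel).
\]
The same I--MMSE computation applied to \(K(t):=I(Z^\parallel;Z^\perp\mid X_t^\perp)\) shows that this right-hand side equals \(2t^2K'(t)\). For that computation, \(X_t^\perp\) is a Gaussian channel on \(Z^\perp\) alone, and \(K(t)=I(Z^\parallel;Z^\perp)-I(Z^\perp;X_t^\perp)+I(Z^\perp;X^\perp_t\mid Z^\parallel)\). The integral is therefore at most \(K(t^*)-K(t_0)\le K(t^*)\). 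Combined with \(J(t_0)\) from Step 2, this yields the stated lower bound.

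\emph{Main obstacle.} The main difficulty is rigor rather than algebra. Girsanov must be justified with only Assumption~\ref{assumption0}; I would use the finite-entropy/finite-energy version of Girsanov, noting that the energy \(\int\Delta_\parallel/t^2\) is finite on \([t_0,t^*]\) because all MMSEs are bounded by \(\E\|Z\|^2\). The I--MMSE derivative for conditional mutual informations also needs justification: I would invoke the Guo--Shamai--Verd\'u result conditionally on the side variable, which requires only finite second moments, and then integrate, using absolute continuity of \(J\) and \(K\) on \([t_0,t^*]\). In particular, the mutual informations must be finite at \(t_0>0\), which holds because Gaussian-channel information is bounded by \(\tfrac d2\log(1+\E\|Z\|^2/(d\,t_0))\).
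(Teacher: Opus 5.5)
Your proposal is correct and follows essentially the same route as the paper: Girsanov plus Tweedie turn the averaged path KL into an integrated tangent MMSE gap, conditional I--MMSE with the chain rule identifies that gap as the derivative of $I(Z^\parallel;Z^\perp\mid X_t)$ minus a nonnegative normal MMSE gap, and the lower bound uses the same ``tangent observation is irrelevant given $Z^\parallel$'' reduction to the normal-only channel. The differences are cosmetic: you work in $t$ rather than $\gamma=1/t$, and you get the upper bound by dropping $I(Z^\parallel;Z^\perp\mid X_{t_0})\ge 0$ on $[t_0,t^*]$, whereas the paper extends the integral to $t\to 0$; your version avoids the paper's limit $\gamma\to\infty$.
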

Theorem~\ref{thm:avg-path-kl} is stated for a general clean signal \(Z\), and shows that the
pathwise error is controlled by the conditional mutual information
\[
I\!\big(Z^\parallel;Z^\perp\mid X_{t^*}\big).
\]
To make this quantity more concrete, we now specialize to a latent Gaussian-mixture model.
This specialization is motivated by modern discrete-latent generative models for images, in
which the observed image can be viewed as a structured latent code decoded into pixel space
up to a small reconstruction error. Under this model, \(Z\) is a Gaussian perturbation of a
discrete latent variable \(S\), and the dependence term in Theorem~\ref{thm:avg-path-kl}
can be compared to the corresponding latent dependence
\[
I\!\big(S^\parallel;S^\perp\mid X_{t^*}\big).
\]
The next assumption and proposition formalize this reduction.

\begin{assumption}
\label{ass:latent_gaussian_mixture}
The clean signal \(Z\in\R^d\) admits the representation
\[
Z=S+\varepsilon N,
\]
where \(S\) is a discrete random vector taking values in a countable set
\(\Cc\subset\R^d\), \(N\sim\Nc(0,I_d)\), \(N\) is independent of \(S\), and
\(\varepsilon>0\). Equivalently, \(Z\) follows a countable Gaussian mixture
distribution whose components have means in \(\Cc\) and common covariance
\(\varepsilon^2 I_d\).

We write
\[
S^\parallel:=P_\parallel S,\qquad S^\perp:=P_\perp S,
\]
and
\[
Z^\parallel:=P_\parallel Z,\qquad Z^\perp:=P_\perp Z.
\]
In addition, we assume that the projected latent normal code has finite entropy,
\[
H(S^\perp)<\infty .
\]
Whenever Rényi-entropy bounds are invoked, we further assume that the order-\(1/2\)
Rényi entropy is finite:
\[
H_{1/2}(S^\perp)<\infty .
\]
\end{assumption}
\begin{remark}
Assumption~\ref{ass:latent_gaussian_mixture} should be viewed as a mild structural model
rather than a restrictive finite-mixture hypothesis. It says that the observed image
\(Z\in\R^d\) can be decomposed as
\[
Z=S+\varepsilon N,\qquad N\sim\Nc(0,I_d),
\]
where \(S\) captures the semantic or low-dimensional content and \(\varepsilon N\) models
small reconstruction or modeling error.

This assumption is motivated by modern discrete-latent image models. In vector-quantized
autoencoding approaches, an image is first represented by a compact latent code and then
decoded back to pixel space. For instance, TiTok represents a \(256\times256\) image using
as few as \(32\) discrete tokens, showing that high-dimensional images can often be
described through a much smaller latent representation. 
If one interprets the decoder output as the structured component \(S\) and treats the
remaining reconstruction error in pixel space as approximately Gaussian and isotropic, then
the resulting image model takes exactly the form
\[
Z=S+\varepsilon N.
\]

We do not require \(\mathcal C\) to be finite; countably infinite support is allowed,
provided the entropy quantities appearing below are finite. Thus the assumption covers both genuinely finite codebook models and more general
latent representations in which the image is concentrated near a structured set and the
residual variability is small and approximately Gaussian.

Under this viewpoint, the conditional mutual information
\[
I\!\big(S^\parallel;S^\perp\mid X_{t^*}\big)
\]
measures the residual coupling between tangent and normal latent directions after observing
the image through an effective Gaussian channel. This makes the latent formulation
particularly well aligned with modern tokenized image representations: the theorem is not
trying to model raw pixels directly as discrete objects, but rather to exploit the fact that
pixel-space images are often well approximated by a structured latent representation plus a
small Gaussian perturbation.
\end{remark}

\begin{proposition}
\label{prop:latent_cmi_dpi}
Let Assumption~\ref{ass:latent_gaussian_mixture} be in force. Then, for every \(t\ge 0\),
\[
I\!\big(Z^\parallel;Z^\perp\mid X_t\big)
\le
I\!\big(S^\parallel;S^\perp\mid X_t\big).
\]
In particular, at the safe time \(t^*\),
\[
I\!\big(Z^\parallel;Z^\perp\mid X_{t^*}\big)
\le
I\!\big(S^\parallel;S^\perp\mid X_{t^*}\big).
\]
\end{proposition}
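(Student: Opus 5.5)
The plan is a conditional data-processing argument: enlarge both arguments of the mutual information by the latent codes, then use the chain rule to show that everything beyond the latent term vanishes. This works because the noise splits into independent normal and tangent pieces.

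\textbf{Step 1 (monotonicity).} Fix $t\ge 0$ and write $X=X_t$. By the chain rule for conditional mutual information,
\[
I\big(Z^\parallel;Z^\perp\mid X\big)\le I\big(Z^\parallel,S^\parallel;\,Z^\perp,S^\perp\mid X\big).
\]
Expanding the right-hand side, first in the left argument and then in the right, gives
\[
I\big(S^\parallel;S^\perp\mid X\big)+I\big(S^\parallel;Z^\perp\mid X,S^\perp\big)+I\big(Z^\parallel;S^\perp\mid X,S^\parallel\big)+I\big(Z^\parallel;Z^\perp\mid X,S\big).
\]
It then suffices to show that the last three terms are zero.

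\textbf{Step 2 (factorization).} Write $X=S+\varepsilon N+W_t$ with $N$ and $W_t$ independent standard Gaussians that are also independent of $S$. Since $P_\parallel$ and $P_\perp$ are complementary orthogonal projections, the pair $(P_\parallel N,P_\parallel W_t)$ is independent of $(P_\perp N,P_\perp W_t)$. Therefore the joint law of $(S,Z,X)$ factorizes as
\[
p(s^\parallel,s^\perp)\,k^\parallel(z^\parallel,x^\parallel\mid s^\parallel)\,k^\perp(z^\perp,x^\perp\mid s^\perp),
\]
where $k^\parallel$ is the Gaussian kernel of $(S^\parallel+\varepsilon N^\parallel,\;S^\parallel+\varepsilon N^\parallel+W^\parallel_t)$ and $k^\perp$ is defined in the same way on the normal space. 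Here $X$ is equivalent to the pair $(X^\parallel,X^\perp)$.

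This factorization gives the Markov structure
\[
Z^\parallel \;\text{---}\; (S^\parallel,X^\parallel)\;\text{---}\;(S^\perp,X^\perp,Z^\perp).
\]
Concretely, the conditional density of $Z^\parallel$ given all other variables is proportional to $k^\parallel(z^\parallel,x^\parallel\mid s^\parallel)$, which depends only on $(s^\parallel,x^\parallel)$. The symmetric statement holds for $Z^\perp$.

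\textbf{Step 3 (the three extra terms vanish).}
\begin{itemize}
\item $I(Z^\parallel;S^\perp\mid X,S^\parallel)=0$, because given $(X,S^\parallel)$ the variable $Z^\parallel$ is independent of $S^\perp$.
\item $I(S^\parallel;Z^\perp\mid X,S^\perp)=0$, by the symmetric statement for $Z^\perp$.
\item $I(Z^\parallel;Z^\perp\mid X,S)=0$, because given $S$ and $X$ the two kernels decouple.
\end{itemize}
Combining this with Step 1 yields $I(Z^\parallel;Z^\perp\mid X_t)\le I(S^\parallel;S^\perp\mid X_t)$ for every $t\ge0$. Specializing to $t=t^*$ gives the stated consequence.

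\textbf{Expected main obstacle.} The delicate point is rigor in the mixed discrete/continuous setting, where $S$ is countable and $Z,X$ have densities. I would use the general (Dobrushin) definition of conditional mutual information, for which the chain rule holds in $[0,\infty]$. I would also note that
\[
I(S^\parallel;S^\perp\mid X)\le H(S^\perp)<\infty,
\]
so no $\infty-\infty$ issue arises in the expansion. The conditional-independence claims in Step 3 should be verified at the level of regular conditional densities with respect to counting measure times Lebesgue measure, which the explicit factorization in Step 2 supplies.
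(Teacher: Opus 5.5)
Your proof is correct and follows essentially the same route as the paper. Both arguments rest on the fact that, conditionally on $X_t$, the map from $(S^\parallel,S^\perp)$ to $(Z^\parallel,Z^\perp)$ splits into independent tangent and normal channels, because the projected noises are independent. The paper invokes the data-processing inequality for such product channels pointwise in $X_t=x$ and then integrates. Your chain-rule expansion, with its three vanishing terms, is an explicit proof of that same inequality. Your finiteness remark is not actually needed: every term in the expansion is nonnegative, so the chain rule holds in $[0,\infty]$ without using $H(S^\perp)<\infty$.
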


\begin{proof}
Fix \(t\ge0\). Under Assumption~\ref{ass:latent_gaussian_mixture},
\[
Z=S+\varepsilon N,
\qquad
X_t=Z+W_t=S+\varepsilon N+W_t,
\]
where \(N\) and \(W_t\) are independent standard Gaussian noises. Since
\(P_\parallel\) and \(P_\perp\) are orthogonal projections, the tangent and normal
noise components are independent. Hence, for every regular conditional law given
\(X_t=x\),
\[
\Law\!\big(Z^\parallel,Z^\perp \mid S^\parallel,S^\perp,X_t=x\big)
=
\Law\!\big(Z^\parallel \mid S^\parallel,X_t^\parallel=x^\parallel\big)
\otimes
\Law\!\big(Z^\perp \mid S^\perp,X_t^\perp=x^\perp\big).
\]
Thus, conditionally on \(X_t=x\), the pair
\((Z^\parallel,Z^\perp)\) is obtained from
\((S^\parallel,S^\perp)\) by applying two separate conditionally independent
channels: one from \(S^\parallel\) to \(Z^\parallel\), and one from
\(S^\perp\) to \(Z^\perp\). Therefore, by the data-processing inequality for
mutual information under product channels,
\[
I_{\Law(\cdot\mid X_t=x)}\!\big(Z^\parallel;Z^\perp\big)
\le
I_{\Law(\cdot\mid X_t=x)}\!\big(S^\parallel;S^\perp\big)
\]
for \(X_t\)-almost every \(x\). Integrating this inequality with respect to the
law of \(X_t\) gives
\[
I\!\big(Z^\parallel;Z^\perp\mid X_t\big)
\le
I\!\big(S^\parallel;S^\perp\mid X_t\big).
\]
The statement at \(t=t^*\) is the same inequality evaluated at the safe time.
\end{proof}

\begin{remark}
\label{rem:pathwise_latent_interpretation}
Theorem~\ref{thm:avg-path-kl} shows that the pathwise error incurred by replacing the
unknown conditional tangent score with the unconditional tangent score is controlled by
\[
I\!\big(Z^\parallel;Z^\perp\mid X_{t^*}\big).
\]
Under Assumption~\ref{ass:latent_gaussian_mixture}, Proposition~\ref{prop:latent_cmi_dpi}
further yields
\[
I\!\big(Z^\parallel;Z^\perp\mid X_{t^*}\big)
\;\le\;
I\!\big(S^\parallel;S^\perp\mid X_{t^*}\big).
\]
Hence it is enough to control the latent conditional mutual information
\[
I\!\big(S^\parallel;S^\perp\mid X_{t^*}\big).
\]

the quantity
\[
I\!\big(S^\parallel;S^\perp\mid X_{t^*}\big)
\]
measures the residual dependence between the latent tangent and latent normal components
after observing \(S\) through a Gaussian channel of variance \(t^*+\varepsilon^2\). If
these two latent components are nearly conditionally independent given \(X_{t^*}\), then
the surrogate guided reverse dynamics are close to the true conditional reverse dynamics
in path-space KL.

In particular, if
\[
S^\parallel \perp S^\perp \mid X_{t^*},
\]
then
\[
I\!\big(S^\parallel;S^\perp\mid X_{t^*}\big)=0,
\]
and therefore also
\[
I\!\big(Z^\parallel;Z^\perp\mid X_{t^*}\big)=0.
\]
The upper bound in Theorem~\ref{thm:avg-path-kl} then implies that the average pathwise
KL discrepancy is zero.

More generally, the elementary bounds
\[
I\!\big(S^\parallel;S^\perp\mid X_{t^*}\big)
\le H\!\big(S^\perp\mid X_{t^*}\big),
\qquad
I\!\big(S^\parallel;S^\perp\mid X_{t^*}\big)
\le H\!\big(S^\parallel\mid X_{t^*}\big)
\]
show that the pathwise error is small whenever the safe-time observation \(X_{t^*}\)
almost determines the latent normal component \(S^\perp\) (equivalently, the latent affine
level), or almost determines the latent tangent component \(S^\parallel\). In particular,
if \(X_{t^*}\) already localizes the correct latent level well, then there is little room
for tangent/normal ambiguity, and the surrogate reverse dynamics are accurate.

The lower bound in Theorem~\ref{thm:avg-path-kl} remains informative as well: it shows that
the pathwise KL is governed by the same conditional-dependence mechanism, up to correction
terms involving coarser observations. Finally, the result remains dimension-free: the error
is controlled not explicitly by the ambient dimension \(d\) or the number of measurements
\(m\), but by the residual latent dependence between tangent and normal directions at the
safe time.
\end{remark}
\medskip

We now formalize the assumptions needed to control the initialization error at time \(t^*\)
and to express the pathwise term in latent information-theoretic form.


\begin{assumption}
\label{ass:quad_log_t0_kernel}
Define
\[
\Cc^\perp := P_\perp \Cc .
\]
For each \(t\ge t_0\) and \(c\in\Cc^\perp\), let
\[
r_t^c := \Law(X_t^\parallel \mid S^\perp  = c).
\]
We assume that for every \(t\ge t_0\) there exists a finite constant \(L_t<\infty\) such that
for all \(c_1,c_2\in\Cc^\perp\),
\begin{equation}
\label{eq:quad_log_t0_kernel}
\KL\!\left(r_t^{c_1}\,\middle\|\,r_t^{c_2}\right)
\le
L_t\,\|c_1-c_2\|_2^2 .
\end{equation}
\end{assumption}

\begin{remark}
Assumption~\ref{ass:quad_log_t0_kernel} says that, at each diffusion time \(t\ge t_0\),
the tangent conditional law \(\Law(X_t^\parallel\mid S^\perp=c)\) varies in a quantitatively
controlled way with the latent normal code \(c\in\Cc^\perp\), with sensitivity measured in
KL divergence.

Moreover, the map \(t\mapsto L_t\) is non-increasing. Indeed, if \(t\ge s\ge t_0\), then
\[
X_t^\parallel = X_s^\parallel + P_\parallel(W_t-W_s),
\]
where \(W_t-W_s\sim\Nc(0,(t-s)I_d)\) is independent of \((X_s,S)\). Hence, conditional on
\(S^\perp=c\), the law of \(X_t^\parallel\) is obtained from that of \(X_s^\parallel\) by the
same Gaussian Markov kernel \(K_{s,t}\), independent of \(c\):
\[
r_t^c = r_s^c K_{s,t}.
\]
Therefore, by the data-processing inequality for KL divergence,
\[
\KL(r_t^{c_1}\|r_t^{c_2})
=
\KL(r_s^{c_1}K_{s,t}\|r_s^{c_2}K_{s,t})
\le
\KL(r_s^{c_1}\|r_s^{c_2}).
\]
Dividing by \(\|c_1-c_2\|_2^2\) yields \(L_t\le L_s\).
\end{remark}

\medskip

We now state our main quantitative guarantee for the \emph{terminal tangent marginal}.
Recall that the ideal conditional reverse-time dynamics
\(\{Y_\tau^{*,b}\}_{\tau\in[\tau^*,\,T-t_0]}\), initialized from the true conditional marginal
at time \(\tau^*\), and the practical surrogate procedure
\(\{\hat Y_\tau^{\,b}\}_{\tau\in[\tau^*,\,T-t_0]}\), obtained by the two-stage initialization
at time \(t^*\) followed by the surrogate guided reverse dynamics, induce terminal tangent laws
\[
\mu_{T-t_0}^{*,b}
:= \Law\!\big(P_\parallel Y_{T-t_0}^{*,b}\big),
\qquad
\hat\mu_{T-t_0}^{\,b}
:= \Law\!\big(P_\parallel \hat Y_{T-t_0}^{\,b}\big).
\]
Our goal is to bound the averaged terminal discrepancy
\[
\E_B\!\left[\KL\!\big(\mu_{T-t_0}^{*,B}\,\big\|\,\hat\mu_{T-t_0}^{\,B}\big)\right].
\]

\begin{theorem}
\label{thm:total_KL_shannon_small}
Let Assumptions~\ref{assumption0}, \ref{ass:latent_gaussian_mixture}, and
\ref{ass:quad_log_t0_kernel} be in force. Let
\[
H := H(S^\perp),
\]
and assume \(H<\infty\). Fix a safe noise level \(t^*\in(t_0,T)\)
(equivalently, \(\tau^*=T-t^*\)). Then
\begin{equation}
\label{eq:total_KL_shannon_small}
\E_B\!\left[\KL\!\big(\mu_{T-t_0}^{*,B}\,\middle\|\,\hat\mu_{T-t_0}^{\,B}\big)\right]
\le
4L_{t^*}(t^*+\varepsilon^2)\,H
+
I\!\big(S^\parallel;\,S^\perp \mid X_{t^*}\big).
\end{equation}
\end{theorem}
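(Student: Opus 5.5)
The plan is to split the terminal error into an initialization term and a pathwise term via the chain rule for KL on path space, then bound each piece separately. Concretely, for fixed $b$, let $\PP^{Y^{*,b}}$ be the ideal path law on $[\tau^*,T-t_0]$ started from $\Law(X_{t^*}\mid B=b)$. Let $\hat\PP^{b}$ be the surrogate path law started from $\hat p^{\,b}_{t^*}$. Let $\PP^{\hat Y^{b}}$ be the surrogate path law started from the true conditional marginal, as in Theorem~\ref{thm:avg-path-kl}. By data processing under the projection $P_\parallel Y_{T-t_0}$,
\[
\KL(\mu^{*,b}_{T-t_0}\|\hat\mu^{\,b}_{T-t_0})\le \KL(\PP^{Y^{*,b}}\|\hat\PP^{b}).
\]
Since both laws on the right share the surrogate transition kernel only in the second argument, I will use the chain rule conditioning on the initial state:
\[
\KL(\PP^{Y^{*,b}}\|\hat\PP^{b})=\KL\big(\Law(X_{t^*}\mid B=b)\,\|\,\hat p^{\,b}_{t^*}\big)+\E\big[\KL(\PP^{Y^{*,b}}_{\cdot\mid y_0}\|\PP^{\hat Y^{b}}_{\cdot\mid y_0})\big].
\]
The second term is exactly $\KL(\PP^{Y^{*,b}}\|\PP^{\hat Y^{b}})$, because those two path laws share their initial law. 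Averaging over $B$ and invoking Theorem~\ref{thm:avg-path-kl} together with Proposition~\ref{prop:latent_cmi_dpi} bounds this pathwise piece by $I(S^\parallel;S^\perp\mid X_{t^*})$.

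For the initialization term, the normal marginals coincide. Both densities equal $\Nc(b,t^*P_\perp)$ in $x^\perp$, so the KL reduces to an averaged tangent-conditional divergence:
\[
\E_B\,\E\big[\KL\big(\Law(X^\parallel_{t^*}\mid X^\perp_{t^*},B)\,\|\,\Law(X^\parallel_{t^*}\mid X^\perp_{t^*})\big)\big]=I(X^\parallel_{t^*};Z^\perp\mid X^\perp_{t^*}).
\]
This is a conditional mutual information. I will reduce it to the latent code. Given $S^\perp$, the variable $Z^\perp$ only adds independent normal noise, which is independent of $X^\parallel$. So by a Markov/data-processing argument it is bounded by $I(X^\parallel_{t^*};S^\perp\mid X^\perp_{t^*})$, and then by $I(X^\parallel_{t^*};S^\perp,X^\perp_{t^*})$ up to the usual chain-rule manipulations. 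Next I write this mutual information as an average KL to a mixture and apply convexity:
\[
I\le \E\Big[\KL\big(\Law(X^\parallel_{t^*}\mid S^\perp=c)\,\|\,\textstyle\sum_{c'}w(c')\,\Law(X^\parallel_{t^*}\mid S^\perp=c')\big)\Big].
\]
Here $w$ is the posterior of $S^\perp$ given $X^\perp_{t^*}$. I will use a weighted-mixture bound of the form
\[
\KL(r^c\|\textstyle\sum w_{c'}r^{c'})\le -\log w_c+\text{(local term)},
\]
combined with the pairwise bound $L_{t^*}\|c-c'\|^2$ from Assumption~\ref{ass:quad_log_t0_kernel}. The posterior on $S^\perp$ given $X^\perp_{t^*}=S^\perp+\sqrt{t^*+\varepsilon^2}\,\xi$ is Gaussian-weighted. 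Under it, $\E\|S^\perp-\tilde S^\perp\|^2$ for two posterior draws $S^\perp,\tilde S^\perp$ is at most $4(t^*+\varepsilon^2)$ times an entropy-type quantity. This is where $H(S^\perp)$ enters: on the event where the posterior does not localize, the cost is charged to $-\log$ of prior mass, whose expectation is $H$.

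The main obstacle is producing exactly the constant $4L_{t^*}(t^*+\varepsilon^2)H$. My first attempt avoids the $\log$ term altogether. I bound $\KL(r^c\|\sum_{c'} w_{c'} r^{c'})\le\sum_{c'}w_{c'}L_{t^*}\|c-c'\|^2$ by convexity of KL in its second argument. This leaves $L_{t^*}\,\E\|S^\perp-\tilde S^\perp\|^2$, where $\tilde S^\perp$ is an independent posterior draw. I then need the Gaussian-channel estimate
\[
\E\|S^\perp-\tilde S^\perp\|^2\le 4(t^*+\varepsilon^2)H(S^\perp).
\]
I would prove it by writing
\[
\|c-c'\|^2/(2\sigma^2)\le \log(p(c)/p(c'))+\text{likelihood ratio terms},
\]
and integrating against the posterior. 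The integral of the likelihood ratio terms is controlled by twice the mutual information $I(S^\perp;X^\perp_{t^*})\le H$. Should this estimate fail with that constant, I would fall back to the Bhattacharyya/Rényi route, which is consistent with Assumption~\ref{ass:latent_gaussian_mixture}'s mention of $H_{1/2}$.
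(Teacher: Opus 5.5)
Your architecture matches the paper's: data processing to path space, the chain rule that splits off the time-\(t^*\) initialization KL, Theorem~\ref{thm:avg-path-kl} with Proposition~\ref{prop:latent_cmi_dpi} for the pathwise part, and reduction of the initialization term to \(L_{t^*}\,\E\|S^\perp-\tilde S^\perp\|_2^2\), where \(\tilde S^\perp\) is a conditionally independent posterior draw given \(X^\perp_{t^*}\).

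Your route to that quantity is a valid alternative to the paper's mixture-coupling lemma with the product coupling, and it lands on the same expression. You use the identity \(\E_B[\KL(p^{*,B}_{t^*}\|\hat p^{\,B}_{t^*})]=I(X^\parallel_{t^*};Z^\perp\mid X^\perp_{t^*})\), then data processing to \(I(X^\parallel_{t^*};S^\perp\mid X^\perp_{t^*})\), then convexity of KL in its second argument. Drop the detour through \(I(X^\parallel_{t^*};S^\perp,X^\perp_{t^*})\). That quantity corresponds to prior rather than posterior mixture weights, and it is in general larger than the posterior-weighted average KL you write next; that average equals \(I(X^\parallel_{t^*};S^\perp\mid X^\perp_{t^*})\) exactly.

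The genuine gap is the estimate that carries the whole constant, \(\E\|S^\perp-\tilde S^\perp\|_2^2\le 4\sigma_*^2H(S^\perp)\) with \(\sigma_*^2=t^*+\varepsilon^2\). Your sketch does not establish it. The natural exact form of your pointwise inequality is
\[
\frac{\|c-c'\|_2^2}{2\sigma_*^2}
=\log\frac{\pi_x(c)}{\pi_x(c')}+\log\frac{p(c')}{p(c)}-\frac{\langle x-c,\,c-c'\rangle}{\sigma_*^2},
\]
with \(p\) the prior and \(\pi_x\) the posterior of \(S^\perp\). Integrate it against \((S^\perp,X^\perp_{t^*},\tilde S^\perp)\). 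Both log-ratio terms vanish by exchangeability, since \((S^\perp,X^\perp_{t^*})\) and \((\tilde S^\perp,X^\perp_{t^*})\) have the same law. The inner-product term returns exactly \(\E\tr\Cov(S^\perp\mid X^\perp_{t^*})/\sigma_*^2\), which is the expectation of the left-hand side again. So nothing is ``controlled by \(2I\)''; the sketch reproduces the quantity it is meant to bound and never brings in \(H(S^\perp)\). Your Bhattacharyya/R\'enyi fallback does not rescue the statement either: it only gives a bound in terms of \(H_{1/2}(S^\perp)\ge H(S^\perp)\), with a worse constant, so it cannot yield \eqref{eq:total_KL_shannon_small}.

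The missing ingredient is the I--MMSE relation. Let \(Y_\gamma=\sqrt{\gamma}\,S^\perp+G\) with \(\gamma=1/\sigma_*^2\).
\begin{enumerate}
\item Conditional i.i.d.\ draws give \(\E\|S^\perp-\tilde S^\perp\|_2^2=2\,\E\tr\Cov(S^\perp\mid X^\perp_{t^*})=2\,\mmse(\gamma)\).
\item I--MMSE and monotonicity of the MMSE give \(I(S^\perp;Y_\gamma)=\frac12\int_0^\gamma\mmse(u)\,\dd u\ge\frac{\gamma}{2}\,\mmse(\gamma)\).
\item Discreteness of \(S^\perp\) gives \(I(S^\perp;Y_\gamma)\le H(S^\perp)\).
\end{enumerate}
Hence \(\mmse(\gamma)\le 2\sigma_*^2H(S^\perp)\), and the factor \(4L_{t^*}(t^*+\varepsilon^2)H\) follows. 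This is exactly the paper's Lemma~\ref{lem:posterior_resample_shannon}.
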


\begin{remark}
The bound \eqref{eq:total_KL_shannon_small} separates two distinct sources of error.

The first term,
\[
4L_{t^*}(t^*+\varepsilon^2)\,H,
\]
is the \emph{initialization error} at the safe time \(t^*\). It reflects the discrepancy
between the true conditional marginal at time \(t^*\) and the two-stage approximation used
to initialize the reverse dynamics. The factor \(L_{t^*}\) measures how sensitive the tangent
conditional law is to changes in the latent normal code, while \(H=H(S^\perp)\) measures the
complexity of the latent normal-code prior.

The second term,
\[
I\!\big(S^\parallel;\,S^\perp \mid X_{t^*}\big),
\]
is the \emph{pathwise error}. By Theorem~\ref{thm:avg-path-kl},
\[
\E_B\!\big[\KL(\PP^{Y^{*,B}}\|\PP^{\hat Y^{\,B}})\big]
\le
I\!\big(Z^\parallel;Z^\perp\mid X_{t^*}\big),
\]
and Proposition~\ref{prop:latent_cmi_dpi} gives
\[
I\!\big(Z^\parallel;Z^\perp\mid X_{t^*}\big)
\le
I\!\big(S^\parallel;S^\perp\mid X_{t^*}\big).
\]
Thus the pathwise discrepancy is controlled by the residual dependence between the latent
tangent and latent normal components after observation through the effective Gaussian channel.
\end{remark}


Theorem~\ref{thm:total_KL_shannon_small} applies without any geometric separation assumption
on the set of admissible latent normal codes \(\Cc^\perp\). In that general case, the noisy
normal observation may remain ambiguous among several nearby latent codes. We now show that,
if the admissible codes are uniformly separated, then such confusions become rare and both
the initialization and pathwise contributions become exponentially small.

\begin{assumption}
\label{ass:delta_sep_S}
There exists \(\delta>0\) such that for all distinct \(c,\tilde c\in\Cc^\perp\),
\[
\|c-\tilde c\|_2 \ge \delta.
\]
\end{assumption}

Assumption~\ref{ass:delta_sep_S} enforces a minimum spacing between admissible latent normal
codes. Since
\[
X_{t^*}^\perp = S^\perp + \sqrt{t^*+\varepsilon^2}\,G,
\qquad G\sim\Nc(0,I_d),
\]
confusing the true latent code \(c\) with a different code \(\tilde c\) requires a Gaussian
fluctuation of order at least \(\delta\), which occurs with probability
\(\exp(-\Omega(\delta^2/(t^*+\varepsilon^2)))\). This separation upgrades the Shannon-scale
control above to exponentially small error bounds.

\begin{theorem}
\label{thm:total_error_sep_renyi_single}
Let Assumptions~\ref{assumption0}, \ref{ass:latent_gaussian_mixture},
\ref{ass:quad_log_t0_kernel}, and \ref{ass:delta_sep_S} be in force. Let
\[
H_{1/2} := H_{1/2}(S^\perp)
=
2\log \sum_{c\in\Cc^\perp}\sqrt{p_{S^\perp}(c)},
\qquad
\sigma_*^2 := t^*+\varepsilon^2,
\]
and fix a safe noise level \(t^*\in(t_0,T)\). Then
\begin{align}
\label{eq:total_error_sep_renyi_single}
\E_B\!\Big[
\KL\!\big(p_{t^*}^{*,B}\,\Vert\,\hat p_{t^*}^{\,B}\big)
+
\KL\!\big(\PP^{Y^{*,B}}\Vert \PP^{\hat Y^{\,B}}\big)
\Big]
\le\;&
L_{t^*}\Big(\frac{\delta^2}{2}+4\sigma_*^2\Big)
\exp\!\Big(H_{1/2}-\frac{\delta^2}{8\sigma_*^2}\Big)
\nonumber\\
&\quad
+
2\exp\!\Big(H_{1/2}-\frac{\delta^2}{8\sigma_*^2}\Big).
\end{align}
Consequently, by data processing,
\begin{equation}
\label{eq:terminal_sep_renyi_single}
\E_B\!\Big[
\KL\!\big(\mu_{T-t_0}^{*,B}\,\Vert\,\hat\mu_{T-t_0}^{\,B}\big)
\Big]
\le
\Big[
L_{t^*}\Big(\frac{\delta^2}{2}+4\sigma_*^2\Big)+2
\Big]
\exp\!\Big(H_{1/2}-\frac{\delta^2}{8\sigma_*^2}\Big).
\end{equation}
\end{theorem}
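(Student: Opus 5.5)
We bound the two terms on the left of \eqref{eq:total_error_sep_renyi_single} separately and then add; \eqref{eq:terminal_sep_renyi_single} follows from data processing.

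\emph{Data-processing reduction.} The map from the pair (initial state, path) to the terminal tangent component is measurable. By the chain rule for KL, the terminal KL is at most the initial KL plus the pathwise KL of the two processes started from a common point. So it suffices to control the initialization term $\E_B\KL(p_{t^*}^{*,B}\|\hat p_{t^*}^{\,B})$ and the pathwise term.

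\emph{Pathwise term.} By Theorem~\ref{thm:avg-path-kl} and Proposition~\ref{prop:latent_cmi_dpi}, the pathwise term is at most $I(S^\parallel;S^\perp\mid X_{t^*})$. We bound this by $H(S^\perp\mid X_{t^*})$, and then by $H(S^\perp\mid X^\perp_{t^*})$, since conditioning on less cannot decrease entropy. The channel $X_{t^*}^\perp=S^\perp+\sigma_*G^\perp$ is a Gaussian channel on a $\delta$-separated codebook, so we need an exponential bound on the equivocation.

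For the MAP/posterior error we use the Bhattacharyya (order-$1/2$ Rényi) bound. The pairwise Bhattacharyya coefficient between $\Nc(c,\sigma_*^2I)$ and $\Nc(\tilde c,\sigma_*^2I)$ is $\exp(-\|c-\tilde c\|^2/(8\sigma_*^2))\le e^{-\delta^2/(8\sigma_*^2)}$. Summing gives
\[
\sum_{c\neq\tilde c}\sqrt{p(c)p(\tilde c)}\,e^{-\delta^2/8\sigma_*^2}\le e^{H_{1/2}-\delta^2/8\sigma_*^2}.
\]
To turn this into an entropy bound, we write
\[
H(S^\perp\mid X^\perp)=\E\big[\log(1+\textstyle\sum_{\tilde c\ne S^\perp}\tfrac{p(\tilde c)\phi(X-\tilde c)}{p(S^\perp)\phi(X-S^\perp)})\big],
\]
use $\log(1+u)\le \min(u,\ldots)$, and bound via $\sqrt{u}$-type arguments: $\log(1+u)\le 2\sqrt u$. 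Taking expectations of the square roots of likelihood ratios yields exactly the Bhattacharyya sums, which gives the factor $2\exp(H_{1/2}-\delta^2/8\sigma_*^2)$. This is the step I expect to be delicate: choosing the right inequality ($\log(1+u)\le 2\sqrt u$ together with $\sqrt{\sum}\le\sum\sqrt{}$) to land on constant $2$.

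\emph{Initialization term.} We have $p^{*,b}_{t^*}=p(x^\perp\mid b)\,p(x^\parallel\mid x^\perp,b)$ and $\hat p^{\,b}_{t^*}=p(x^\perp\mid b)\,p(x^\parallel\mid x^\perp)$, so the KL reduces to the expected KL between tangent conditionals. Under the latent model, given $S^\perp$, the tangent $X^\parallel$ is independent of the normal noise. Hence $p(x^\parallel\mid x^\perp,b)$ is essentially $r^{c}_{t^*}$ with $c$ the latent code consistent with $b$, while $p(x^\parallel\mid x^\perp)=\sum_{\tilde c}\pi(\tilde c\mid x^\perp)\,r^{\tilde c}_{t^*}$ is a posterior mixture. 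By convexity of KL in the second argument (more precisely, joint convexity),
\[
\KL\le\sum_{\tilde c}\pi(\tilde c\mid x^\perp)\,L_{t^*}\|c-\tilde c\|^2
\]
using Assumption~\ref{ass:quad_log_t0_kernel}. Taking expectations, this is $L_{t^*}\,\E\sum_{\tilde c\ne c}\pi(\tilde c\mid X^\perp)\|c-\tilde c\|^2$.

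We bound $\pi(\tilde c\mid x)\le\sqrt{p(\tilde c)\phi(x-\tilde c)/(p(c)\phi(x-c))}$. Its expectation under $\phi(\cdot-c)$ gives $\sqrt{p(\tilde c)/p(c)}\,e^{-\|c-\tilde c\|^2/8\sigma_*^2}$. The factor $\|c-\tilde c\|^2e^{-\|c-\tilde c\|^2/8\sigma_*^2}$ is then controlled by splitting off $e^{-\|\cdot\|^2/16\sigma_*^2}$, or by the elementary bound $r^2e^{-r^2/8\sigma^2}\le(\delta^2/2+4\sigma^2)e^{-\delta^2/8\sigma^2}$ for $r\ge\delta$. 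Strictly this requires $\delta^2\ge 8\sigma_*^2$ as stated; otherwise one adjusts the constants by slack in the exponent, which needs checking. Summing with weights $p(c)$ gives the first term of \eqref{eq:total_error_sep_renyi_single}.
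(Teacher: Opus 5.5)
Your data-processing reduction and pathwise bound are fine and follow the paper. Your representation $H(S^\perp\mid X^\perp_{t^*})=\E\bigl[-\log\pi(S^\perp\mid X^\perp_{t^*})\bigr]=\E\log\bigl(1+\sum_{\tilde c\neq S^\perp}\ell_{\tilde c}\bigr)$, with likelihood ratios $\ell_{\tilde c}$ taken against the \emph{true} code, is a bit cleaner than the paper's pointwise decomposition; with $\log(1+u)\le\sqrt u$ it even gives constant $1$ rather than $2$.

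There is one imprecision in the initialization step. Since $\varepsilon>0$, $B=S^\perp+\varepsilon N^\perp$ does not determine a code, so $p(x^\parallel\mid x^\perp,b)=\sum_c\pi_b(c)\,r^c_{t^*}$ is itself a mixture, not $r^c_{t^*}$ for a single code. You need joint convexity with the product coupling $\pi_b\otimes\pi_{x^\perp}$ (the paper's mixture-coupling lemma). Averaging over $B$ along $S^\perp\to B\to X^\perp_{t^*}$ then yields exactly $L_{t^*}\E\sum_{\tilde c}\pi(\tilde c\mid X^\perp_{t^*})\|S^\perp-\tilde c\|^2$, so this is repairable.

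The genuine gap is the constant in the initialization term. Your inequality $r^2e^{-r^2/(8\sigma_*^2)}\le(\delta^2/2+4\sigma_*^2)e^{-\delta^2/(8\sigma_*^2)}$ for $r\ge\delta$ holds only when $\delta^2\le 8\sigma_*^2$: at $r=\delta$ it reads $\delta^2\le\delta^2/2+4\sigma_*^2$. So you have the regime backwards, and it fails exactly in the large-separation regime the theorem targets. There the supremum is $\delta^2e^{-\delta^2/(8\sigma_*^2)}$, too large by the factor $\delta^2/(\delta^2/2+4\sigma_*^2)\to 2$ as $\delta^2/\sigma_*^2\to\infty$.

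There is no slack in the exponent to absorb this, since the Bhattacharyya coefficient gives exactly $\delta^2/(8\sigma_*^2)$. Splitting off $e^{-\|\cdot\|^2/(16\sigma_*^2)}$ only degrades the exponent to $\delta^2/(16\sigma_*^2)$.

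The missing factor $\tfrac12$ comes from your posterior bound. Set $a:=p(\tilde c)\phi(x-\tilde c)$ and $a_0:=p(c)\phi(x-c)$ with $\tilde c\neq c$. Then $\pi(\tilde c\mid x)\le a/(a+a_0)\le\tfrac12\sqrt{a/a_0}$ by $a+a_0\ge2\sqrt{aa_0}$, which is what the paper uses. Writing $r^2=\delta^2+v$ and using $ve^{-v/(8\sigma_*^2)}\le 8\sigma_*^2/e$,
\[
\tfrac12\,r^2e^{-r^2/(8\sigma_*^2)}\le\Big(\frac{\delta^2}{2}+\frac{4\sigma_*^2}{e}\Big)e^{-\delta^2/(8\sigma_*^2)}\qquad(r\ge\delta),
\]
and summing against $\sqrt{p(c)p(\tilde c)}\le$ (total) $e^{H_{1/2}}$ gives the stated first term in every regime.

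The paper reaches the same constant with different bookkeeping. It proves the tail bound $\PP(R\ge r)\le\tfrac12 e^{H_{1/2}-r^2/(8\sigma_*^2)}$ for $R:=\|S^\perp-\tilde S^\perp\|_2$, then integrates the layer-cake formula for $\E R^2$ using that $R\in\{0\}\cup[\delta,\infty)$. Your pairwise-sum route, once the $\tfrac12$ is kept, is equivalent and slightly more direct.
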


\begin{remark}
Theorem~\ref{thm:total_error_sep_renyi_single} is an exponential strengthening of
Theorem~\ref{thm:total_KL_shannon_small}. Under \(\delta\)-separation, the noisy normal
observation
\[
X_{t^*}^\perp = S^\perp + \sqrt{t^*+\varepsilon^2}\,G
\]
can confuse two distinct admissible latent codes only if the Gaussian perturbation produces
a normal displacement of size at least \(\delta\). This yields the exponential factor
\[
\exp\!\Big(H_{1/2}-\frac{\delta^2}{8\sigma_*^2}\Big)
\]
appearing in both \eqref{eq:total_error_sep_renyi_single} and
\eqref{eq:terminal_sep_renyi_single}.

In \eqref{eq:total_error_sep_renyi_single}, the first term
\[
L_{t^*}\Big(\frac{\delta^2}{2}+4\sigma_*^2\Big)
\exp\!\Big(H_{1/2}-\frac{\delta^2}{8\sigma_*^2}\Big)
\]
controls the \emph{initialization error}: separation forces the posterior over latent normal
codes, given \(X_{t^*}^\perp\), to concentrate near the true code, and
Assumption~\ref{ass:quad_log_t0_kernel} converts this concentration into a KL bound through
the sensitivity constant \(L_{t^*}\).

The second term
\[
2\exp\!\Big(H_{1/2}-\frac{\delta^2}{8\sigma_*^2}\Big)
\]
controls the \emph{pathwise error}: as the separation-to-noise ratio
\(\delta^2/\sigma_*^2\) increases, the normal observation essentially identifies the latent
normal code, so the surrogate guided reverse dynamics become close to the ideal conditional
reverse dynamics.

Consequently, once \(\delta^2/\sigma_*^2\) is sufficiently large compared with the effective
complexity scale \(H_{1/2}(S^\perp)\), both contributions are simultaneously small and the
expected KL error decays exponentially in the separation-to-noise ratio.
\end{remark}
\acks{
Funding in direct support of this work: none.
Competing interests and additional revenues related to this work: the authors declare no competing interests.
}
\newpage

\appendix

\section{ Proof of Theorem~\ref{thm:avg-path-kl} }

\noindent

We compare the true conditional reverse-time dynamics and the surrogate guided dynamics at the level of path measures. Since the two SDEs have the same diffusion coefficient and differ only in the tangent drift, the first task is to justify a Girsanov formula for their relative entropy. For this, in Lemma~\ref{lem:posterior-mean-linear-growth} we first prove that the posterior mean \(x\mapsto \E[Z\mid X_t=x]\) has at most linear growth; via Tweedie’s formula, this implies the required linear-growth control on the two drifts. Then Lemma~\ref{thm:girsanov-linear-growth} converts the pathwise KL divergence into an integral of the squared drift gap along the true conditional path. 

The next step is to rewrite this drift gap in statistical terms. Using Tweedie’s identity and projecting onto the tangent space, the drift difference becomes the difference between two posterior means of the tangent component \(U=P_\parallel Z\): one conditioned on the noisy observation alone, and one conditioned on the noisy observation together with the normal component \(B=P_\perp Z\). Averaging over the random level \(B\) and applying the MMSE-gap identity turns the pathwise KL bound into an integral of conditional MMSE differences. The conditional I--MMSE lemma is then used to identify this integral with a difference of conditional mutual informations, yielding the upper bound in terms of \(I(U;B\mid X_{t^*})\). 

For the lower bound, the same MMSE representation is kept on the finite interval corresponding to \(t\in[t_0,t^*]\). One then isolates the error term involving the MMSE of the normal component \(B\), and controls it by projecting the observation onto the normal subspace. The key observation is that, conditional on \(U\), the parallel observation carries no information about \(B\), so the relevant MMSE gap can be reduced to a Gaussian channel only in the normal directions. Applying the conditional I--MMSE identity once more to this reduced channel yields the correction term involving \(I(U;B\mid X_{t^*}^\perp)\), and this gives the stated lower bound. 
\begin{lemma}\label{lem:posterior-mean-linear-growth}
Let assumption~\ref{assumption0} be in force. Fix $t\ge 0$ and set
\[
m_t(x):=\mathbb E[Z\mid X_t=x],\qquad x\in\mathbb R^d.
\]
Then there exists a constant $C_t<\infty$ such that
\[
|m_t(x)|\le C_t(1+|x|),\qquad x\in\mathbb R^d.
\]
In particular, the posterior mean $x\mapsto \mathbb E[Y\mid X_t=x]$ has at most linear growth.
\end{lemma}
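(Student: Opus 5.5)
The plan is to work directly with the Bayes formula for the posterior and compare it with a Gaussian weight centred at \(x\). The case \(t=0\) is trivial, since \(X_0=Z\) gives \(m_0(x)=x\). For \(t>0\), write \(\varphi_t(u)=\exp(-|u|^2/(2t))\). For every \(x\in\R^d\) we take the version of the posterior
\[
q_x(\dd z)=\frac{\varphi_t(x-z)\,p_0(\dd z)}{\int \varphi_t(x-z')\,p_0(\dd z')},
\qquad m_t(x)=\int z\,q_x(\dd z).
\]
This is well defined everywhere because the denominator is strictly positive. We will show that \(q_x\) has a first moment and bound it. The last sentence of the lemma, with \(Y\) read as \(Z\), is the same statement.

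First we fix a radius \(R<\infty\) with \(p_0(\{|z|\le R\})\ge 1/2\); such an \(R\) exists for any probability measure. Set \(D:=|x|+R\). On the ball \(|z|\le R\) we have \(|x-z|\le D\). This gives the denominator lower bound
\[
\int \varphi_t(x-z)\,p_0(\dd z)\ \ge\ \tfrac12 e^{-D^2/(2t)}.
\]

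Next we bound \(|m_t(x)-x|\le \int |z-x|\,q_x(\dd z)\) by splitting into \(\{|z-x|\le 2D\}\) and its complement. The first region contributes at most \(2D\). On the complement \(D^2\le |z-x|^2/4\), so the ratio of numerator weight to the denominator bound is at most
\[
2|z-x|\exp\!\Big(-\tfrac{|z-x|^2}{2t}+\tfrac{D^2}{2t}\Big)\le 2|z-x|\,e^{-3|z-x|^2/(8t)}\le c\sqrt t,
\]
where \(c\) is an absolute constant (the supremum of \(2r e^{-3r^2/8}\)). Integrating against \(p_0\), a probability measure, gives a contribution of at most \(c\sqrt t\). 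Hence
\[
|m_t(x)|\le |x|+2(|x|+R)+c\sqrt t\le C_t(1+|x|),
\qquad C_t:=\max\{3,\;2R+c\sqrt t\}.
\]

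The only delicate point is controlling the posterior tail uniformly in \(x\) when the prior may be heavy-tailed. The splitting above handles this: the Gaussian likelihood beats the worst-case denominator once \(|z-x|\) exceeds twice the distance from \(x\) to the bulk of \(p_0\). This argument uses only tightness of \(p_0\), so Assumption~\ref{assumption0} is more than sufficient. In the Girsanov step that follows, the finite second moment will be used to make the resulting drifts square-integrable along the paths.
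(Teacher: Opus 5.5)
Your proof is correct and follows essentially the same route as the paper. Both arguments split the Bayes posterior mean into a near and a far region at a radius of order $|x|+R$, and control the far region by Gaussian decay against the prior mass on a fixed ball $B(0,R)$. The paper splits at $|z|=4(|x|+R)$ and compares $\phi_t(x-z)$ pointwise with $\phi_t(x-u)$ for $u\in B(0,R)$, whereas you center the split at $x$ and use the denominator bound $\tfrac12 e^{-(|x|+R)^2/(2t)}$. Neither argument uses the second-moment assumption, and your separate treatment of $t=0$ covers a case the paper's proof leaves implicit.
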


\begin{proof}
Let $\mu:=\mathrm{Law}(Z)$, and let
\[
\phi_t(u):=(2\pi t)^{-d/2}\exp\!\left(-\frac{|u|^2}{2t}\right),\qquad u\in\mathbb R^d,
\]
be the Gaussian kernel with covariance matrix $tI_d$. The law of $X_t$ admits density
\[
p_t(x)=\int_{\mathbb R^d}\phi_t(x-z)\,\mu(dz),
\]
and the conditional mean is given by
\[
m_t(x)
=
\frac{\int_{\mathbb R^d} z\,\phi_t(x-z)\,\mu(dz)}
{\int_{\mathbb R^d}\phi_t(x-z)\,\mu(dz)}.
\]
Set
\[
N(x):=\int_{\mathbb R^d} z\,\phi_t(x-z)\,\mu(dz),
\qquad
D(x):=\int_{\mathbb R^d}\phi_t(x-z)\,\mu(dz)=p_t(x).
\]
Then
\[
m_t(x)=\frac{N(x)}{D(x)}.
\]

We shall prove that
\[
|N(x)|\le C_t(1+|x|)D(x),\qquad x\in\mathbb R^d.
\]

Choose $R>0$ such that
\[
a:=\mu(B(0,R))>0.
\]
This is possible since $\mu$ is a probability measure. Fix $x\in\mathbb R^d$. We split the numerator into a near part and a far part:
\[
N(x)=N_1(x)+N_2(x),
\]
where
\[
N_1(x):=\int_{\{|z|\le 4(|x|+R)\}} z\,\phi_t(x-z)\,\mu(dz),
\]
and
\[
N_2(x):=\int_{\{|z|>4(|x|+R)\}} z\,\phi_t(x-z)\,\mu(dz).
\]

\medskip

On the set $\{|z|\le 4(|x|+R)\}$ one has $|z|\le 4(|x|+R)$, and therefore
\[
|N_1(x)|
\le \int_{\{|z|\le 4(|x|+R)\}} |z|\,\phi_t(x-z)\,\mu(dz)
\le 4(|x|+R)D(x).
\]

\medskip

Let $z\in\mathbb R^d$ satisfy $|z|>4(|x|+R)$, and let $u\in B(0,R)$. Then
\[
|x-u|\le |x|+|u|\le |x|+R<\frac{|z|}{4},
\]
while
\[
|x-z|\ge |z|-|x|>|z|-\frac{|z|}{4}=\frac{3}{4}|z|.
\]
Hence
\[
|x-z|^2-|x-u|^2
\ge \frac{9}{16}|z|^2-\frac{1}{16}|z|^2
=\frac12 |z|^2.
\]
Consequently,
\[
\frac{\phi_t(x-z)}{\phi_t(x-u)}
=
\exp\!\left(-\frac{|x-z|^2-|x-u|^2}{2t}\right)
\le \exp\!\left(-\frac{|z|^2}{4t}\right).
\]
Thus,
\[
\phi_t(x-z)\le e^{-|z|^2/(4t)}\,\phi_t(x-u),\qquad u\in B(0,R).
\]
Integrating this inequality with respect to $\mu(du)$ over $B(0,R)$ gives
\[
a\,\phi_t(x-z)
\le e^{-|z|^2/(4t)}\int_{B(0,R)}\phi_t(x-u)\,\mu(du)
\le e^{-|z|^2/(4t)}D(x),
\]
and therefore
\[
\phi_t(x-z)\le a^{-1}e^{-|z|^2/(4t)}D(x).
\]
Using this bound, we obtain
\begin{align*}
|N_2(x)|
&\le \int_{\{|z|>4(|x|+R)\}} |z|\,\phi_t(x-z)\,\mu(dz)\\
&\le a^{-1}D(x)\int_{\mathbb R^d}|z|e^{-|z|^2/(4t)}\,\mu(dz).
\end{align*}
Since the function $r\mapsto re^{-r^2/(4t)}$ is bounded on $[0,\infty)$, the quantity
\[
C_{t,1}:=a^{-1}\int_{\mathbb R^d}|z|e^{-|z|^2/(4t)}\,\mu(dz)
\]
is finite. Hence
\[
|N_2(x)|\le C_{t,1}D(x).
\]

\medskip
Combining the bounds for $N_1(x)$ and $N_2(x)$, we get
\[
|N(x)|
\le \bigl(4(|x|+R)+C_{t,1}\bigr)D(x).
\]
Dividing by $D(x)>0$, we conclude that
\[
|m_t(x)|
=
\left|\frac{N(x)}{D(x)}\right|
\le 4|x|+4R+C_{t,1}.
\]
Therefore there exists a finite constant $C_t$ such that
\[
|m_t(x)|\le C_t(1+|x|),\qquad x\in\mathbb R^d.
\]
This completes the proof.
\end{proof}
\begin{lemma}\label{thm:girsanov-linear-growth}
Let $T>0$, let $\Omega=C([0,T];\mathbb R^d)$ be endowed with the canonical filtration
$(\mathcal F_t)_{0\le t\le T}$, and let $X_t(\omega)=\omega(t)$ be the coordinate process.

Assume that $b,\beta:[0,T]\times \mathbb R^d\to\mathbb R^d$ are Borel measurable and satisfy
\[
|b(t,x)|+|\beta(t,x)|\le L(1+|x|), \qquad (t,x)\in[0,T]\times\mathbb R^d,
\]
for some constant $L>0$. Let $\sigma\in\mathbb R^{d\times d}$ be a constant invertible matrix, and let $\nu$ be a probability measure on $\mathbb R^d$ such that
\[
\int_{\mathbb R^d}|x|^2\,\nu(dx)<\infty.
\]

Suppose that $\mathbb{P}^\beta$ is a weak solution law of
\[
dX_t=\beta(t,X_t)\,dt+\sigma\,dW_t,\qquad X_0\sim \nu.
\]
Equivalently, under $\mathbb{P}^\beta$,
\[
W_t^\beta:=\sigma^{-1}\Bigl(X_t-X_0-\int_0^t \beta(s,X_s)\,ds\Bigr)
\]
is a $d$-dimensional Brownian motion.

Define
\[
\theta(t,x):=\sigma^{-1}(b-\beta)(t,x),
\]
and
\[
Z_t
:=
\exp\!\left(
\int_0^t \theta(s,X_s)\cdot dW_s^\beta
-\frac12\int_0^t |\theta(s,X_s)|^2\,ds
\right),\qquad 0\le t\le T.
\]

Then the following hold:
\begin{enumerate}
\item $Z=(Z_t)_{0\le t\le T}$ is a true $\mathbb{P}^\beta$-martingale;
\item the probability measure $\mathbb{P}^b$ on $(\Omega,\mathcal F_T)$ defined by
\[
\frac{d\mathbb{P}^b}{d\mathbb{P}^\beta}=Z_T
\]
is a weak solution law of
\[
dX_t=b(t,X_t)\,dt+\sigma\,dW_t,\qquad X_0\sim \nu.
\]
\end{enumerate}
In particular,
\[
\mathbb{P}^b\ll \mathbb{P}^\beta \quad\text{on }\mathcal F_T,
\]
with Radon--Nikodym derivative $Z_T$.

If, in addition, the martingale problem for $(b,\sigma,\nu)$ is well posed, then $\mathbb{P}^b$ is the unique weak solution law of the $b$-equation. If both martingale problems $(\beta,\sigma,\nu)$ and $(b,\sigma,\nu)$ are well posed, then $\mathbb{P}^\beta$ and $\mathbb{P}^b$ are equivalent on $\mathcal F_T$.
\end{lemma}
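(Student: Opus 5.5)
The plan is the standard Girsanov route in three moves. First, construct the Novikov-free exponential martingale by localization. Second, identify the law of the tilted process through Lévy's characterization. Third, derive uniqueness and equivalence from well-posedness.

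Set $\theta=\sigma^{-1}(b-\beta)$. Because $\sigma$ is constant and invertible, $|\theta(t,x)|\le L'(1+|x|)$ with $L'=2L\|\sigma^{-1}\|$. Under $\mathbb{P}^\beta$, $Z$ is a nonnegative continuous local martingale, hence a supermartingale with $\mathbb{E}^\beta Z_t\le1$. It therefore suffices to show $\mathbb{E}^\beta Z_T=1$.

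For this we localize with the stopping times $\tau_n=\inf\{t:|X_t|\ge n\}\wedge T$. On $[0,\tau_n]$ the integrand $\theta$ is bounded, so Novikov applies to the stopped exponential $Z^{(n)}_t:=Z_{t\wedge\tau_n}$. We then define probability measures $\mathbb{Q}_n$ on $\mathcal F_T$ by $d\mathbb{Q}_n=Z^{(n)}_T\,d\mathbb{P}^\beta$. By Girsanov, under $\mathbb{Q}_n$ the process $X$ solves $dX=\big(b\,\mathbf 1_{t\le\tau_n}+\beta\,\mathbf 1_{t>\tau_n}\big)dt+\sigma dW$. This drift still satisfies the linear growth bound with constant $L$, and $X_0\sim\nu$ has a finite second moment. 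A Gronwall/BDG estimate then gives $\mathbb{E}^{\mathbb{Q}_n}\sup_{t\le T}|X_t|^2\le K$ with $K$ independent of $n$. By Chebyshev, $\mathbb{Q}_n(\tau_n<T)\le \mathbb{Q}_n(\sup|X|\ge n)\le K/n^2\to0$. Therefore
\[
\mathbb{E}^\beta Z_T\ \ge\ \mathbb{E}^\beta\big[Z_T\mathbf 1_{\tau_n=T}\big]=\mathbb{E}^\beta\big[Z^{(n)}_T\mathbf 1_{\tau_n=T}\big]=\mathbb{Q}_n(\tau_n=T)\to1,
\]
so $\mathbb{E}^\beta Z_T=1$ and $Z$ is a true martingale. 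This uniform-in-$n$ moment bound is the main obstacle, since Novikov's condition fails for linear-growth $\theta$. The point is that the bound is applied under the tilted measures, where the dynamics are again of linear growth.

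With $\mathbb{P}^b:=Z_T\cdot\mathbb{P}^\beta$, Girsanov's theorem states that $W^\beta_t-\int_0^t\theta(s,X_s)\,ds$ is a $\mathbb{P}^b$-Brownian motion. Unwinding the definitions gives $W^b_t:=\sigma^{-1}\big(X_t-X_0-\int_0^t b(s,X_s)\,ds\big)$, which is Brownian under $\mathbb{P}^b$. Moreover, $\mathcal F_0$ is unaffected because $Z_0=1$, so $X_0\sim\nu$. Hence $\mathbb{P}^b$ is a weak solution law of the $b$-equation, and absolute continuity $\mathbb{P}^b\ll\mathbb{P}^\beta$ holds by construction.

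If the $(b,\sigma,\nu)$ martingale problem is well posed, uniqueness of $\mathbb{P}^b$ follows immediately. If both martingale problems are well posed, we run the same argument with the roles of $b$ and $\beta$ swapped. This produces a weak solution $\tilde{\mathbb{P}}^\beta\ll\mathbb{P}^b$ of the $\beta$-equation. By uniqueness $\tilde{\mathbb{P}}^\beta=\mathbb{P}^\beta$, which gives $\mathbb{P}^\beta\ll\mathbb{P}^b$ and hence equivalence on $\mathcal F_T$.
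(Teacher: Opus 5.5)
Your proof is correct, and apart from one step it matches the paper's: the same localization $\tau_n$, the same tilted measures $\mathbb{Q}_n$ with a Gronwall/BDG second-moment bound uniform in $n$, the same Girsanov identification of $\mathbb{P}^b$ (with $X_0\sim\nu$ from $Z_0=1$), and the same role-swap argument for equivalence.

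The difference is how you get $E^{\mathbb{P}^\beta}[Z_T]=1$. The paper turns the moment bound into a uniform entropy bound, $E^{\mathbb{P}^\beta}[Z^{(n)}_T\log Z^{(n)}_T]=\frac12E^{\mathbb{Q}_n}\int_0^{T\wedge\tau_n}|\theta(s,X_s)|^2\,ds\le C$. It then obtains uniform integrability of $\{Z^{(n)}_T\}$ by de la Vall\'ee-Poussin and passes to the limit using $Z^{(n)}_T\to Z_T$ almost surely. You instead sandwich $1\ge E^{\mathbb{P}^\beta}[Z_T]\ge\mathbb{Q}_n(\tau_n=T)\ge 1-K/n^2$, using only the supermartingale property and Chebyshev.

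Your route is shorter and lighter. It needs only tightness of $\sup_{t\le T}|X_{t\wedge\tau_n}|$ under $\mathbb{Q}_n$, not the rewriting of $\log Z^{(n)}_T$ under $\mathbb{Q}_n$ or a UI criterion. The paper's route gives a quantitative by-product that yours does not: $\sup_n\KL(\mathbb{Q}_n\|\mathbb{P}^\beta)<\infty$, and hence, by Fatou, $\KL(\mathbb{P}^b\|\mathbb{P}^\beta)=E^{\mathbb{P}^\beta}[Z_T\log Z_T]<\infty$.

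One small tidy-up: since $\{\tau_n<T\}\subseteq\{\sup_{t\le T}|X_{t\wedge\tau_n}|\ge n\}$, it is enough to bound the stopped process, as the paper does. That avoids the extra localization needed to run Gronwall rigorously on the unstopped $\sup_{t\le T}|X_t|^2$, whose finiteness is not known a priori.
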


\begin{proof}
We divide the argument into several steps.

\medskip
For each $n\in\mathbb N$, define the stopping time
\[
\tau_n:=\inf\{t\in[0,T]: |X_t|\ge n\}\wedge T,
\]
and set
\[
Z_t^{(n)}:=Z_{t\wedge \tau_n}.
\]
Since the process
\[
\theta_n(t):=\theta(t,X_t)\mathbf 1_{\{t\le \tau_n\}}
\]
is bounded, the classical bounded-integrand version of Girsanov's theorem implies that $Z^{(n)}$ is a true $\mathbb{P}^\beta$-martingale. Define a probability measure $\mathbb{Q}_n$ on $\mathcal F_T$ by
\[
\frac{d\mathbb{Q}_n}{d\mathbb{P}^\beta}=Z_T^{(n)}.
\]

Under $\mathbb{Q}_n$, the process
\[
W_t^{(n)}:=W_t^\beta-\int_0^{t\wedge\tau_n}\theta(s,X_s)\,ds
\]
is a $d$-dimensional Brownian motion. Consequently,
\begin{align*}
X_{t\wedge\tau_n}
&=X_0+\int_0^{t\wedge\tau_n}\beta(s,X_s)\,ds+\sigma W_{t\wedge\tau_n}^\beta\\
&=X_0+\int_0^{t\wedge\tau_n}\beta(s,X_s)\,ds
+\sigma\int_0^{t\wedge\tau_n}\theta(s,X_s)\,ds
+\sigma W_{t\wedge\tau_n}^{(n)}\\
&=X_0+\int_0^{t\wedge\tau_n}b(s,X_s)\,ds+\sigma W_{t\wedge\tau_n}^{(n)}.
\end{align*}
Thus, under $\mathbb{Q}_n$, the stopped coordinate process solves the $b$-equation up to $\tau_n$. Define
\[
f_n(t):=E^{Q_n}\Big[\sup_{0\le u\le t}|X_{u\wedge\tau_n}|^2\Big],\qquad 0\le t\le T.
\]
Write
\[
X_{t\wedge\tau_n}=X_0+A_t^{(n)}+M_t^{(n)},
\]
where
\[
A_t^{(n)}:=\int_0^{t\wedge\tau_n}b(s,X_s)\,ds,
\qquad
M_t^{(n)}:=\sigma W_{t\wedge\tau_n}^{(n)}.
\]
Using $(a+b+c)^2\le 3(a^2+b^2+c^2)$, we obtain
\[
f_n(t)\le 3E^{\mathbb{Q}_n}|X_0|^2
+3E^{\mathbb{Q}_n}\Big[\sup_{u\le t}|A_u^{(n)}|^2\Big]
+3E^{\mathbb{Q}_n}\Big[\sup_{u\le t}|M_u^{(n)}|^2\Big].
\]

Since $Z_0^{(n)}=1$, the law of $X_0$ under $\mathbb{Q}_n$ is the same as under $\mathbb{P}^\beta$, namely $\nu$. Hence
\[
E^{\mathbb{Q}_n}|X_0|^2=\int_{\mathbb R^d}|x|^2\,\nu(dx)<\infty.
\]

For the drift term, the linear-growth assumption yields
\[
|b(t,x)|\le L(1+|x|),
\]
so
\[
\sup_{u\le t}|A_u^{(n)}|
\le \int_0^t \mathbf 1_{\{s\le \tau_n\}} |b(s,X_s)|\,ds
\le L\int_0^t \bigl(1+|X_{s\wedge\tau_n}|\bigr)\,ds.
\]
By Cauchy--Schwarz,
\[
\Big(\int_0^t \bigl(1+|X_{s\wedge\tau_n}|\bigr)\,ds\Big)^2
\le t\int_0^t \bigl(1+|X_{s\wedge\tau_n}|\bigr)^2\,ds
\le 2t\int_0^t \bigl(1+|X_{s\wedge\tau_n}|^2\bigr)\,ds.
\]
Therefore,
\[
E^{\mathbb{Q}_n}\Big[\sup_{u\le t}|A_u^{(n)}|^2\Big]
\le 2L^2 t\int_0^t \Bigl(1+E^{\mathbb{Q}_n}|X_{s\wedge\tau_n}|^2\Bigr)\,ds
\le 2L^2 t\int_0^t \bigl(1+f_n(s)\bigr)\,ds.
\]

For the martingale term, $M^{(n)}$ is a continuous $\mathbb{Q}_n$-martingale with quadratic variation
\[
\langle M^{(n)}\rangle_t
=
\int_0^{t\wedge\tau_n}\sigma\sigma^\top\,ds.
\]
Hence, by the Burkholder--Davis--Gundy inequality\citep{karatzas2014brownian},
\[
E^{\mathbb{Q}_n}\Big[\sup_{u\le t}|M_u^{(n)}|^2\Big]
\le C_{\mathrm{BDG}}\,E^{\mathbb{Q}_n}\big[\mathrm{tr}\langle M^{(n)}\rangle_t\big]
\le C_{\mathrm{BDG}}\|\sigma\|_{\mathrm{HS}}^2\,t.
\]
Combining the above bounds, we find constants $C_0,C_1>0$, independent of $n$, such that
\[
f_n(t)\le C_0+C_1\int_0^t \bigl(1+f_n(s)\bigr)\,ds,\qquad 0\le t\le T.
\]
By Gronwall's lemma,
\[
\sup_{n\ge 1}\sup_{0\le t\le T} f_n(t)\le C_T
\]
for some constant $C_T<\infty$ independent of $n$. In particular,
\[
\sup_{n\ge 1}E^{\mathbb{Q}_n}\int_0^T |X_{s\wedge\tau_n}|^2\,ds
\le T C_T.
\]
Under $\mathbb{Q}_n$,
\[
dW_t^\beta=dW_t^{(n)}+\mathbf 1_{\{t\le \tau_n\}}\theta(t,X_t)\,dt.
\]
Substituting into the definition of $Z_T^{(n)}$ yields
\[
\log Z_T^{(n)}
=
\int_0^{T\wedge\tau_n}\theta(s,X_s)\cdot dW_s^{(n)}
+\frac12\int_0^{T\wedge\tau_n}|\theta(s,X_s)|^2\,ds.
\]
Taking expectations under $\mathbb{Q}_n$, the stochastic integral has mean zero, and therefore
\[
E^{\mathbb{Q}_n}[\log Z_T^{(n)}]
=
\frac12E^{\mathbb{Q}_n}\int_0^{T\wedge\tau_n}|\theta(s,X_s)|^2\,ds.
\]
Since $\theta(t,x)=\sigma^{-1}(b-\beta)(t,x)$ and both $b$ and $\beta$ have linear growth, there exists a constant $C>0$ such that
\[
|\theta(t,x)|^2\le C(1+|x|^2).
\]
Hence
\[
E^{\mathbb{Q}_n}[\log Z_T^{(n)}]
\le C\left(T+E^{\mathbb{Q}_n}\int_0^T |X_{s\wedge\tau_n}|^2\,ds\right)
\le C_T'.
\]
Moreover, by definition of $\mathbb{Q}_n$,
\[
E^{\mathbb{P}^\beta}\big[Z_T^{(n)}\log Z_T^{(n)}\big]
=
E^{\mathbb{Q}_n}[\log Z_T^{(n)}].
\]
Thus,
\[
\sup_{n\ge 1}E^{\mathbb{P}^\beta}\big[Z_T^{(n)}\log Z_T^{(n)}\big]<\infty.
\]
Since the function $x\mapsto x\log x$ is increasing convex function  , de la Vallée-Poussin's criterion implies that the family $\{Z_T^{(n)}\}_{n\ge 1}$ is uniformly integrable \citep{durrett2019probability}.

Now $\tau_n\uparrow T$ $\mathbb{P}^\beta$-almost surely, and $Z$ is continuous, hence
\[
Z_T^{(n)}\to Z_T
\qquad \mathbb{P}^\beta\text{-a.s.}
\]
Uniform integrability therefore implies
\[
E^{\mathbb{P}^\beta}[Z_T]
=
\lim_{n\to\infty}E^{\mathbb{P}^\beta}[Z_T^{(n)}]
=
1.
\]
It follows that $Z$ is a true $\mathbb{P}^\beta$-martingale.Now define a probability measure $\mathbb{P}^b$ on $\mathcal F_T$ by
\[
\frac{d\mathbb{P}^b}{d\mathbb{P}^\beta}=Z_T.
\]
Since $Z$ is a true martingale, the classical Girsanov theorem applies and yields that
\[
W_t^b:=W_t^\beta-\int_0^t \theta(s,X_s)\,ds
\]
is a Brownian motion under $\mathbb{P}^b$. Therefore,
\begin{align*}
X_t
&=X_0+\int_0^t \beta(s,X_s)\,ds+\sigma W_t^\beta\\
&=X_0+\int_0^t \beta(s,X_s)\,ds
+\sigma\int_0^t \theta(s,X_s)\,ds
+\sigma W_t^b\\
&=X_0+\int_0^t b(s,X_s)\,ds+\sigma W_t^b.
\end{align*}
Thus $X$ solves
\[
dX_t=b(t,X_t)\,dt+\sigma\,dW_t^b
\]
under $\mathbb{P}^b$.

It remains to identify the initial law. For every Borel set $A\subset\mathbb R^d$,
\[
\mathbb{P}^b(X_0\in A)
=
E^{\mathbb{P}^\beta}\big[\mathbf 1_{\{X_0\in A\}}Z_T\big].
\]
Since $\mathbf 1_{\{X_0\in A\}}\in\mathcal F_0$ and $Z$ is a martingale with $Z_0=1$,
\[
E^{\mathbb{P}^\beta}\big[\mathbf 1_{\{X_0\in A\}}Z_T\big]
=
E^{\mathbb{P}^\beta}\Big[\mathbf 1_{\{X_0\in A\}}E^{\mathbb{P}^\beta}[Z_T\mid \mathcal F_0]\Big]
=
E^{\mathbb{P}^\beta}\big[\mathbf 1_{\{X_0\in A\}}\big]
=
\nu(A).
\]
Hence $X_0\sim \nu$ under $\mathbb{P}^b$ as well. This proves that $\mathbb{P}^b$ is a weak solution law of
\[
dX_t=b(t,X_t)\,dt+\sigma\,dW_t,\qquad X_0\sim \nu.
\]

\medskip

If the martingale problem for $(b,\sigma,\nu)$ is well posed, then the probability measure $\mathbb{P}^b$ constructed above coincides with the unique weak solution law of the $b$-equation. If both martingale problems $(\beta,\sigma,\nu)$ and $(b,\sigma,\nu)$ are well posed, then the same argument with $b$ and $\beta$ interchanged yields
\[
\mathbb{P}^\beta\ll \mathbb{P}^b,
\]
and therefore
\[
\mathbb{P}^\beta\sim \mathbb{P}^b.
\]

This completes the proof.
\end{proof}

\begin{lemma}\label{lem:mmse_gap}
Let $U$ be square-integrable and let $\mathcal G \subseteq \mathcal H$ be
$\sigma$-fields. Then
\[
\E\bigl[\| \E[U\mid \mathcal H]-\E[U\mid \mathcal G]\|_2^2\bigr]
=\mmse(U\mid \mathcal G)-\mmse(U\mid \mathcal H),
\]
where $\mmse(U\mid \mathcal G):=\E\|U-\E[U\mid\mathcal G]\|_2^2$.

\end{lemma}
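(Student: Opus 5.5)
The plan is to prove this by orthogonal decomposition in $L^2$, i.e.\ by the tower property and a Pythagorean identity. Write $U_{\mathcal G}:=\E[U\mid\mathcal G]$ and $U_{\mathcal H}:=\E[U\mid\mathcal H]$. Both are square-integrable, since conditional expectation is an $L^2$ contraction. Then
\[
U-U_{\mathcal G}=(U-U_{\mathcal H})+(U_{\mathcal H}-U_{\mathcal G}).
\]
Expanding the squared norm gives
\[
\mmse(U\mid\mathcal G)=\mmse(U\mid\mathcal H)+\E\|U_{\mathcal H}-U_{\mathcal G}\|_2^2+2\,\E\big[\langle U-U_{\mathcal H},\,U_{\mathcal H}-U_{\mathcal G}\rangle\big].
\]
The claim follows once the cross term is shown to vanish.

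For the cross term, use $\mathcal G\subseteq\mathcal H$. This makes $V:=U_{\mathcal H}-U_{\mathcal G}$ an $\mathcal H$-measurable, square-integrable vector. Conditioning on $\mathcal H$ and pulling out what is known gives
\[
\E\langle U-U_{\mathcal H},V\rangle=\E\big\langle \E[U-U_{\mathcal H}\mid\mathcal H],V\big\rangle=0.
\]
To justify pulling $V$ out of the conditional expectation, note that the product $\langle U-U_{\mathcal H},V\rangle$ is integrable by Cauchy--Schwarz. Then apply the scalar pull-out property componentwise, using $\langle a,b\rangle=\sum_i a_ib_i$.

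This argument has no real obstacle. The only care needed is the integrability check just described, and recording that $\mathcal G\subseteq\mathcal H$ is used exactly once: to make $U_{\mathcal G}$, and hence $V$, $\mathcal H$-measurable. Rearranging the expansion then yields the stated identity, and as a byproduct $\mmse(U\mid\mathcal G)\ge\mmse(U\mid\mathcal H)$.
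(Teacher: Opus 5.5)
Your proof is correct and takes the same approach as the paper, which justifies the identity in one line as the Pythagorean theorem for conditional expectation viewed as an orthogonal projection in $L^2$. You have simply written out that projection argument in full: the cross term vanishes because $U_{\mathcal H}-U_{\mathcal G}$ is $\mathcal H$-measurable, and you check the integrability needed to pull it out of the conditional expectation.
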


\begin{proof}
The identity is the Pythagorean theorem for orthogonal projections in
$L^2$ (conditional expectation is the orthogonal projection onto the subspace
of $\mathcal G$-measurable functions).

\end{proof}
\begin{lemma}\label{lem:conditional_immse}
Let $X\in\R^d$ be a random vector with $\E\|X\|_2^2<\infty$, let $S$ be an arbitrary
random element, and let $N\sim\Nc(0,I_d)$ be independent of $(X,S)$. For $\gamma>0$
define the Gaussian observation channel
\[
Y_\gamma \;:=\; \sqrt{\gamma}\,X + N .
\]
Then $I(X;Y_\gamma\mid S)$ is differentiable in $\gamma$ and
\[
\frac{\dd}{\dd\gamma}\,I(X;Y_\gamma\mid S)
\;=\;
\frac12\,\mmse(X\mid Y_\gamma,S),
\]
where
\[
\mmse(X\mid Y_\gamma,S)
\;:=\;
\E\!\left[\big\|X-\E[X\mid Y_\gamma,S]\big\|_2^2\right].
\]
\end{lemma}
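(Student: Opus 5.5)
The plan is to reduce the statement to the classical (unconditional) I--MMSE relation of Guo, Shamai and Verd\'u by conditioning on \(S\) and then averaging.

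\textbf{Step 1 (disintegration).} Fix a regular conditional law \(\mu_s:=\Law(X\mid S=s)\). Since \(N\) is independent of \((X,S)\), conditionally on \(S=s\) the pair \((X,Y_\gamma)\) has the law of \(X_s\sim\mu_s\) observed through \(\sqrt\gamma X_s+N\), with \(N\) independent of \(X_s\). Hence
\[
I(X;Y_\gamma\mid S)=\int i_s(\gamma)\,\Law(S)(\dd s),\qquad
\mmse(X\mid Y_\gamma,S)=\int m_s(\gamma)\,\Law(S)(\dd s),
\]
where \(i_s(\gamma)\) and \(m_s(\gamma)\) denote the mutual information and the MMSE of the unconditional channel with input law \(\mu_s\). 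We have \(\int\E_{\mu_s}\|X\|^2\,\Law(S)(\dd s)=\E\|X\|^2<\infty\), so \(\mu_s\) has a finite second moment for a.e. \(s\).

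\textbf{Step 2 (pointwise I--MMSE).} For each such \(s\), the classical theorem gives that \(i_s\) is differentiable on \((0,\infty)\) with \(i_s'(\gamma)=\tfrac12 m_s(\gamma)\). It also gives \(i_s(0)=0\) and the integrated form
\[
i_s(\gamma)=\tfrac12\int_0^\gamma m_s(u)\,\dd u .
\]

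\textbf{Step 3 (interchange).} We have \(0\le m_s(u)\le \E_{\mu_s}\|X-\E_{\mu_s}X\|^2\le \E_{\mu_s}\|X\|^2\), which is integrable in \(s\). Tonelli's theorem therefore gives
\[
I(X;Y_\gamma\mid S)=\tfrac12\int_0^\gamma \mmse(X\mid Y_u,S)\,\dd u .
\]
Thus \(I(X;Y_\gamma\mid S)\) is absolutely continuous in \(\gamma\).

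\textbf{Step 4 (differentiability).} To upgrade to differentiability at every \(\gamma>0\), it suffices to show that \(u\mapsto\mmse(X\mid Y_u,S)\) is continuous. For each \(s\), \(m_s\) is continuous on \((0,\infty)\); this is part of the classical result, and in fact \(m_s\) is real-analytic there. Dominated convergence with the dominating function \(\E_{\mu_s}\|X\|^2\) then yields continuity of the average. The fundamental theorem of calculus gives the claimed derivative.

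The main obstacle is Step 4: the pointwise continuity of \(m_s\) together with the uniform domination. The continuity follows from the smoothness of the Gaussian posterior in \(u\), and the second-moment bound supplies the domination.
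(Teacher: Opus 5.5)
Your proposal is correct and follows the paper's route: disintegrate over $S$, apply the Guo--Shamai--Verd\'u identity to each conditional input law $\mu_s$, and average using the domination $0\le m_s(\gamma)\le \E[\|X\|_2^2\mid S=s]$. The only difference is how the derivative is exchanged with the $s$-integral. The paper differentiates under the integral sign directly: by the mean value theorem the difference quotients of $i_s$ are bounded by $\tfrac12\E[\|X\|_2^2\mid S=s]$, so dominated convergence applies and your Step~4 continuity argument is not needed. Your integrate-then-differentiate route does need continuity of $m_s$, but it also gives the integrated identity $I(X;Y_\gamma\mid S)=\tfrac12\int_0^\gamma \mmse(X\mid Y_u,S)\,\dd u$.
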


\begin{proof}
Fix $\gamma>0$. By disintegration,
\begin{equation}\label{eq:cond_mi_disintegrate}
I(X;Y_\gamma\mid S)=\int I(X;Y_\gamma\mid S=s)\,P_S(\dd s).
\end{equation}
For each $s$, conditional on $S=s$ the channel remains AWGN:
$Y_\gamma=\sqrt{\gamma}\,X+N$ with $N\perp\!\!\!\perp X$ under $\Law(\cdot\mid S=s)$.
Since $\E[\|X\|_2^2\mid S=s]<\infty$ for $P_S$-a.e.\ $s$, the (vector) I--MMSE
identity of \citep{guo2005mutual} applied to the conditional input law $X\mid S=s$
yields
\begin{equation}\label{eq:immse_cond_s}
\frac{\dd}{\dd\gamma}I(X;Y_\gamma\mid S=s)
=
\frac12\,\mmse(X\mid Y_\gamma,S=s).
\end{equation}
Moreover, for every $s$,
\[
0\le \mmse(X\mid Y_\gamma,S=s)\le \E[\|X\|_2^2\mid S=s],
\]
because the MMSE is the minimum mean-squared error and is upper bounded by the MSE
of the zero estimator. Since $\E\|X\|_2^2=\int \E[\|X\|_2^2\mid S=s]\,P_S(\dd s)<\infty$,
dominated convergence (Leibniz rule) allows differentiating under the integral in
\eqref{eq:cond_mi_disintegrate}, giving
\[
\frac{\dd}{\dd\gamma}I(X;Y_\gamma\mid S)
=
\frac12\int \mmse(X\mid Y_\gamma,S=s)\,P_S(\dd s).
\]
Finally, by the law of total expectation and the definition of conditional MMSE,
\[
\int \mmse(X\mid Y_\gamma,S=s)\,P_S(\dd s)
=
\E\!\left[\big\|X-\E[X\mid Y_\gamma,S]\big\|_2^2\right]
=
\mmse(X\mid Y_\gamma,S),
\]
which proves the claim.
\end{proof}
\begin{proof}[Proof of Theorem~\ref{thm:avg-path-kl}]
Fix \(b\). Let \(Y^{*,b}\) and \(\hat Y^{\,b}\) solve the two reverse-time SDEs on
\([\tau^*,T-t_0]\) in the theorem, started from the same law at time \(\tau^*\) and with the
same unit diffusion coefficient. Denote their drifts by \(f_\tau^{*,b}\) and
\(\hat f_\tau^{\,b}\), respectively.

We first explain why these drifts have at most linear growth and why
Lemma~\ref{thm:girsanov-linear-growth} applies. By Tweedie’s formula, for \(t=T-\tau\),
\[
\E[Z\mid X_t=x]=x+t\,s_t(x),
\qquad
\E[Z\mid X_t=x,B=b]=x+t\,s_t^{*,b}(x).
\]
Hence
\[
f_\tau^{*,b}(x)
=
P_\parallel s_t^{*,b}(x)+\frac{1}{t}P_\perp(b-x)
=
\frac{1}{t}\bigl(\E[Z\mid X_t=x,B=b]-x\bigr),
\]
because under the conditioning \(B=b\) we have
\(P_\perp \E[Z\mid X_t=x,B=b]=b\). Likewise,
\[
\hat f_\tau^{\,b}(x)
=
P_\parallel s_t(x)+\frac{1}{t}P_\perp(b-x)
=
\frac{1}{t}\bigl(P_\parallel \E[Z\mid X_t=x]-P_\parallel x + P_\perp(b-x)\bigr).
\]
By Lemma~\ref{lem:posterior-mean-linear-growth}, the maps
\(x\mapsto \E[Z\mid X_t=x]\) and \(x\mapsto \E[Z\mid X_t=x,B=b]\) have at most linear growth.
Since \(t=T-\tau\in[t_0,t^*]\) on \([\tau^*,T-t_0]\), the factor \(1/t\) is uniformly bounded by
\(1/t_0\). Therefore both \(f_\tau^{*,b}\) and \(\hat f_\tau^{\,b}\) satisfy a linear-growth bound
of the form
\[
|f_\tau^{*,b}(x)|+|\hat f_\tau^{\,b}(x)|
\le C_{b,t_0}(1+|x|),
\qquad \tau\in[\tau^*,T-t_0].
\]

Now \(Y^{*,b}\) is already given as a weak solution law for the drift \(f^{*,b}\), namely the
ideal conditional reverse process. We therefore apply
Lemma~\ref{thm:girsanov-linear-growth} with
\[
\beta(\tau,x)=f_\tau^{*,b}(x),
\qquad
b(\tau,x)=\hat f_\tau^{\,b}(x),
\qquad
\sigma=I_d,
\]
and with initial law equal to the common law of \(Y^{*,b}_{\tau^*}\) and \(\hat Y^{\,b}_{\tau^*}\).
The lemma yields existence of the surrogate weak solution law and the relative-entropy identity
\begin{align}
\label{eq:pathKL_girsanov_clean}
\KL\!\big(\PP^{Y^{*,b}}\big\|\PP^{\hat Y^{\,b}}\big)
&=
\frac12\,\E^{Y^{*,b}}\!\left[\int_{\tau^*}^{T-t_0}
\big\|f_\tau^{*,b}(Y_\tau^{*,b})-\hat f_\tau^{\,b}(Y_\tau^{*,b})\big\|_2^2\,\dd \tau\right] \nonumber\\
&\le
\frac12\,\E^{Y^{*,b}}\!\left[\int_{\tau^*}^{T}
\big\|f_\tau^{*,b}(Y_\tau^{*,b})-\hat f_\tau^{\,b}(Y_\tau^{*,b})\big\|_2^2\,\dd \tau\right].
\end{align}
Since \(Y^{*,b}\) is the true reverse-time process of the conditional forward diffusion
\(\{X_t\}_{t\in[0,T]}\) under \(B=b\), it is defined up to terminal reverse time \(T\).
Hence we may extend the integral from \([\tau^*,T-t_0]\) to \([\tau^*,T)\).
By inspection, the two SDEs have the same normal drift, hence for all \(x\in\R^d\),
\begin{equation}\label{eq:drift_gap_parallel}
f_\tau^{*,b}(x)-\hat f_\tau^{\,b}(x)
=
P_\parallel\Big(s_t^{*,b}(x)-s_t(x)\Big).
\end{equation}

Now let \(Z\sim p_0\) denote the clean signal, and decompose it as
\[
U:=P_\parallel Z,
\qquad
B:=P_\perp Z.
\]

Tweedie's formula yields, for every \(x\in\R^d\),
\[
\E[Z\mid X_t=x]
=
x+t\,s_t(x),
\qquad
\E[Z\mid X_t=x,\,B=b]
=
x+t\,s_t^{*,b}(x).
\]
Projecting onto \(\ker(A)\) and subtracting, we obtain
\begin{equation}\label{eq:tweedie_parallel_gap_Slean}
P_\parallel\!\big(s_t^{*,b}-s_t\big)(x)
=
\frac1t\Big(\E[U\mid X_t=x,\,B=b]-\E[U\mid X_t=x]\Big).
\end{equation}
Setting \(t=T-\tau\) and combining \eqref{eq:drift_gap_parallel} with
\eqref{eq:tweedie_parallel_gap_Slean}, we get
\[
f_\tau^{*,b}(x)-\hat f_\tau^{\,b}(x)
=
\frac{1}{T-\tau}\Big(\E[U\mid X_{T-\tau}=x,\,B=b]-\E[U\mid X_{T-\tau}=x]\Big).
\]

Plugging this identity into \eqref{eq:pathKL_girsanov_clean}, averaging over the random level
\(B\), and applying Lemma~\ref{lem:mmse_gap}, yields
\begin{equation}\label{eq:KL_as_mmse_tau}
\E_B\!\Big[\KL\!\big(\PP^{Y^{*,B}}\big\|\PP^{\hat Y^{\,B}}\big)\Big]
\le
\frac12\int_{\tau^*}^{T}
\frac{1}{(T-\tau)^2}
\Big(
\mmse(U\mid X_{T-\tau})-\mmse(U\mid X_{T-\tau},B)
\Big)\,\dd\tau .
\end{equation}

Now define
\[
\gamma:=\frac1t=\frac{1}{T-\tau},
\qquad
\tilde X_\gamma:=\sqrt{\gamma}\,X_t
=
\sqrt{\gamma}\,Z+\Xi,
\qquad
\Xi\sim\Nc(0,I_d)\ \text{independent of }Z.
\]
Since \(X_t\mapsto \tilde X_\gamma\) is an invertible scaling, conditioning on \(X_t\) is
equivalent to conditioning on \(\tilde X_\gamma\). Moreover,
\[
\dd\gamma=\frac{\dd\tau}{(T-\tau)^2}.
\]
Therefore \eqref{eq:KL_as_mmse_tau} becomes
\begin{equation}\label{eq:KL_as_mmse_gamma}
\E_B\!\Big[\KL\!\big(\PP^{Y^{*,B}}\big\|\PP^{\hat Y^{\,B}}\big)\Big]
\le
\frac12\int_{\gamma^*}^{\infty}
\Big(
\mmse(U\mid \tilde X_\gamma)-\mmse(U\mid \tilde X_\gamma,B)
\Big)\,\dd\gamma,
\end{equation}
where \(\gamma^*:=1/(T-\tau^*)\).

Define
\[
\Phi(\gamma):=I(U;\tilde{X}_\gamma)-I(U;\tilde{X}_\gamma\mid B).
\]

Conditioning on $B$ turns $\tilde{X}_\gamma=\sqrt{\gamma}(U+B)+\Xi$ into an AWGN
channel in $U$ with a known (measurable) shift, so Lemma~\ref{lem:conditional_immse}
(with $X=U$, $S=B$ and $Y_\gamma=\tilde{X}_\gamma$) yields
\begin{equation}\label{eq:dI_cond}
\frac{\dd}{\dd\gamma}I(U;\tilde{X}_\gamma\mid B)
=\frac12\,\mmse(U\mid \tilde{X}_\gamma,B).
\end{equation}

Next, use the chain rule
\begin{equation}\label{eq:chain_Z}
I(Z;\tilde{X}_\gamma)=I(U;\tilde{X}_\gamma)+I(B;\tilde{X}_\gamma\mid U).
\end{equation}
Since $\tilde{X}_\gamma=\sqrt{\gamma}Z+\Xi$ is AWGN in $Z$, so by I-MMSE we have 
\[
\frac{\dd}{\dd\gamma}I(Z;\tilde{X}_\gamma)=\frac12\,\mmse(Z\mid \tilde{X}_\gamma).
\]
Also, given $U$, the observation $\tilde{X}_\gamma$ is an AWGN channel in $B$ with a known shift,
so Lemma~\ref{lem:conditional_immse} (with $X=B$, $S=U$) yields
\[
\frac{\dd}{\dd\gamma}I(B;\tilde{X}_\gamma\mid U)=\frac12\,\mmse(B\mid \tilde{X}_\gamma,U).
\]
Differentiating \eqref{eq:chain_Z} and subtracting the last display from the derivative of $I(Z;\tilde{X}_\gamma)$ gives
\begin{equation}\label{eq:dI_U_identity_clean_rewrite}
\frac{\dd}{\dd\gamma}I(U;\tilde{X}_\gamma)
=\frac12\Big(\mmse(Z\mid \tilde{X}_\gamma)-\mmse(B\mid \tilde{X}_\gamma,U)\Big).
\end{equation}
Because $U$ and $B$ live in orthogonal subspaces and $Z=U+B$,
\[
\mmse(Z\mid \tilde{X}_\gamma)
=\mmse(U\mid \tilde{X}_\gamma)+\mmse(B\mid \tilde{X}_\gamma),
\]
hence \eqref{eq:dI_U_identity_clean_rewrite} becomes exactly
\begin{equation}\label{eq:dI_U_identity_clean}
\frac{\dd}{\dd\gamma}I(U;\tilde{X}_\gamma)
=\frac12\Big(\mmse(U\mid \tilde{X}_\gamma)+\mmse(B\mid \tilde{X}_\gamma)-\mmse(B\mid \tilde{X}_\gamma,U)\Big).
\end{equation}

Subtracting \eqref{eq:dI_cond} from \eqref{eq:dI_U_identity_clean} yields
\begin{equation}\label{eq:dPhi}
\frac{\dd}{\dd\gamma}\Phi(\gamma)
=\frac12\Big(\mmse(U\mid \tilde{X}_\gamma)-\mmse(U\mid \tilde{X}_\gamma,B)\Big)
+\frac12\Big(\mmse(B\mid \tilde{X}_\gamma)-\mmse(B\mid \tilde{X}_\gamma,U)\Big).
\end{equation}
Insert \eqref{eq:dPhi} into \eqref{eq:KL_as_mmse_gamma} to obtain the exact decomposition
\begin{equation}\label{eq:KL_equals_boundary_minus_A_clean}
\E_B\!\Big[\KL\!\big(\PP^{Y^{*,B}}\big\|\PP^{\hat Y^{\,B}}\big)\Big]
\leq \Big[\Phi(\gamma)\Big]_{\gamma=\gamma^*}^{\infty}-A,
\end{equation}
where
\begin{equation*}
A:=\frac12\int_{\gamma^*}^{\infty}
\Big(\mmse(B\mid \tilde{X}_\gamma)-\mmse(B\mid \tilde{X}_\gamma,U)\Big)\,\dd\gamma.
\end{equation*}
By the orthogonality principle / law of total variance,
\[
\mmse(B\mid \tilde{X}_\gamma)-\mmse(B\mid \tilde{X}_\gamma,U)
=\E\!\left[\big\|\E[B\mid \tilde{X}_\gamma,U]-\E[B\mid \tilde{X}_\gamma]\big\|_2^2\right]\ge 0,
\]
so $A\ge 0$.

Using the identity $I(U;X)-I(U;X\mid B)=I(U;B)-I(U;B\mid X)$ (a direct consequence of the chain rule),
we have
\[
\Phi(\gamma)=I(U;B)-I(U;B\mid \tilde{X}_\gamma).
\]
Then,
\begin{equation}\label{eq:Phi_boundary}
\Big[\Phi(\gamma)\Big]_{\gamma=\gamma^*}^{\infty}
\leq I(U;B\mid \tilde{X}_{\gamma^*}).
\end{equation}
Combining \eqref{eq:KL_equals_boundary_minus_A_clean} and \eqref{eq:Phi_boundary} gives
\begin{equation}\label{eq:KL_upper_clean}
\E_B\!\Big[\KL\!\big(\PP^{Y^{*,B}}\big\|\PP^{\hat Y^{\,B}}\big)\Big]
\leq I(U;B\mid \tilde{X}_{\gamma^*})-A
\ \le\ I(U;B\mid \tilde{X}_{\gamma^*}).
\end{equation}

Now we want to give a lower bound for \eqref{eq:KL_as_mmse_gamma}. This is equal to the lower bound
\begin{equation*}
 \frac12\int_{\gamma^*}^{\gamma_{max}}
\Big(\mmse(U\mid \tilde{X}_\gamma)-\mmse(U\mid \tilde{X}_\gamma,B)\Big)\,\dd\gamma,
\end{equation*}
where $\gamma_{max} = \frac{1}{t_0}$. Based on what we hade then we only need to lower bound 
\begin{equation*}
    I(U;B\mid \tilde{X}_{\gamma^*}) - I(U;B\mid \tilde{X}_{\gamma_{max}}) - A_{\gamma_{max}}
\end{equation*}
Where 
\begin{equation}\label{eq:def_A_clean}
A_{\gamma_{max}}:=\frac12\int_{\gamma^*}^{\gamma_{max}}
\Big(\mmse(B\mid \tilde{X}_\gamma)-\mmse(B\mid \tilde{X}_\gamma,U)\Big)\,\dd\gamma.
\end{equation}
Decompose the observation into orthogonal components
\[
\tilde X_\gamma^\perp := P_\perp \tilde X_\gamma,\qquad
\tilde X_\gamma^\parallel := P_\parallel \tilde X_\gamma,
\qquad \tilde X_\gamma=\tilde X_\gamma^\perp+\tilde X_\gamma^\parallel.
\]
Since $\tilde X_\gamma=\sqrt{\gamma}(U+B)+\Xi$ with $\Xi\sim\Nc(0,I_d)$ independent of $(U,B)$,
we have
\[
\tilde X_\gamma^\perp=\sqrt{\gamma}\,B+\Xi^\perp,\qquad
\tilde X_\gamma^\parallel=\sqrt{\gamma}\,U+\Xi^\parallel,
\]
where $\Xi^\perp:=P_\perp\Xi$ and $\Xi^\parallel:=P_\parallel\Xi$ are independent and independent of $(U,B)$
(because they are orthogonal projections of a standard Gaussian).

\smallskip
\noindent\emph{Key claim:} conditioning on $U$, the parallel observation carries no information about $B$, hence
\begin{equation}\label{eq:mmse_condU_parallel_irrelevant}
\mmse(B\mid \tilde X_\gamma,U)=\mmse(B\mid \tilde X_\gamma^\perp,U).
\end{equation}
Indeed, given $U$, we can write $\tilde X_\gamma^\parallel=\sqrt{\gamma}\,U+\Xi^\parallel$ as a function of $U$
plus independent noise $\Xi^\parallel$, so $\tilde X_\gamma^\parallel\perp\!\!\!\perp (B,\tilde X_\gamma^\perp)\mid U$.
Therefore
\[
\Law(B\mid U,\tilde X_\gamma^\perp,\tilde X_\gamma^\parallel)=\Law(B\mid U,\tilde X_\gamma^\perp),
\]
which implies $\E[B\mid U,\tilde X_\gamma]=\E[B\mid U,\tilde X_\gamma^\perp]$ and thus \eqref{eq:mmse_condU_parallel_irrelevant}.

\smallskip
Next, by monotonicity of MMSE with respect to side information (conditioning on more cannot increase MMSE),
\begin{equation}\label{eq:mmse_more_info}
\mmse(B\mid \tilde X_\gamma)\le \mmse(B\mid \tilde X_\gamma^\perp),
\end{equation}
since $\sigma(\tilde X_\gamma^\perp)\subseteq \sigma(\tilde X_\gamma)$.

Combining \eqref{eq:mmse_condU_parallel_irrelevant} and \eqref{eq:mmse_more_info} yields the \emph{correct} pointwise bound
\begin{equation}\label{eq:mmse_gap_bound_perp_Sorrect}
\mmse(B\mid \tilde X_\gamma)-\mmse(B\mid \tilde X_\gamma,U)
\ \le\
\mmse(B\mid \tilde X_\gamma^\perp)-\mmse(B\mid \tilde X_\gamma^\perp,U).
\end{equation}
Plugging \eqref{eq:mmse_gap_bound_perp_Sorrect} into \eqref{eq:def_A_clean} gives $A_{\gamma_{max}}\le A^\perp$, where
\[
A^\perp:=\frac12\int_{\gamma^*}^{\infty}
\Big(\mmse(B\mid \tilde X_\gamma^\perp)-\mmse(B\mid \tilde X_\gamma^\perp,U)\Big)\,d\gamma.
\]

Finally, $\tilde X_\gamma^\perp=\sqrt{\gamma}\,B+\Xi^\perp$ is a Gaussian channel for $B$ (in the normal subspace);
applying Lemma~\ref{lem:conditional_immse} (after identifying an orthonormal basis of $\mathrm{range}(P_\perp)$, if desired)
gives
\[
A^\perp=\Big[I(B;\tilde X_\gamma^\perp)-I(B;\tilde X_\gamma^\perp\mid U)\Big]_{\gamma=\gamma^*}^{\infty}
\leq I(U;B\mid \tilde X_{\gamma^*}^\perp),
\]
Thus $A_{\gamma_{max}}\le I(U;B\mid \tilde X_{\gamma^*}^\perp)$, completing the lower bound.

\end{proof}

\section{Proof of Theorem~\ref{thm:total_KL_shannon_small}}

\noindent
The proof follows the same decomposition as the sampler. At the safe time \(t^*\), our
initialization is not the exact conditional law \(\Law(X_{t^*}\mid B=b)\), but a surrogate
law obtained by sampling the correct noisy normal component and then sampling the tangent
component from the unconditional slice given that normal observation. Thus the error splits
into an initialization term at time \(t^*\) and a pathwise term accumulated during the
reverse dynamics. The pathwise part is already controlled by
Theorem~\ref{thm:avg-path-kl}, so it remains to control the initialization discrepancy.
For this, we write both the true and surrogate tangent laws as mixtures over the discrete
latent normal code \(S^\perp=P_\perp S\), reduce the resulting KL divergence to a posterior
resampling error through a coupling inequality for mixture KL, and then bound that posterior
resampling error by Shannon entropy using the Gaussian I--MMSE identity.

\begin{lemma}
\label{lem:mixture_kl_coupling}
Let \(\{r_c\}_{c\in\mathcal C}\) be a family of probability measures on a measurable space
\((E,\mathcal E)\), where \(\mathcal C\) is countable. Let \(\alpha,\beta\) be probability
mass functions on \(\mathcal C\), and define
\[
\mu:=\sum_{c\in\mathcal C}\alpha(c)\,r_c,
\qquad
\nu:=\sum_{c\in\mathcal C}\beta(c)\,r_c.
\]
Then
\[
\KL(\mu\|\nu)
\le
\inf_{\lambda\in\Gamma(\alpha,\beta)}
\sum_{c,\tilde c\in\mathcal C}\lambda(c,\tilde c)\,
\KL(r_c\|r_{\tilde c}),
\]
where \(\Gamma(\alpha,\beta)\) denotes the set of couplings of \(\alpha\) and \(\beta\).
\end{lemma}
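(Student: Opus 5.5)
The plan is to use joint convexity of KL divergence, which is the standard route for mixture bounds. Fix any coupling \(\lambda\in\Gamma(\alpha,\beta)\), i.e.\ a probability mass function on \(\mathcal C\times\mathcal C\) with marginals \(\alpha\) and \(\beta\). We rewrite both mixtures as mixtures indexed by pairs:
\[
\mu=\sum_{c,\tilde c}\lambda(c,\tilde c)\,r_c,
\qquad
\nu=\sum_{c,\tilde c}\lambda(c,\tilde c)\,r_{\tilde c}.
\]
This is legitimate because summing \(\lambda\) over \(\tilde c\) recovers \(\alpha(c)\), and summing over \(c\) recovers \(\beta(\tilde c)\). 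Thus \(\mu\) and \(\nu\) become mixtures with the \emph{same} weights \(\lambda(c,\tilde c)\), and only the component measures differ.

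The main step is to apply joint convexity of \((P,Q)\mapsto \KL(P\|Q)\) to countable mixtures with common weights:
\[
\KL\Big(\sum_k w_k P_k\,\Big\|\,\sum_k w_k Q_k\Big)\le \sum_k w_k\,\KL(P_k\|Q_k).
\]
To justify this cleanly for countable index sets and a general measurable space, I would derive it from the data-processing inequality. Put on \(\mathcal C\times\mathcal C\times E\) the joint laws \(\lambda(c,\tilde c)\,r_c(\dd x)\) and \(\lambda(c,\tilde c)\,r_{\tilde c}(\dd x)\). Their KL divergence equals \(\sum\lambda(c,\tilde c)\,\KL(r_c\|r_{\tilde c})\) by the chain rule for KL, since the index marginals coincide. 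Projecting onto \(E\) yields \(\mu\) and \(\nu\) respectively, and data processing gives the inequality. Terms with infinite KL cause no trouble, because the bound is then trivial. Alternatively one could use the log-sum inequality pointwise on densities with respect to a dominating measure such as \(\sum_c 2^{-n(c)}r_c\) and integrate, but the data-processing route avoids any integrability subtleties.

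Since the resulting bound holds for every coupling \(\lambda\), taking the infimum over \(\Gamma(\alpha,\beta)\) gives the claim. The only delicate point is the countable and measure-theoretic justification of joint convexity, and the chain-rule/data-processing argument handles it. Nothing from earlier in the paper is needed beyond these standard KL facts.
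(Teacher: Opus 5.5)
Your proposal is correct and is the same argument as the paper's proof. The paper also fixes a coupling \(\lambda\), forms the joint laws \(\lambda(c,\tilde c)\,r_c(\dd x)\) and \(\lambda(c,\tilde c)\,r_{\tilde c}(\dd x)\) on \(\mathcal C\times\mathcal C\times E\), and applies data processing under the projection to \(E\) together with the chain rule (the index marginals coincide); it then takes the infimum over couplings, so your ``joint convexity'' framing is just a name for this same argument.
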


\begin{proof}
Fix any coupling \(\lambda\in\Gamma(\alpha,\beta)\). Define two probability measures on
\(\mathcal C\times\mathcal C\times E\) by
\[
\mathcal{P}_\lambda(c,\tilde c,dx):=\lambda(c,\tilde c)\,r_c(dx),
\qquad
Q_\lambda(c,\tilde c,dx):=\lambda(c,\tilde c)\,r_{\tilde c}(dx).
\]
Their \(E\)-marginals are exactly \(\mu\) and \(\nu\). Therefore, by data processing under
the projection \((c,\tilde c,x)\mapsto x\),
\[
\KL(\mu\|\nu)\le \KL(\mathcal{P}_\lambda\|Q_\lambda).
\]
Since \(\mathcal{P}_\lambda\) and \(Q_\lambda\) have the same \((c,\tilde c)\)-marginal \(\lambda\),
the chain rule for relative entropy gives
\[
\KL(\mathcal{P}_\lambda\|Q_\lambda)
=
\sum_{c,\tilde c}\lambda(c,\tilde c)\,\KL(r_c\|r_{\tilde c}).
\]
Taking the infimum over \(\lambda\in\Gamma(\alpha,\beta)\) yields the claim.
\end{proof}

\begin{lemma}
\label{lem:posterior_resample_shannon}
Let \(S\) be a discrete random variable in \(\R^m\) with \(H(C)<\infty\), and let
\[
X=S+\sigma G,
\qquad
G\sim\Nc(0,I_m),
\]
with \(G\) independent of \(S\). Let \(\tilde S\) be an independent posterior draw, i.e.
\[
\tilde S\mid X \sim \Law(S\mid X),
\qquad
\tilde S \perp S \mid X.
\]
Then
\[
\E\|S-\tilde S\|_2^2 \le 4\sigma^2 H(C).
\]
\end{lemma}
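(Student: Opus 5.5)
The plan is to reduce the posterior-resampling error to an MMSE, and then control that MMSE by the I--MMSE identity together with the trivial bound $I\le H$. Here $H(C)$ is read as $H(S)$, the entropy of the discrete input. First, since $S$ and $\tilde S$ are conditionally i.i.d.\ given $X$ with common conditional mean $m(X):=\E[S\mid X]$, expanding around $m(X)$ kills the cross term:
\[
\E\|S-\tilde S\|_2^2=\E\|S-m(X)\|_2^2+\E\|\tilde S-m(X)\|_2^2=2\,\mmse(S\mid X).
\]
It therefore suffices to show $\mmse(S\mid X)\le 2\sigma^2 H(S)$.

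Next I rescale to the channel of Lemma~\ref{lem:conditional_immse}. Set $Y_\gamma:=\sqrt\gamma\,S+G$. Then $X=\sigma Y_{\gamma_0}$ with $\gamma_0:=1/\sigma^2$, so conditioning on $X$ is the same as conditioning on $Y_{\gamma_0}$. Write $\mathrm{m}(\gamma):=\mmse(S\mid Y_\gamma)$; this is the MMSE of $S$ itself, not of $\sqrt\gamma S$. The unconditional I--MMSE identity (Lemma~\ref{lem:conditional_immse} with trivial $S$-side information) gives $\frac{\dd}{\dd\gamma}I(S;Y_\gamma)=\tfrac12\mathrm m(\gamma)$, and $I(S;Y_0)=0$. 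Hence
\[
\tfrac12\int_0^{\gamma_0}\mathrm m(\gamma)\,\dd\gamma=I(S;Y_{\gamma_0})\le H(S),
\]
where the last inequality holds because $S$ is discrete. The function $\gamma\mapsto\mathrm m(\gamma)$ is nonincreasing: $Y_{\gamma'}$ for $\gamma'<\gamma$ can be obtained from $Y_\gamma$ by scaling and adding independent Gaussian noise, so it carries less information. Therefore $\tfrac12\gamma_0\,\mathrm m(\gamma_0)\le H(S)$, which gives $\mmse(S\mid X)=\mathrm m(\gamma_0)\le 2H(S)/\gamma_0=2\sigma^2H(S)$. Combined with the first display, this yields $\E\|S-\tilde S\|_2^2\le 4\sigma^2H(S)$.

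The main obstacle is justifying I--MMSE when $S$ may have infinite second moment. Lemma~\ref{lem:conditional_immse} assumes $\E\|X\|^2<\infty$, while only $H(S)<\infty$ is given. I would handle this by truncation. Let $S_n:=S$ conditioned on $\{S\in K_n\}$ for finite sets $K_n\uparrow$ the support. Each $S_n$ is bounded, so the argument above applies and gives $\E\|S_n-\tilde S_n\|^2\le 4\sigma^2H(S_n)$. One then passes to the limit: $H(S_n)\to H(S)$ because $H(S)<\infty$, and the posteriors of $S_n$ given $X$ converge, so Fatou's lemma handles the left-hand side. Alternatively, one can invoke the extension of I--MMSE to arbitrary inputs with $I(S;Y_\gamma)<\infty$. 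The monotonicity of $\mathrm m$ and the vanishing of $I(S;Y_0)$ are routine and need no moment assumption.
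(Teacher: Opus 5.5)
Your proof is correct and follows the paper's argument essentially step for step. Both use the conditional i.i.d.\ identity $\E\|S-\tilde S\|_2^2=2\,\mmse(S\mid X)$, rescale to $Y_\gamma=\sqrt\gamma\,S+G$, apply the integral I--MMSE identity with monotonicity of the MMSE, and finish with $I(S;Y_\gamma)\le H(S)$. Your truncation argument covers a second-moment hypothesis for I--MMSE that the paper's proof leaves implicit; where the paper applies the lemma, this hypothesis already follows from Assumption~\ref{assumption0}, since $\E\|Z\|^2=\E\|S\|^2+\varepsilon^2 d$.
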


\begin{proof}
Conditional on \(X\), the random variables \(C\) and \(\tilde C\) are i.i.d. with common
law \(\Law(C\mid X)\). Hence
\[
\E\!\left[\|S-\tilde S\|_2^2\,\middle|\,X\right]
=
2\,\tr\!\big(\Cov(S\mid X)\big),
\]
so
\begin{equation}
\label{eq:resample_cov_identity}
\E\|S-\tilde S\|_2^2
=
2\,\E\tr\!\big(\Cov(S\mid X)\big).
\end{equation}
Set \(\gamma:=1/\sigma^2\) and \(Y_\gamma:=\sqrt{\gamma}\,S+G\). Since \(X=\sigma Y_\gamma\),
the observations \(X\) and \(Y_\gamma\) are equivalent, and
\[
\E\tr\!\big(\Cov(S\mid X)\big)
=
\E\tr\!\big(\Cov(S\mid Y_\gamma)\big)
=:\mmse(\gamma).
\]
For the Gaussian channel \(Y_\gamma=\sqrt{\gamma}\,S+G\), the vector I--MMSE identity gives
\[
I(S;Y_\gamma)=\frac12\int_0^\gamma \mmse(s)\,ds.
\]
Since \(\mmse(s)\) is nonincreasing in \(s\),
\[
I(S;Y_\gamma)\ge \frac{\gamma}{2}\,\mmse(\gamma).
\]
Because \(C\) is discrete,
\[
I(S;Y_\gamma)\le H(S).
\]
Thus
\[
\mmse(\gamma)\le \frac{2H(S)}{\gamma}=2\sigma^2 H(S).
\]
Substituting into \eqref{eq:resample_cov_identity} gives
\[
\E\|S-\tilde S\|_2^2 \le 4\sigma^2 H(S).
\]
\end{proof}

\begin{proof}
Let
\[
r_t^c:=\Law(X_t^\parallel\mid S^\perp=c),
\qquad c\in\Cc^\perp.
\]
Under Assumption~\ref{ass:latent_gaussian_mixture},
\[
Z=S+\varepsilon N,
\qquad
B=P_\perp Z=S^\perp +\varepsilon N^\perp,
\qquad
X_t^\perp=B+W_t^\perp.
\]
Since the tangent and normal noises are independent, conditional on \(S^\perp\) the variable
\(X_t^\parallel\) is independent of both \(B\) and \(X_t^\perp\). Therefore, if
\[
\pi_b(c):=\PP(S^\perp=c\mid B=b),
\qquad
\pi_x(c):=\PP(S^\perp=c\mid X_t^\perp=x),
\]
then
\[
\Law(X_t^\parallel\mid B=b)=\sum_{c\in\Cc^\perp}\pi_b(c)\,r_t^c,
\qquad
\Law(X_t^\parallel\mid X_t^\perp=x)=\sum_{c\in\Cc^\perp}\pi_x(c)\,r_t^c.
\]

Moreover, conditional on \(B=b\), the normal component is \(X_t^\perp=b+W_t^\perp\), and it
is independent of \(X_t^\parallel\). Hence the true conditional law and the surrogate
initialization law factorize as
\[
p_t^{*,b}(x^\perp,x^\parallel)
=
p_t(x^\perp\mid B=b)\,\mu_t^b(x^\parallel),
\qquad
\mu_t^b:=\sum_{c\in\Cc^\perp}\pi_b(c)\,r_t^c,
\]
and
\[
\hat p_t^{\,b}(x^\perp,x^\parallel)
=
p_t(x^\perp\mid B=b)\,\nu_t^{x^\perp}(x^\parallel),
\qquad
\nu_t^x:=\sum_{c\in\Cc^\perp}\pi_x(c)\,r_t^c.
\]
Therefore
\begin{equation}
\label{eq:init_kl_conditional_expectation_clean}
\KL(p_t^{*,b}\|\hat p_t^{\,b})
=
\E\!\left[\KL(\mu_t^b\|\nu_t^{X_t^\perp})\,\middle|\,B=b\right].
\end{equation}

Applying Lemma~\ref{lem:mixture_kl_coupling} and choosing the product coupling
\(\lambda=\pi_b\otimes\pi_x\), we obtain
\[
\KL(\mu_t^b\|\nu_t^x)
\le
\sum_{c,\tilde c}\pi_b(c)\pi_x(\tilde c)\,\KL(r_t^c\|r_t^{\tilde c}).
\]
By Assumption~\ref{ass:quad_log_t0_kernel},
\[
\KL(r_t^c\|r_t^{\tilde c})\le L_t\|c-\tilde c\|_2^2,
\qquad t\ge t_0,
\]
and hence
\[
\KL(\mu_t^b\|\nu_t^x)
\le
L_t\sum_{c,\tilde c}\pi_b(c)\pi_x(\tilde c)\|c-\tilde c\|_2^2.
\]
Substituting into \eqref{eq:init_kl_conditional_expectation_clean} and averaging over \(B\)
gives
\begin{equation}
\label{eq:init_kl_by_resampling_clean}
\E_B\!\left[\KL(p_t^{*,B}\|\hat p_t^{\,B})\right]
\le
L_t\,\E\|S^\perp-\tilde S^\perp\|_2^2,
\end{equation}
where, conditional on \(X_t^\perp\), the random variable \(\tilde S^\perp\) is an
independent draw from \(\Law(S^\perp\mid X_t^\perp)\).

Since
\[
X_t^\perp
=
S^\perp+\varepsilon N^\perp+W_t^\perp
=
S^\perp+\sqrt{t+\varepsilon^2}\,G,
\qquad G\sim\Nc(0,I_m),
\]
Lemma~\ref{lem:posterior_resample_shannon} yields
\[
\E\|S^\perp-\tilde S^\perp\|_2^2
\le
4(t+\varepsilon^2)H(S^\perp).
\]
Combining this with \eqref{eq:init_kl_by_resampling_clean} and setting \(t=t^*\), we obtain
\begin{equation}
\label{eq:init_shannon_bound_clean}
\E_B\!\left[\KL(p_{t^*}^{*,B}\|\hat p_{t^*}^{\,B})\right]
\le
4L_{t^*}(t^*+\varepsilon^2)H(S^\perp).
\end{equation}

For each \(b\), let \(\PP^{*,b}\) be the path measure of the true conditional reverse SDE on
\([\tau^*,T-t_0]\), started from \(p_{t^*}^{*,b}\), and let \(\hat{\PP}^{\,b}\) be the path
measure of the surrogate reverse SDE on the same interval, started from \(\hat p_{t^*}^{\,b}\).
Let \(\tilde{\PP}^{\,b}\) denote the path measure obtained by running the surrogate reverse
SDE from the true initial law \(p_{t^*}^{*,b}\). Then
\[
\KL(\PP^{*,b}\|\hat{\PP}^{\,b})
=
\KL(\PP^{*,b}\|\tilde{\PP}^{\,b})
+
\KL(p_{t^*}^{*,b}\|\hat p_{t^*}^{\,b}).
\]
Averaging over \(B\) yields
\[
\E_B\!\big[\KL(\PP^{*,B}\|\hat{\PP}^{\,B})\big]
=
\E_B\!\big[\KL(\PP^{*,B}\|\tilde{\PP}^{\,B})\big]
+
\E_B\!\big[\KL(p_{t^*}^{*,B}\|\hat p_{t^*}^{\,B})\big].
\]
By Theorem~\ref{thm:avg-path-kl},
\[
\E_B\!\big[\KL(\PP^{*,B}\|\tilde{\PP}^{\,B})\big]
\le
I\!\big(Z^\parallel;Z^\perp\mid X_{t^*}\big),
\]
and by Proposition~\ref{prop:latent_cmi_dpi},
\[
I\!\big(Z^\parallel;Z^\perp\mid X_{t^*}\big)
\le
I\!\big(S^\parallel;S^\perp\mid X_{t^*}\big).
\]
Together with \eqref{eq:init_shannon_bound_clean}, this gives
\[
\E_B\!\big[\KL(\PP^{*,B}\|\hat{\PP}^{\,B})\big]
\le
4L_{t^*}(t^*+\varepsilon^2)H(S^\perp)
+
I\!\big(S^\parallel;S^\perp\mid X_{t^*}\big).
\]

Finally, the terminal tangent marginal is a measurable image of path space, so by data
processing,
\[
\E_B\!\left[\KL\!\big(\mu_{T-t_0}^{*,B}\,\Vert\,\hat\mu_{T-t_0}^{\,B}\big)\right]
\le
\E_B\!\big[\KL(\PP^{*,B}\|\hat{\PP}^{\,B})\big].
\]
This proves \eqref{eq:total_KL_shannon_small}.
\end{proof}

\section{Proof of Theorem~\ref{thm:total_error_sep_renyi_single}}

\noindent
The proof again separates initialization and pathwise contributions, but now the
\(\delta\)-separation assumption upgrades both bounds from Shannon-scale control to
exponential control. The initialization term is handled through the same mixture
representation as above, followed by a posterior-resampling tail estimate for the effective
normal Gaussian channel. The pathwise term is bounded through
Theorem~\ref{thm:avg-path-kl}, the latent comparison proposition, and an exponential bound
on the residual conditional entropy \(H(S^\perp\mid X_{t^*}^\perp)\).

\begin{lemma}
\label{lem:tail_resampling_C}
Let \(C\) be a discrete random variable supported on a countable set
\(\Cc^\perp\subset\R^m\) with pmf \(p_S\), and define
\[
H_{1/2}(S):=2\log\sum_{c\in\Cc^\perp}\sqrt{p_S(c)}.
\]
Fix \(t>0\), and consider the Gaussian channel
\[
X\mid (S=c)\sim \Nc(c,(t+\varepsilon^2)I_m).
\]
Let \(p_t(x\mid c)\) denote this Gaussian density and let \(p_t(c\mid x)\) be the posterior
\[
p_t(c\mid x)=
\frac{p_S(c)\,p_t(x\mid c)}
{\sum_{u\in\Cc^\perp}p_S(u)\,p_t(x\mid u)}.
\]
For each \(c^*\in\Cc^\perp\), define the posterior-resampling kernel
\[
K_t(c,c^*):=\int p_t(c\mid x)\,p_t(x\mid c^*)\,dx.
\]
Let \(S^*\sim p_S\), and conditional on \(S^*=c^*\), let \(\tilde S\) have pmf
\(K_t(\cdot,c^*)\). Set
\[
R:=\|\tilde S-S^*\|_2.
\]
Then for every \(r\ge 0\),
\[
\PP(R\ge r)
\le
\frac12\exp\!\Big(H_{1/2}(S)-\frac{r^2}{8(t+\varepsilon^2)}\Big).
\]
\end{lemma}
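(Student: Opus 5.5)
Write \(\sigma^2:=t+\varepsilon^2\) and \(\phi(x\mid c):=p_t(x\mid c)\). By definition of the resampling kernel,
\[
\PP(R\ge r)=\sum_{c^*}\sum_{c:\|c-c^*\|\ge r}p_S(c^*)\,K_t(c,c^*)
=\sum_{\|c-c^*\|\ge r}\int \frac{p_S(c)p_S(c^*)\,\phi(x\mid c)\,\phi(x\mid c^*)}{q(x)}\,dx,
\]
where \(q(x):=\sum_u p_S(u)\phi(x\mid u)\) is the marginal density of \(X\). The plan is to bound each summand pairwise, so that the posterior normalization is eliminated and a Bhattacharyya coefficient between two Gaussians remains.

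\emph{Pairwise bound.} For a fixed ordered pair \((c,c^*)\) we use \(q(x)\ge p_S(c)\phi(x\mid c)+p_S(c^*)\phi(x\mid c^*)\) together with AM--GM, \(a+b\ge 2\sqrt{ab}\). This gives
\[
\frac{p_S(c)p_S(c^*)\phi(x\mid c)\phi(x\mid c^*)}{q(x)}\le \tfrac12\sqrt{p_S(c)p_S(c^*)}\,\sqrt{\phi(x\mid c)\phi(x\mid c^*)}.
\]
When \(c=c^*\) this is not needed, since that term is excluded for \(r>0\). For \(r=0\) the bound is trivial, because \(\tfrac12 e^{H_{1/2}}\ge\tfrac12\) (recall \(H_{1/2}\ge 0\)) and the probability is at most one. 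For \(r\le 0\) there is nothing to prove.

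\emph{Gaussian integral.} The Bhattacharyya integral of two isotropic Gaussians is explicit:
\[
\int\sqrt{\phi(x\mid c)\phi(x\mid c^*)}\,dx=\exp\!\big(-\|c-c^*\|^2/(8\sigma^2)\big).
\]
On the region \(\|c-c^*\|\ge r\) this is at most \(e^{-r^2/(8\sigma^2)}\). We then extend the double sum to all pairs and use \(\sum_{c,c^*}\sqrt{p_S(c)p_S(c^*)}=\big(\sum_c\sqrt{p_S(c)}\big)^2=e^{H_{1/2}(S)}\), which yields
\[
\PP(R\ge r)\le\tfrac12\exp\!\big(H_{1/2}(S)-r^2/(8\sigma^2)\big).
\]

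\emph{Expected obstacle.} The only delicate step is the pairwise lower bound on \(q\). It drops the other mixture terms, which is legitimate because all terms are nonnegative, but it requires \(c\neq c^*\) so that the two retained terms are distinct. For \(c=c^*\) we would have \(q\ge p_S(c)\phi\) only, and the AM--GM step would fail. This is exactly why the argument applies only off the diagonal, which is guaranteed for \(r>0\). Fubini/Tonelli justifies exchanging the countable sums with the integral, since everything is nonnegative. The finiteness \(H_{1/2}<\infty\) is not needed for the inequality itself, since otherwise the bound is vacuous.
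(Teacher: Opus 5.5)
For $r>0$ your argument is correct and is the paper's proof in symmetrized form. The paper bounds $p_t(c\mid x)\le A/(A+B)$ with $A=p_S(c)p_t(x\mid c)$ and $B=p_S(c^*)p_t(x\mid c^*)$. It then uses AM--GM to get $K_t(c,c^*)\le\frac12\sqrt{p_S(c)/p_S(c^*)}\,e^{-\|c-c^*\|^2/(8(t+\varepsilon^2))}$, sums over $\|c-c^*\|\ge r$, and averages over $S^*$. This is exactly your pairwise bound on $AB/q$, followed by the Bhattacharyya integral and $(\sum_c\sqrt{p_S(c)})^2=e^{H_{1/2}(S)}$. You are more careful than the paper in noting that the pairwise step needs $c\neq c^*$. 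The paper applies it without comment, and on the diagonal it would assert $p_t(c^*\mid x)\le\frac12$, which is false in general.

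The step that fails is your treatment of $r=0$, which is a non sequitur. From $\PP(R\ge 0)\le 1$ and $\frac12 e^{H_{1/2}}\ge\frac12$ nothing follows; you would need $\frac12 e^{H_{1/2}}\ge 1$, i.e.\ $H_{1/2}(S)\ge\log 2$. This cannot be repaired, because the inequality is false at $r=0$. There $\PP(R\ge 0)=1$, while for deterministic $S$ the right-hand side equals $\frac12$, and it stays below $1$ whenever $(\sum_c\sqrt{p_S(c)})^2<2$. What your off-diagonal computation actually proves, for every $r\ge 0$, is $\PP(R\ge r,\ \tilde S\neq S^*)\le\frac12\big(e^{H_{1/2}(S)}-1\big)e^{-r^2/(8(t+\varepsilon^2))}$, and this coincides with $\PP(R\ge r)$ for $r>0$. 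So the lemma should be stated for $r>0$ only. That restriction costs nothing downstream, since the proof of Theorem~\ref{thm:total_error_sep_renyi_single} evaluates the tail only at $r\ge\delta>0$.
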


\begin{proof}
Fix \(c,c^*\in\Cc^\perp\) and \(x\in\R^m\). By Bayes' rule,
\[
p_t(c\mid x)
\le
\frac{p_S(c)p_t(x\mid c)}
{p_S(c)p_t(x\mid c)+p_S(c^*)p_t(x\mid c^*)}.
\]
Writing \(A:=p_S(c)p_t(x\mid c)\) and \(B:=p_S(c^*)p_t(x\mid c^*)\), the inequality
\(A+B\ge 2\sqrt{AB}\) gives
\[
\frac{A}{A+B}
\le
\frac12\sqrt{\frac{A}{B}}
=
\frac12\sqrt{\frac{p_S(c)}{p_S(c^*)}}
\sqrt{\frac{p_t(x\mid c)}{p_t(x\mid c^*)}}.
\]
Multiplying by \(p_t(x\mid c^*)\) and integrating over \(x\), we obtain
\[
K_t(c,c^*)
\le
\frac12\sqrt{\frac{p_S(c)}{p_S(c^*)}}
\int \sqrt{p_t(x\mid c)p_t(x\mid c^*)}\,dx.
\]
For isotropic Gaussians with covariance \((t+\varepsilon^2)I_m\), the Hellinger affinity is
\[
\int \sqrt{p_t(x\mid c)p_t(x\mid c^*)}\,dx
=
\exp\!\Big(-\frac{\|c-c^*\|_2^2}{8(t+\varepsilon^2)}\Big).
\]
Hence
\[
K_t(c,c^*)
\le
\frac12\sqrt{\frac{p_S(c)}{p_S(c^*)}}
\exp\!\Big(-\frac{\|c-c^*\|_2^2}{8(t+\varepsilon^2)}\Big).
\]
Let
\[
M:=\sum_{c\in\Cc^\perp}\sqrt{p_S(c)},
\qquad
M^2=e^{H_{1/2}(C)}.
\]
Then
\[
\PP(R\ge r\mid S^*=c^*)
=
\sum_{\|c-c^*\|\ge r}K_t(c,c^*)
\le
\frac{M}{2\sqrt{p_S(c^*)}}
e^{-r^2/(8(t+\varepsilon^2))}.
\]
Averaging over \(S^*\sim p_S\) yields
\[
\PP(R\ge r)
\le
\frac12\exp\!\Big(H_{1/2}(S)-\frac{r^2}{8(t+\varepsilon^2)}\Big).
\]
\end{proof}

\begin{lemma}
\label{lem:cond_entropy_sep_S}
Assume Assumption~\ref{ass:delta_sep_S} and \(H_{1/2}(S^\perp)<\infty\). Then
\[
H(S^\perp\mid X_{t^*}^\perp)
\le
2\exp\!\Big(H_{1/2}(S^\perp)-\frac{\delta^2}{8(t^*+\varepsilon^2)}\Big).
\]
\end{lemma}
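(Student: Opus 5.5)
We bound the conditional entropy by the expected log-loss of the posterior, and then control that log-loss through the same Bayes/Hellinger computation used in Lemma~\ref{lem:tail_resampling_C}. Write \(\sigma_*^2:=t^*+\varepsilon^2\) and \(X:=X_{t^*}^\perp=S^\perp+\sigma_* G\). Let \(p(c\mid x)\) be the posterior of \(S^\perp\) given \(X=x\). Then
\[
H(S^\perp\mid X)=\E\big[-\log p(S^\perp\mid X)\big].
\]
The elementary inequality \(-\log u\le (1-u)/u\), or more conveniently \(-\log u\le \sqrt{(1-u)/u}\) for \(u\in(0,1]\), turns the log-loss into a sum of posterior odds of wrong codes. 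Concretely, for the true code \(c^*\),
\[
-\log p(c^*\mid x)=\log\Big(1+\sum_{c\ne c^*}\frac{p_S(c)p(x\mid c)}{p_S(c^*)p(x\mid c^*)}\Big).
\]
Using \(\log(1+y)\le 2\sqrt{y}\) together with subadditivity of the square root, this is at most
\[
2\sum_{c\ne c^*}\sqrt{\frac{p_S(c)}{p_S(c^*)}}\sqrt{\frac{p(x\mid c)}{p(x\mid c^*)}} .
\]

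Next we take the expectation over \(X\mid S^\perp=c^*\sim\Nc(c^*,\sigma_*^2 I)\). Exactly as in the proof of Lemma~\ref{lem:tail_resampling_C}, the integral of \(\sqrt{p(x\mid c)p(x\mid c^*)}\) is the Gaussian Hellinger affinity \(\exp(-\|c-c^*\|^2/(8\sigma_*^2))\). By Assumption~\ref{ass:delta_sep_S}, every \(c\ne c^*\) satisfies \(\|c-c^*\|\ge\delta\), so each term is at most \(\exp(-\delta^2/(8\sigma_*^2))\). Averaging over \(c^*\sim p_S\) gives
\[
\sum_{c^*}p_S(c^*)\sum_{c\ne c^*}\frac{\sqrt{p_S(c)}}{\sqrt{p_S(c^*)}}\le\Big(\sum_c\sqrt{p_S(c)}\Big)^2=e^{H_{1/2}(S^\perp)}.
\]
Together these yield \(2\exp(H_{1/2}-\delta^2/(8\sigma_*^2))\), which is exactly the claimed constant.

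The main point to check is the inequality \(\log(1+y)\le 2\sqrt y\) for all \(y\ge0\). Set \(y=s^2\) and \(g(s)=2s-\log(1+s^2)\). Then \(g(0)=0\) and \(g'(s)=2-2s/(1+s^2)\ge 1>0\), so \(g\ge 0\) and the inequality holds. The remaining step is the subadditivity \(\sqrt{\sum y_c}\le\sum\sqrt{y_c}\), which is standard. Finiteness of \(H_{1/2}\) justifies exchanging sums and integrals by Tonelli, since all terms are nonnegative. If a sharper constant were needed, one could instead use the bound \(\log(1+y)\le \sqrt y\cdot\min(1,\dots)\), but the factor \(2\) matches the statement, so this route should suffice.
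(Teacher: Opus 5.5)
Your argument is correct and is essentially the paper's proof: posterior odds $l_c(x)$ of each wrong code against the true code $c^*$, a square-root bound giving $2\sum_{c\ne c^*}\sqrt{l_c(x)}$, the Gaussian Hellinger affinity $e^{-\|c-c^*\|_2^2/(8\sigma_*^2)}$, $\delta$-separation, and $\sum_{c^*}\sqrt{\pi_{c^*}}\sum_{c\ne c^*}\sqrt{\pi_c}\le e^{H_{1/2}(S^\perp)}$. The only difference is your starting point $H(S^\perp\mid X)=\E[-\log p(S^\perp\mid X)]=\E\big[\log\big(1+\sum_{c\ne S^\perp}l_c(X)\big)\big]$. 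The paper instead expands the pointwise entropy as $\log(1+R(x))+\sum_{c\ne c^*}\frac{l_c(x)}{1+R(x)}\log\frac{1}{l_c(x)}$ and bounds the second term by another $\sum_{c\ne c^*}\sqrt{l_c(x)}$. That second term has mean zero when $c^*$ is the true code, so your route combined with $\log(1+y)\le\sqrt{y}$ (which you also mention) gives the bound without the factor $2$.
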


\begin{proof}
Write \(\pi_c:=\PP(S^\perp=c)\) for \(c\in\Cc^\perp\), and fix \(c^*\in\Cc^\perp\). Conditional on
\(S^\perp=c^*\),
\[
X_{t^*}^\perp=c^*+\sqrt{t^*+\varepsilon^2}\,G,
\qquad
G\sim\Nc(0,I_m).
\]
For \(x\in\R^m\), let
\[
p_x(c):=\PP(S^\perp=c\mid X_{t^*}^\perp=x).
\]
Then
\[
H(S^\perp\mid X_{t^*}^\perp=x)
=
\sum_{c\in\Cc^\perp} p_x(c)\log\frac{1}{p_x(c)}.
\]
For \(c\neq c^*\), define
\[
l_c(x):=\frac{p_x(c)}{p_x(c^*)},
\qquad
R(x):=\sum_{c\neq c^*} l_c(x).
\]
Then
\[
p_x(c^*)=\frac{1}{1+R(x)},
\qquad
p_x(c)=\frac{l_c(x)}{1+R(x)}\quad(c\neq c^*),
\]
and therefore
\[
H(S^\perp\mid X_{t^*}^\perp=x)
=
\sum_{c\neq c^*}\frac{l_c(x)}{1+R(x)}\log\frac{1}{l_c(x)}
+\log(1+R(x)).
\]
Using
\[
u\log\frac{1}{u}\le \sqrt{u}\quad(0<u\le 1),
\qquad
\log(1+v)\le \sqrt{v}\quad(v\ge 0),
\]
and discarding the nonpositive terms with \(r_c(x)>1\), we obtain
\[
H(S^\perp\mid X_{t^*}^\perp=x)
\le
2\sum_{c\neq c^*}\sqrt{l_c(x)}.
\]

By Bayes' rule,
\[
l_c(x)
=
\frac{\pi_c}{\pi_{c^*}}
\frac{\varphi_{\sigma_*^2}(x-c)}{\varphi_{\sigma_*^2}(x-c^*)},
\qquad
\sigma_*^2:=t^*+\varepsilon^2,
\]
where \(\varphi_{\sigma_*^2}\) is the Gaussian density with covariance \(\sigma_*^2I_m\).
Writing \(x=c^*+w\), we obtain
\[
\sqrt{l_c(x)}
=
\sqrt{\frac{\pi_c}{\pi_{c^*}}}
\exp\!\left(
-\frac{\|c-c^*\|_2^2}{4\sigma_*^2}
+
\frac{\langle w,c-c^*\rangle}{2\sigma_*^2}
\right).
\]
Taking expectation over \(w\sim\Nc(0,\sigma_*^2I_m)\),
\[
\E\!\left[\sqrt{l_c(X_{t^*}^\perp)}\,\middle|\,S^\perp=c^*\right]
=
\sqrt{\frac{\pi_c}{\pi_{c^*}}}
\exp\!\left(-\frac{\|c-c^*\|_2^2}{8\sigma_*^2}\right).
\]
By \(\delta\)-separation,
\[
\|c-c^*\|_2\ge \delta
\qquad(c\neq c^*),
\]
and hence
\[
\E\!\left[\sqrt{l_c(X_{t^*}^\perp)}\,\middle|\,S^\perp=c^*\right]
\le
\sqrt{\frac{\pi_c}{\pi_{c^*}}}
e^{-\delta^2/(8\sigma_*^2)}.
\]
Therefore
\[
\E\!\left[H(S^\perp\mid X_{t^*}^\perp)\,\middle|\,S^\perp=c^*\right]
\le
2e^{-\delta^2/(8\sigma_*^2)}
\sum_{c\neq c^*}\sqrt{\frac{\pi_c}{\pi_{c^*}}}.
\]
Averaging over \(S^\perp\) gives
\[
H(S^\perp\mid X_{t^*}^\perp)
\le
2e^{-\delta^2/(8\sigma_*^2)}
\sum_{c^*}\sqrt{\pi_{c^*}}
\sum_{c\neq c^*}\sqrt{\pi_c}
\le
2e^{-\delta^2/(8\sigma_*^2)}
\Big(\sum_{c\in\Cc^\perp}\sqrt{\pi_c}\Big)^2.
\]
Since
\[
\Big(\sum_{c\in\Cc^\perp}\sqrt{\pi_c}\Big)^2=e^{H_{1/2}(S^\perp)},
\]
the claim follows.
\end{proof}

\begin{proof}
Let
\[
\sigma_*^2:=t^*+\varepsilon^2,
\qquad
H_{1/2}:=H_{1/2}(S^\perp).
\]
As in the proof of Theorem~\ref{thm:total_KL_shannon_small}, the true and surrogate tangent
laws at time \(t\) can be written as mixtures over \(S^\perp\), and Lemma~\ref{lem:mixture_kl_coupling}
therefore yields
\[
\KL(p_t^{*,b}\|\hat p_t^{\,b})
\le
\sum_{c,\tilde c}\pi_b(c)\pi_x(\tilde c)\,\KL(r_t^c\|r_t^{\tilde c}).
\]
Using Assumption~\ref{ass:quad_log_t0_kernel} at time \(t=t^*\), we get
\[
\KL(r_{t^*}^c\|r_{t^*}^{\tilde c})
\le
L_{t^*}\|c-\tilde c\|_2^2.
\]
Hence
\[
\E_B\!\Big[\KL\!\big(p_{t^*}^{*,B}\,\Vert\,\hat p_{t^*}^{\,B}\big)\Big]
\le
L_{t^*}\,\E\|{S^*}^{\perp}-\tilde S^\perp\|_2^2,
\]
where \({S^*}^{\perp}\sim p_{S^\perp}\), and conditional on \({S^*}^{\perp}=c^*\), the variable
\(\tilde S^\perp\) is drawn from the posterior-resampling kernel of the effective channel
\[
X_{t^*}^\perp\mid(S^\perp=c)\sim\Nc(c,\sigma_*^2I_m).
\]

Let
\[
R:=\|{S^*}^{\perp}-\tilde S^\perp\|_2.
\]
By Assumption~\ref{ass:delta_sep_S}, either \(R=0\) or \(R\ge \delta\). Therefore
\[
\E[R^2]
=
\int_0^\infty \PP(R^2\ge s)\,ds
=
\int_0^{\delta^2}\PP(R\ge\delta)\,ds
+
\int_{\delta^2}^\infty \PP(R\ge \sqrt{s})\,ds.
\]
Applying Lemma~\ref{lem:tail_resampling_C},
\[
\PP(R\ge r)
\le
\frac12\exp\!\Big(H_{1/2}-\frac{r^2}{8\sigma_*^2}\Big),
\]
we obtain
\[
\E[R^2]
\le
\frac{\delta^2}{2}\exp\!\Big(H_{1/2}-\frac{\delta^2}{8\sigma_*^2}\Big)
+
\int_{\delta^2}^{\infty}
\frac12\exp\!\Big(H_{1/2}-\frac{s}{8\sigma_*^2}\Big)\,ds.
\]
Evaluating the integral yields
\[
\E[R^2]
\le
\Big(\frac{\delta^2}{2}+4\sigma_*^2\Big)
\exp\!\Big(H_{1/2}-\frac{\delta^2}{8\sigma_*^2}\Big),
\]
and therefore
\[
\E_B\!\Big[\KL\!\big(p_{t^*}^{*,B}\,\Vert\,\hat p_{t^*}^{\,B}\big)\Big]
\le
L_{t^*}\Big(\frac{\delta^2}{2}+4\sigma_*^2\Big)
\exp\!\Big(H_{1/2}-\frac{\delta^2}{8\sigma_*^2}\Big).
\]

For the pathwise term, Theorem~\ref{thm:avg-path-kl} gives
\[
\E_B\!\Big[\KL\!\big(\PP^{Y^{*,B}}\Vert \PP^{\tilde Y^{\,B}}\big)\Big]
\le
I\!\big(Z^\parallel;Z^\perp\mid X_{t^*}\big).
\]
Using Proposition~\ref{prop:latent_cmi_dpi},
\[
I\!\big(Z^\parallel;Z^\perp\mid X_{t^*}\big)
\le
I\!\big(S^\parallel;S^\perp\mid X_{t^*}\big).
\]
Since \(S^\perp=S^\perp\),
\[
I\!\big(S^\parallel;S^\perp\mid X_{t^*}\big)
=
I\!\big(S^\parallel;S^\perp\mid X_{t^*}\big)
\le
H(S^\perp\mid X_{t^*})
\le
H(S^\perp\mid X_{t^*}^\perp).
\]
Lemma~\ref{lem:cond_entropy_sep_S} now implies
\[
H(S^\perp\mid X_{t^*}^\perp)
\le
2\exp\!\Big(H_{1/2}-\frac{\delta^2}{8\sigma_*^2}\Big).
\]
Hence
\[
\E_B\!\Big[\KL\!\big(\PP^{Y^{*,B}}\Vert \PP^{\tilde Y^{\,B}}\big)\Big]
\le
2\exp\!\Big(H_{1/2}-\frac{\delta^2}{8\sigma_*^2}\Big).
\]

Combining the initialization and pathwise bounds yields
\[
\E_B\!\Big[
\KL\!\big(p_{t^*}^{*,B}\,\Vert\,\hat p_{t^*}^{\,B}\big)
+
\KL\!\big(\PP^{Y^{*,B}}\Vert \PP^{\hat Y^{\,B}}\big)
\Big]
\le
L_{t^*}\Big(\frac{\delta^2}{2}+4\sigma_*^2\Big)
\exp\!\Big(H_{1/2}-\frac{\delta^2}{8\sigma_*^2}\Big)
+
2\exp\!\Big(H_{1/2}-\frac{\delta^2}{8\sigma_*^2}\Big),
\]
which is \eqref{eq:total_error_sep_renyi_single}.

Finally, the terminal tangent marginal is a measurable image of path space, so by data
processing,
\[
\E_B\!\Big[
\KL\!\big(\mu_{T-t_0}^{*,B}\,\Vert\,\hat\mu_{T-t_0}^{\,B}\big)
\Big]
\le
\E_B\!\Big[
\KL\!\big(p_{t^*}^{*,B}\,\Vert\,\hat p_{t^*}^{\,B}\big)
+
\KL\!\big(\PP^{Y^{*,B}}\Vert \PP^{\hat Y^{\,B}}\big)
\Big].
\]
This proves \eqref{eq:terminal_sep_renyi_single}.
\end{proof}

\section{DDPM implementation of the VP normal correction}
\label{app:vp-ddpm-normal-correction}

This appendix records the VP/DDPM form of the normal correction used in the
experiments. Consider the forward marginal
\[
X_t=\alpha_t Z+\sigma_t\xi,
\qquad
\xi\sim\mathcal N(0,I_d),
\]
and condition on \(B=P_\perp Z=b\). Let \(p_t^{*,b}\) be the density of
\(X_t\mid B=b\), with score \(s_t^{*,b}=\nabla\log p_t^{*,b}\). The VP Tweedie
identity gives
\[
s_t^{*,b}(x)
=
\frac{\alpha_t\E[Z\mid X_t=x,B=b]-x}{\sigma_t^2}.
\]
Projecting onto the normal space and using \(P_\perp Z=b\) under the
conditioning,
\[
P_\perp\E[Z\mid X_t=x,B=b]=b,
\]
we obtain
\[
P_\perp s_t^{*,b}(x)
=
\frac{\alpha_t b-P_\perp x}{\sigma_t^2}.
\]
Thus the normal correction used in the DDPM implementation is
\[
\frac{\alpha_t b-P_\perp x_t}{\sigma_t^2}.
\]

We next relate this expression to the DDNM-style projected denoising update.
For a pretrained VP/DDPM model, the usual Tweedie denoiser is
\[
\hat z_0(x_t)
=
\frac{x_t+\sigma_t^2s_t(x_t)}{\alpha_t}.
\]
DDNM replaces the normal component of this denoised estimate by the observed
level \(b\):
\[
\tilde z_0(x_t;y)
=
P_\parallel \hat z_0(x_t)+b.
\]
The score associated with this projected denoiser is obtained by inverting
Tweedie's formula:
\[
\hat s_t^{\rm DDNM}(x_t;y)
=
\frac{\alpha_t\tilde z_0(x_t;y)-x_t}{\sigma_t^2}.
\]
Substituting the expression for \(\tilde z_0\),
\[
\begin{aligned}
\hat s_t^{\rm DDNM}(x_t;y)
&=
\frac{
\alpha_tP_\parallel\hat z_0(x_t)+\alpha_t b-x_t
}{\sigma_t^2} \\
&=
\frac{
P_\parallel x_t+\sigma_t^2P_\parallel s_t(x_t)
+\alpha_t b
-P_\parallel x_t-P_\perp x_t
}{\sigma_t^2} \\
&=
P_\parallel s_t(x_t)
+
\frac{\alpha_t b-P_\perp x_t}{\sigma_t^2}.
\end{aligned}
\]
Therefore the DDPM/DDIM implementation used in the experiments applies the
closed-form VP normal correction together with the pretrained tangent score.

\bibliography{new_ref}
\end{document}